\begin{document}

\title{A Partial Order View of Message-Passing Communication
Models}
 
\author{Cinzia Di Giusto \and Davide Ferré \and Laetitia Laversa \and Etienne Lozes}
\date{}

\maketitle

\begin{abstract}
There is a wide variety of message-passing communication models, ranging from synchronous "rendez-vous"
communications to fully asyn\-chronous/out-of-order communications. For large-scale distributed systems, the
communication model is determined by the transport layer of the network, and a few classes of 
orders of message delivery (FIFO, causally ordered) have been identified in the early days of 
distributed computing. For local-scale message-passing applications, 
e.g., running on a single machine, the communication model may be determined by the actual implementation of 
message buffers and by how FIFO queues are used. While large-scale communication
models, such as causal ordering, are defined by logical axioms, local-scale models are often defined by an operational
semantics. In this work, we connect these two approaches, and we present a unified hierarchy of communication
models encompassing both large-scale and local-scale models, based on their concurrent behaviors.
We also show that all the communication models we consider can be axiomatized  in the monadic second order logic, 
and may therefore benefit from several bounded verification techniques based on bounded special treewidth.
\end{abstract}

\section{Introduction}

Reasoning about distributed message-passing applications is notoriously hard.
One reason
is that the communication architecture may vary and must be accurately specified. 
Indeed, an approximation of the communication model may hide deadlocks or safety errors,
such as unspecified receptions.
In synchronous (or rendez-vous) communication, send and receive events are viewed as
a single event, i.e., a receive and the corresponding send event happen 
simultaneously. The idea behind asynchronous communication, instead, is to decouple send 
and receive events, so that a receive  can happen indefinitely after the 
corresponding send. A prominent model of systems with asynchronous communication 
is the one of communicating finite state machines, where each agent is a finite state 
machine that can push and pop messages from FIFO queues. Despite its simplicity, most of decision problems concerning this model are undecidable~\cite{DBLP:journals/jacm/BrandZ83}.
For this reason, several model-checking tools, such as SPIN \cite{books/daglib/0020982}, assume that communication buffers are bounded in order to
keep a finite set of configurations. To overcome this limitation, several
bounded model-checking techniques for finite state machines have been proposed, 
including universal and existential buffer boundedness~\cite{genest2004kleene}, bounded context-switch~\cite{DBLP:conf/cav/TorreMP09}, or 
$k$-synchronizability~\cite{DBLP:conf/cav/BouajjaniEJQ18},  
as well as some approaches based on 
over-approximation~\cite{DBLP:conf/spin/HeussnerGS09,DBLP:conf/vmcai/BotbolCG17}. 
One problem of interest, in the case of
bounded model-checking techniques, is the completeness of the analysis, i.e.,  whether the system behavior is completely captured by the bounded semantics. 
Recently, Bollig~\emph{et al.}~\cite{BolligGFLLS21} proposed a general framework  
that helps to develop new bounded model-checking techniques for which the completeness problem is ensured to be decidable. While this framework is  parametric in the bounded model-checking techniques under consideration, it is quite rigid in the communication model that is assumed among all participants. 

In this paper, we show how to further generalize this framework to handle several models of communications. To do so, we first clarify and classify some of these communication models.
On the one hand, we consider communication models that were proposed in the early days of large-scale distributed computing to establish the correctness of some distributed algorithms, such as \emph{causal ordering}~\cite{Lamport78}, for the correctness of Lamport's distributed mutual exclusion algorithm (see also~\cite{Renesse93} for more examples).
On the other hand, we look at communication models that emerge
naturally when considering local-scale message-passing applications, which are based on predictable
message buffering supported by local FIFO queues.
Such communication models have
been considered in more recent works (for instance in~\cite{DBLP:journals/tcs/BasuB16}) and have caused
some confusion, specifically regarding the difference between causal ordering and mailbox~\cite{DBLP:conf/cav/BouajjaniEJQ18,DBLP:conf/fossacs/GiustoLL20}.

The classification and axiomatization of communication models for large-scale distributed systems received great attention in the late 90s~\cite{DBLP:journals/dc/Charron-BostMT96}, while the local-scale communication models have only started to be investigated quite recently by Chevrou~\emph{et al.}~\cite{DBLP:journals/fac/ChevrouHQ16}, focusing on a \emph{sequential} view of the behaviors of message-passing
applications (to be detailed below).
At the same time, several works~\cite{KraglQH18,GleissenthallKB19,DBLP:conf/cav/BouajjaniEJQ18,DBLP:conf/cav/LangeY19} recently addressed the verification of
asynchronous message-passing applications by reduction to their synchronous semantics (see also~\cite{Lipton75} for a seminal work on these questions). These results strongly rely on the ability to safely approximate an asynchronous communication model with a synchronous one. There is therefore a need to clarify how the synchronous-asynchronous spectrum of communication models is organized.

In this work, we start from the sequential, interleaving-based, hierarchy established by Chevrou~\emph{et al.}~\cite{DBLP:journals/fac/ChevrouHQ16}, where a
communication model is represented by a class of sequential executions. We revisit this hierarchy
taking a "non-sequential" point of view: we consider only the direct causality between messages, which leads to a partial order point of view.  We define a communication model as a class of
\emph{Message Sequence Charts} (MSCs in the following).  MSCs are
%
\begin{wrapfigure}{r}{0pt}
	\begin{tikzpicture}[scale=0.8, every node/.style={transform shape}]
		\newproc{0}{p}{-2.5};
		\newproc{1}{q}{-2.5};
		\newproc{2}{r}{-2.5};

		\newmsgm{0}{1}{-0.5}{-0.5}{1}{0.5}{black};
		\newmsgm{2}{1}{-1.2}{-1.2}{2}{0.5}{black};
		\newmsgm{1}{0}{-1.9}{-1.9}{3}{0.5}{black};

		\newflechehorinverse{Purple}{-1.2}{2}{1};
		\newflechevert{Purple}{1}{-1.3}{-2.0};
		\newflechehorinverse{Purple}{-1.9}{1}{0};

		\newevent{black}{0}{-0.5}{!1}{left};
		\newevent{black}{1}{-0.5}{?1}{right};
		\newevent{black}{2}{-1.2}{!2}{right};
		\newevent{black}{1}{-1.2}{?2}{left};
		\newevent{black}{1}{-1.9}{!3}{below right};
		\newevent{black}{0}{-1.9}{?3}{left};
	\end{tikzpicture}
	\captionof{figure}{An  MSC.}
	\label{fig:msc_ex}
\end{wrapfigure}
 a graphical representation of computations of distributed systems, and they are a simplified
version of the ITU recommendation~\cite{messagesequencecharts}. 
In an MSC, such as the one in Fig.~\ref{fig:msc_ex}, each
vertical line is called a \emph{process line} and it represents the order in which events are executed by a single process, with time running from
top to bottom; black arrows are used to represent messages and they connect a send event with the corresponding matching receive.
Given a message $m_i$, we will use $!i$ and $?i$ to denote the corresponding matching send and receive events, respectively. A single process line defines a
total order over the events executed by that process, i.e., an event $e$ happens before another event $e'$ if $e$ is higher in the process line;
in Fig.~\ref{fig:msc_ex}, if we look at process $q$ we see that $?1$ happens before $?2$. However, in general MSCs only specify a partial order
over events. Consider the events $!1$ and $!2$ in Fig.~\ref{fig:msc_ex}, which are executed by two different processes; these two events are \emph{concurrent}, meaning that the MSC does not tell us which one is executed first. Even though events on different processes can be concurrent, this is not always the case. For instance, a send event must always happen
before its matching receive event. Graphically, this \emph{happens before} relation between events on different processes is represented
by a path that follows the direction of the arrows and runs from top to bottom. This will be referred to as a \emph{causal path}, because it
establishes  a causal relation between events. Fig.~\ref{fig:msc_ex} shows an example of causal path (the red arrows) between the events $!2$ and $?3$. 


In this work we interpret communication models as classes of MSCs.
This partial order view of the communication models is arguably the "standard one",
rather than the sequential point of view adopted by Chevrou~\emph{et al}. It is more relevant for comparing communication models, as some
of them, such as causally ordered communications, intrinsically rely on the partial order view and the happens-before relation. It is also more accurate: for instance, as we show in Section \ref{sec:hierarchy}, some inclusions between communication models are missed by the sequential hierarchy. Such inclusions are interesting to know; for instance, it can be useful to know that if a system is safe when running on mailbox communication, it will also be safe when running on causally ordered communication, but that the converse does not hold.


Our contributions are the following:
\begin{itemize}
	\item We review  peer-to-peer FIFO (\pp), causally ordered (\co), mailbox (\mb), \onen (\onensymb), \nn (\nnsymb), asynchronous (\asy), and synchronous (\rsc) communication models
	and propose definitions of these models in terms of classes of MSCs. For the communication models whose intuition stems from
	an operational semantics, we provide an alternative operational definition. 
	Notice that the \asy (also known as bag) model, \co,  \pp, and \rsc are well-established standards. They have been heavily considered in theoretical aspects of distributed computing and, as already mentioned, they are required to establish the correctness of several distributed algorithms. They are also prominent in applications, because most of them are simple to implement, with the exception of causal ordering.
\none  is a standard choice of communication; it is native in Erlang, but more generally concurrent programs based on the "actor model" use it (e.g., it is a common design pattern used in Go programming). Moreover, it is a cheap "implementation" of causal ordering (while being more restrictive than causal ordering), so it is a natural option if some guarantees enforced by causal ordering are desired, but the full flexibility of causal ordering is not needed. \onen captures, among others, the "job stealing" design pattern for parallelization and finally, \nn  captures systems where all participants communicate among them through  a "global bus".

	\item From these definitions, we deduce a new hierarchy of communication models (see Fig.~\ref{fig:msc_hierarchy_full})
	and establish the strictness of this hierarchy by means of several examples.
	Surprisingly, the \onen class, that could be thought of as the "dual" of the mailbox class, is a subclass of mailbox class. This strongly
	contrasts with Chevrou~\emph{et al.} sequential hierarchy, where \onen and mailbox are incomparable. The comparison between
	the \onen and mailbox classes is non-trivial in our partial order setting, and it motivates the introduction
	of several  alternative characterizations of these communication models.

	\item We show that all the communication models can be axiomatized  in monadic second order logic (MSO) over MSCs. Interestingly, communication models for large-scale distributed systems are quite easy to axiomatize while those  for local-scale systems are much more involved. Indeed they are easy to define by means of an operational semantics involving FIFO queues, but the axiomatization  is rather subtle for \none and \onen, and highly non-trivial for \nn. For the latter, we develop a constructive proof based on an
	algorithm that computes a \nn linearization of an MSC. 
	\item Building on the MSO characterization of these communication models, we derive several new decidability results (cfr. Fig. \ref{fig:stw-boundshort}) for bounded
	model-checking of systems of communicating finite state machines under various bounded assumptions (existential boundedness, weak synchronizability, etc).
\end{itemize}

\begin{figure}[t]
	\captionsetup[subfigure]{justification=centering}
%
	\begin{subfigure}{0.3\textwidth}\centering	
	\begin{tikzpicture}[scale=0.78, every node/.style={transform shape}]
		\draw  (0,0) rectangle (1.5,.6);
		\draw (1.5,0.3) node[left]{\rsc};
		\draw  (0,0) rectangle (2.1,1.2);
		\draw (2.0,0.85) node[left]{\nnsymb};
		\draw  (0,0) rectangle (2.7,1.8);
		\draw (2.7,1.5) node[left]{\onensymb};
		\draw  (0,0) rectangle (3.3,2.4);
		\draw (3.3,2.1) node[left]{\mb};
		\draw  (0,0) rectangle (3.9,3);
		\draw (3.9,2.7) node[left]{\co};
		\draw  (0,0) rectangle (4.5,3.6);
		\draw (4.5,3.3) node[left]{\pp};
		\draw  (0,0) rectangle (5.1,4.2);
		\draw (5.1,3.9) node[left]{\asy};
	\end{tikzpicture}
	\caption{The hierarchy of MSC classes.\\ ~}
	\label{fig:msc_hierarchy_full}
	\end{subfigure} 
	\quad\quad\quad\quad   
	\begin{subfigure}{0.5\textwidth}\centering
	{\small
		\begin{tabular}{| c | c | c|  c| c| } 
			\hline
			& Weakly  & Weakly  & $\exists$k & $\forall$k  \\
			& sync & k-sync & bounded & bounded \\
			\hline \hline
			$\asy$ &  unbounded STW & \cmark & \cmark & \cmark \\
			\hline
			$\oneone$  & \xmark~[1] & \cmark~[1] & \cmark~[1] & \cmark~[1] \\
			\hline
			$\co$  & \xmark & \cmark & \cmark & \cmark \\
			\hline
			$\none$ & \cmark~[1] & \cmark~[1] & \cmark~[1] & \cmark~[1] \\
			\hline
			$\onensymb$ & \cmark & \cmark & \cmark & \cmark \\
			\hline
			$\nnsymb$ & \cmark & \cmark & \cmark & \cmark \\
			\hline
		\end{tabular}
		}
		\caption{ (Un)decidability results for the synchronizability problems, 
		[1] indicates that the result was shown in \cite{BolligGFLLS21}.}
		\label{fig:stw-boundshort}
\end{subfigure}
\caption{Main contributions}
\end{figure}

\paragraph{\bf Outline} The paper is organized as follows. Section~\ref{sec:MSC} describes the communication models we consider. We also recall the notion of MSC and introduce formal definitions for these models, seen as classes of MSCs.
In Section~\ref{sec:impl}, we rely on
an operational semantics to provide an alternative, more classical definition for some of these communication models. The goal is to show the relation between the sequential view of Chevrou \emph{et al.} and the partial order one we adopt.
Section~\ref{sec:MSO} characterizes the classes of MSCs via MSO logic.
In Section~\ref{sec:hierarchy}, we compare all the communication models and show our main result: a strict hierarchy of communication models.
Finally, Section~\ref{sec:checking} shows some (un)decidability results for various bounded model-checking
problems based on MSO and on the notion of special treewidth. Related works are discussed all along the paper in correspondence  to specific notions. A version of this paper with some additional material and all the proofs is available at \cite{longversion}.

\section{Asynchronous communication models as classes of MSCs}\label{sec:MSC}

In this section, we give both informal descriptions and formal definitions of the communication models that will be considered in the paper. All of them impose different constraints on the order in which messages can be received.

%

We will use the following customary conventions:  $\abinrel^+$ denotes the transitive closure of a binary relation $\abinrel$, while $\abinrel^*$ denotes the transitive and reflexive closure. When $R^*$ is denoted by a symbol suggesting 
a partial order, like $\leq$, we write e.g. 
$<$ for $R^+$.  The cardinality of a set $A$ is  $\cardinalof{A}$.
%
We assume a finite set of \emph{processes} $\Procs=\{p,q,\ldots\}$ and a finite set of message contents (or just "message") $\Msg=\{\msg,\ldots\}$.
Each process may either (asynchronously) send a message to another one, or wait until it receives a message.
We therefore consider two kinds of actions. A \emph{send action} is of the form $\sact{p}{q}{\msg}$;
it is executed by process $p$ and sends message $\msg$ to process $q$.
The corresponding \emph{receive action} executed by $q$ is $\ract{p}{q}{\msg}$.
We write $\pqsAct{p}{q}$ to denote the set $\{\sact{p}{q}{\msg} \mid \msg \in \Msg\}$, and
$\pqrAct{p}{q}$ for the set $\{\ract{p}{q}{\msg} \mid \msg \in \Msg\}$.
Similarly, for $p \in \Procs$, we set
$\psAct{p} = \{\sact{p}{q}{\msg} \mid q \in \Procs
$ and $\msg \in \Msg\}$, etc.
Moreover, $\pAct{p} = \psAct{p} \cup \qrAct{p}$ denotes the set of all actions that are
executed by $p$, and $\Act = \bigcup_{p \in \Procs} \pAct{p}$
is the set of all the actions. When $p$ and $q$
are clear from the context, we may write $!i$ (resp. $?i$) instead of $\sact{p}{q}{m_i}$ (resp. $\ract{p}{q}{m_i}$).

\paragraph{\bf Fully asynchronous communication}
In the fully asynchronous communication model (\asy), messages can be received at any time once they have been sent, and send events are non-blocking.
It can be modeled as a bag where all messages are stored and retrieved by processes when necessary (as described in \cite{DBLP:journals/fac/ChevrouHQ16} and \cite{DBLP:journals/tcs/BasuB16}).
It is also referred to as NON-FIFO (cfr.  \cite{DBLP:journals/dc/Charron-BostMT96}).
An MSC that shows a valid computation for the fully asynchronous communication model will be called a fully asynchronous MSC (or simply MSC). An example of such an MSC is in Fig.~\ref{fig:fully_asy_ex};  even if message $m_1$ is sent before $m_2$, process $q$ does not have to receive $m_1$ first. Below, we give the formal definition of MSC.

\begin{definition}[MSC]\label{def:msc}
	An {MSC}  over $\Procs$ and $\Msg$ is a tuple $\msc = (\Events,\procrel,\lhd,\lambda)$, where 
	$\Events$ is a finite (possibly empty) set of \emph{events}, $\lambda: \Events \to \Act$ is a labelling 
	function that associates an action to each event,
	and $\procrel,\lhd$ are binary relations on $\Events$ that satisfy the following three conditions.
	For $p \in \Procs$, let $\Events_p = \{e \in \Events \mid \lambda(e) \in \pAct{p}\}$ be 
	the set of events that are executed by $p$. 
	\begin{enumerate}
		\item The \emph{process relation} $\procrel\subseteq \Events \times \Events$ 
		relates an event to its immediate successor on
		the same process:
		$\procrel=\bigcup_{p \in \Procs} \procrel_p$ for some 
		relations ${\procrel_p} \subseteq \Events_p \times \Events_p$ such that $\procrel_p$ is 
		the direct successor relation of a total order on $\Events_p$.  
		\item The \emph{message relation} ${\lhd} \subseteq \Events \times \Events$ 
		relates pairs of matching send/receive events: 	
		\begin{enumerate}
			\item[(2a)] for every pair $(e,f) \in {\lhd}$, there are two processes $p,q$ and a message $m$ such that $\lambda(e) = \sact{p}{q}{\msg}$ and $\lambda(f) = \ract{p}{q}{\msg}$.
			\item[(2b)] for all $f \in \Events$, with $\lambda(f) = \ract{p}{q}{\msg}$, 
			there is exactly one $e \in \Events$ such that $e \lhd f$.
			\item[(2c)] for all $e \in \Events$ such that $\lambda(e)=\sact{p}{q}{\msg}$, there is at most one $f \in \Events$ such that $e \lhd f$. 
		\end{enumerate}
		\item The \emph{happens-before} relation\footnote{This relation was introduced in~\cite{Lamport78}, and is also referred to as the \emph{happened before} relation,
		or sometimes \emph{causal relation} or \emph{causality relation}, e.g. in~\cite{DBLP:journals/dc/Charron-BostMT96,DBLP:conf/cav/BouajjaniEJQ18} .} ${\happensbefore}$, defined by $({\procrel} \cup {\lhd})^\ast$,
		is a partial order on $\Events$.
	\end{enumerate}
\end{definition}

 If, for two events $e$ and $f$, we have that $e \happensbefore f$, we   say that there is a \emph{causal path} between $e$ and $f$.
Note that the same message $m$ may occur repeatedly on a given MSC,
hence the $\lambda$ labelling function. In most of our 
examples, we avoid repeating twice a same message, hence events and actions are univocally identified.
Definition~\ref{def:msc} of (fully asynchronous) MSC will serve as a basis on which the other communication models will build on, adding some additional constraints.

According to Condition (2), every receive event must have a matching send event. However, note that, there may be unmatched send events. An unmatched send event represents the scenario in which the recipient is not ready to receive a specific message. This is the case of message $m_1 $ in  Fig. \ref{fig:asy_um_ex}.
We will always depict unmatched messages with dashed arrows pointing to the time line of the
destination process.
We let
$\SendEv{\msc} = \{e \in \Events \mid \lambda(e)$ is a send
action$\}$,
$\RecEv{\msc} = \{e \in \Events \mid \lambda(e)$ is a receive
action$\}$,
$\Matched{\msc} = \{e \in \Events \mid$ there is $f \in \Events$
such that $e \lhd f\}$, and
$\Unm{\msc} = \{e \in \Events \mid \lambda(e)$ is a send
action and there is no $f \in \Events$ such that $e \lhd f\}$.
%

\begin{figure}[t]
		\captionsetup[subfigure]{justification=centering}
	\begin{subfigure}[t]{0.35\textwidth}\centering

		\begin{tikzpicture}[scale=0.7, every node/.style={transform shape}]
			\newproc{0}{p}{-2.2};
			\newproc{2}{q}{-2.2};

			\newmsgm{0}{2}{-0.5}{-1.7}{1}{0.1}{black};
			\newmsgm{0}{2}{-1.7}{-0.5}{2}{0.25}{black};

			\end{tikzpicture}
		\caption{\asy.}	\label{fig:fully_asy_ex}

		\end{subfigure}
	\begin{subfigure}[t]{0.25\textwidth}\centering
		\begin{tikzpicture}[scale=0.7, every node/.style={transform shape}]
			\newproc{0}{p}{-2.2};
			\newproc{1}{q}{-2.2};
			\newproc{2}{r}{-2.2};

			\newmsgm{0}{1}{-0.3}{-1.7}{1}{0.1}{black};
			\newmsgm{0}{2}{-0.9}{-0.9}{2}{0.7}{black};
			\newmsgm{2}{1}{-1.5}{-1.5}{3}{0.3}{black};
			\newmsgm{2}{1}{-2}{-2}{4}{0.3}{black};

		\end{tikzpicture}
		\caption{\asy, $\pp$.} \label{fig:pp_ex}
	\end{subfigure}
	\begin{subfigure}[t]{0.35\textwidth}\centering
		\begin{tikzpicture}[scale=0.7, every node/.style={transform shape}]
			\newproc{0}{p}{-2.2};
			\newproc{1}{q}{-2.2};
			\newproc{2}{r}{-2.2};
			
			\newmsgm{0}{2}{-0.3}{-2}{1}{0.1}{black};
			\newmsgm{0}{1}{-1.3}{-1.3}{2}{0.3}{black};
			\newmsgm{2}{1}{-1.5}{-1.5}{3}{0.3}{black};
			
		\end{tikzpicture}
		\caption{\asy, \pp, \co, \mb, $\onensymb$, $\nnsymb$.}	    
		\label{fig:co_ex}
	\end{subfigure}
	\begin{subfigure}[t]{0.35\textwidth}\centering
			\begin{tikzpicture}[scale=0.7, every node/.style={transform shape}]
				\newproc{0}{p}{-2.2};
				\newproc{1}{q}{-2.2};
				\newproc{2}{r}{-2.2};

				\newmsgm{0}{1}{-0.5}{-0.5}{1}{0.3}{black};
				\newmsgm{1}{2}{-1}{-1}{2}{0.3}{black};
				\newmsgm{1}{0}{-1.6}{-1.6}{3}{0.3}{black};

			\end{tikzpicture}
			\caption{\asy, \pp, \co, \mb, $\onensymb$, $\nnsymb$, $\rsc$.}
			\label{fig:rsc_ex}
	\end{subfigure}
	\;
	\begin{subfigure}[t]{0.25\textwidth}\centering

		\begin{tikzpicture}[scale=0.7, every node/.style={transform shape}]
			\newproc{0}{p}{-2.2};
			\newproc{2}{q}{-2.2};

			\newmsgum{0}{2}{-0.8}{1}{0.2}{black};
			\newmsgm{0}{2}{-1.6}{-1.6}{2}{0.2}{black};

			\end{tikzpicture}
		\caption{\asy.}	
		\label{fig:asy_um_ex}
	\end{subfigure}
	\quad\quad
	\begin{subfigure}[t]{0.3\textwidth}\centering
		\begin{tikzpicture}[scale=0.7, every node/.style={transform shape}]
			\newproc{0}{p}{-2.2};
			\newproc{1}{q}{-2.2};
			\newproc{2}{r}{-2.2};

			\newmsgum{0}{1}{-0.8}{1}{0.3}{black};
			\newmsgm{0}{2}{-1.6}{-1.6}{2}{0.15}{black};
		\end{tikzpicture}
		\caption{\asy, $\pp$, $\co$, $\none$.} 
		\label{fig:pp_um_ex}
	\end{subfigure}
		\caption{Examples of MSCs for various communication models.}\label{fig:exmscs}
\end{figure}
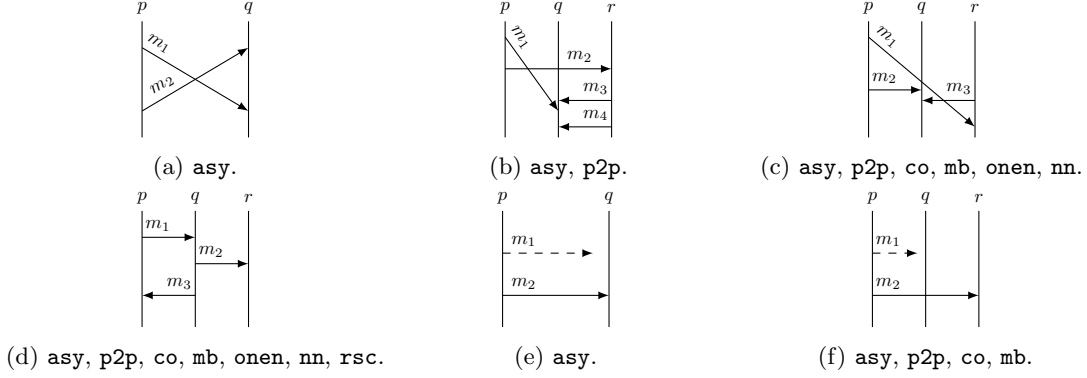


\begin{example}
    For a set of processes $\procSet = \{p,q,r\}$ and a set of messages $\paylodSet = \{\msg_1, \msg_2, \msg_3 \}$, Fig.~\ref{fig:msc_ex} shows an MSC 
    $\msc = (\Events, \procrel, \lhd, \lambda)$
    where, for instance, we have
  $!1\;\lhd\;?1$, $?1\;\procrel\;?2$, and $!2\;\happensbefore\;?3$. The set of actions is $\Act = \{$$\sact{p}{q}{m_1},$ $\sact{r}{q}{m_2},$  $\sact{q}{p}{m_3},$ $\ract{p}{q}{m_1},$ $\ract{r}{q}{m_2},$ $\ract{q}{p}{m_3}\}$, or, using the lightweight notation, $\Act = \{!1, !2, !3, ?1, ?2, ?3 \}$.
\end{example}

Intuitively, a linearization represents a possible scheduling of the events of  the distributed system. More formally, let $\msc = (\Events,\procrel,\lhd,\lambda)$ be an MSC.
A \emph{linearization} of $\msc$ is a (reflexive) total order ${\linrel} \subseteq \Events \times \Events$ such that ${\happensbefore} \subseteq {\linrel}$. In other words, a linearization of $\msc$ represents a possible way to schedule its events. For convenience, we will omit the relation $\linrel$ when writing a linearization, e.g., $!1\;!3\;!2\;?2\;?3\;?1$ is a possible linearization of the MSC in Fig. \ref{fig:co_ex}.

Let $\msc_1 = (\Events_1,\procrel_1,\lhd_1,\lambda_1)$ and
$\msc_2 = (\Events_2,\procrel_2,\lhd_2,\lambda_2)$ be two MSCs.
The \emph{concatenation} $\msc_1 \cdot \msc_2$ 
is the MSC 
$(\Events,\procrel,\lhd,\lambda)$ where $\Events$ is the disjoint 
union of $\Events_1$ and $\Events_2$,
${\lhd}  = {\lhd_1} \cup {\lhd_2}$, $\lambda(e)=\lambda_i(e)$ for all $e\in \Events_i$ ($i=1,2$). 
Moreover, ${\procrel} = {\procrel_1} \cup {\procrel_2} \cup R$, where
$R$ contains, for all $p \in \Procs$ such that $(\Events_1)_p$ and
$(\Events_2)_p$ are non-empty, the pair $(e_1,e_2)$, where $e_1$ is the
$p$-maximal event of $M_1$ and $e_2$ is the $p$-minimal event of $M_2$.
Note that $\msc_1 \cdot \msc_2$ is indeed an MSC and that
concatenation is associative.

\paragraph{\bf  Peer-to-peer communication}
In the peer-to-peer ($\pp$) communication model, any two messages sent from one process to another  are always received in the same order as they are sent. This is usually implemented by  processes pairwise connected with FIFO channels. 
Alternative names  are FIFO $1\mathsf{-}1$ \cite{DBLP:journals/fac/ChevrouHQ16} or simply FIFO \cite{babaoglu1993consistent, DBLP:journals/dc/Charron-BostMT96, tel2000introduction}.
MSCs that show valid computations for the \pp communication model will be called \pp-MSCs.
The MSC shown in Fig.~\ref{fig:fully_asy_ex} is not a $\pp$-MSC, as $m_1$ cannot be received after $m_2$.
Fig.~\ref{fig:pp_ex} shows an example of \pp-MSC; the only two messages sent by and to the same process are $m_3$ and $m_4$, which are received in the same order as they are sent. 

\begin{definition}[$\oneone$-MSCs]\label{def:pp_msc}
	A $\oneone$-MSC is an MSC $\msc = (\Events,\procrel,\lhd,\lambda)$ where, for any two send events $s$ and $s'$ such that $\lambda(s)\in \pqsAct{p}{q}$, $\lambda(s')\in \pqsAct{p}{q}$, and $s \procrel^+ s'$, one of the following holds
	\begin{itemize}
		\item either $s,s' \in \Matched{\msc}$ with $s \lhd r$ and $s' \lhd r'$ and $r \procrel^+ r'$,  
		\item or $s' \in \Unm{\msc}$.
	\end{itemize}
	
\end{definition}

Note that we cannot have two messages $m_1$ and $m_2$, both sent by  $p$ to $q$, in that order, such that $m_1$ is unmatched and $m_2$ is matched;  unmatched message $m_1$ excludes the reception of any later message. For this reason, the MSC shown in Fig.~\ref{fig:asy_um_ex} is not $\pp$. On the other hand, the one in Fig.~\ref{fig:pp_um_ex} is $\pp$ as the two messages are not  addressed to the same process.

\paragraph{\bf  Causally ordered communication}
In the causally ordered (\co) communication model, messages are delivered to a process according to the causality of their emissions. In other words, if there are two messages $m_1$ and $m_2$ with the same recipient, such that there exists a causal path from  $m_1$ to  $m_2$, then $m_1$ must be received before $m_2$.
Causal ordering was introduced by Lamport in \cite{Lamport78} with the name "happened before" order. Implementations were proposed in \cite{peterson1989preserving, DBLP:conf/wdag/SchiperES89, kshemkalyani1998necessary}. 
Fig.~\ref{fig:pp_ex} shows an example of non-causally ordered MSC; there is a causal path between the sending of $m_1$ and $m_3$, hence $m_1$ should be received before $m_3$, which is not the case. On the other hand, the MSC in Fig.~\ref{fig:co_ex} is causally ordered; note that the only two messages with the same recipient are $m_2$ and $m_3$, but there is no causal path between their respective send events. 

\begin{definition}[\co-MSC]\label{def:co_msc}
	An MSC $\msc = (\Events,\procrel,\lhd,\lambda)$ is \emph{causally ordered} if, for any two send events $s$ and $s'$, such that $\lambda(s)\in \pqsAct{\plh}{q}$, $\lambda(s')\in \pqsAct{\plh}{q}$, and $s \happensbefore s'$
	\begin{itemize}
		\item either $s,s' \in \Matched{\msc}$ and  $r \procrel^* r'$, with $r$ and $r'$   receive events such that $s \lhd r$ and $s' \lhd r'$.
		\item or $s' \in \Unm{\msc}$.
	\end{itemize}
\end{definition}

Note that in a \co-MSC we cannot have two send events $s$ and $s'$ addressed to the same process, such that $s$ is unmatched, $s'$ is matched, and $s \happensbefore s'$. 

\paragraph{\bf Mailbox communication}
In the mailbox ($\mb$) communicating model, any two messages sent to the same process, regardless of the sender,  must be received in the same order as they are sent. 
In other words, if a process  receives $m_1$ before $m_2$, then $m_1$ must have been sent before $m_2$. Essentially, $\mb$ coordinates all the senders of a single receiver. For this reason the model is also called FIFO $n\mathsf{-}1$ \cite{DBLP:journals/fac/ChevrouHQ16}.   A high-level implementation of the mailbox communication model could consist in a single incoming FIFO channel for each process $p$, in which all processes enqueue their messages to $p$. 
A low-level implementation can be obtained thanks to a shared real-time clock~\cite{cristian1999timed} or a global agreement on the order of events~\cite{defago2004total, raynal2010communication}.
The MSC shown in Fig.~\ref{fig:pp_ex} is not a  $\mb$-MSC; $m_1$ and $m_3$ have the same recipient, but they are not received in the same order as they are sent. The MSC in Fig.~\ref{fig:co_ex} is a $\mb$-MSC; indeed, we are able to find a linearization that respects the mailbox constraints, such as $!1\;!2\;!3\;?2\;?3\;?1$ (note that $m_2$ is both sent and received before $m_3$). 

\begin{definition}[$\none$-MSC]\label{def:mb_msc}
	An MSC $\msc = (\Events,\procrel,\lhd,\lambda)$ is a \emph{$\none$-MSC} if it has a linearization $\linrel$ where, for any two send events $s$ and $s'$, such that $\lambda(s)\in \pqsAct{\plh}{q}$, $\lambda(s')\in \pqsAct{\plh}{q}$, and $s \linrel s'$
	\begin{itemize}
		\item either $s,s' \in \Matched{\msc}$ and $r \linrel r'$, where $s \lhd r$ and $s' \lhd r'$,
		\item or $s' \in \Unm{\msc}$.
	\end{itemize}
\end{definition}

Such a linearization will be referred to as a \emph{$\none$-linearization}. Note that the definition of $\none$-MSC is based on the \emph{existence} of a linearization with some properties. The same kind of "existential" definition will be used for all the remaining communication models. In practice, to claim that an MSC is $\none$, we just need to find a single valid $\none$-linearization, regardless of all the others. As with $\co$-MSCs, a $\none$-MSC cannot have two ordered send events $s$ and $s'$ addressed to the same process, such that $s$ is unmatched, $s'$ is matched. The message related to $s$ would indeed block the buffer and prevent all subsequent receptions included the receive event matching $s'$. 
%
%
At this stage, the difference between $\co$-MSCs and $\mb$-MSCs might be unclear. Section~\ref{sec:hierarchy} will clarify how all the classes of MSCs that we introduce are related to each other.

\paragraph{\bf  $\onen$ communication}
The $\onen$ (\onensymb) communicating model is the dual of $\none$, it coordinates a sender with all the receivers. Any two messages sent by a process  must be received in the same order  as they are sent. These two messages might be received by different processes and the two receive events might be concurrent.
A high-level implementation of the $\onen$ communication model could consist in a single outgoing FIFO channel for each process, which is shared by all the other processes. A send event would then push a message on the outgoing FIFO channel.
The MSC shown in Fig.~\ref{fig:pp_ex} is not a $\onensymb$-MSC; $m_1$ is sent before $m_2$ by the same process, but we cannot find a linearization in which they are received in the same order (here, the reason is that $?2 \happensbefore ?1$). Fig.~\ref{fig:co_ex} shows an example of $\onensymb$-MSC; $m_1$ is sent before $m_2$ by the same process, and we are able to find a linearization where $m_1$ is received before $m_2$, such as $!1\;!2\;!3\;?1\;?2\;?3$. 

\begin{definition}[$\onensymb$-MSC]\label{def:one_n}
An MSC $\msc = (\Events,\procrel,\lhd,\lambda)$ is a \emph{$\onensymb$-MSC} if it has a linearization $\linrel$ where, for any two send events $s$ and $s'$, such that $\lambda(s)\in \pqsAct{p}{\plh}$, $\lambda(s')\in \pqsAct{p}{\plh}$, and $s \procrel^+ s'$ (which implies $s \linrel s'$)
\begin{itemize}
	\item either $s,s' \in \Matched{\msc}$ and $r \linrel r'$, with  $r$ and $r'$   receive events such that $s \lhd r$ and $s' \lhd r'$,
	\item or $s' \in \Unm{\msc}$.
\end{itemize}
\end{definition}

Such a linearization will be referred to as a \emph{$\onensymb$-linearization}. Note that a $\onensymb$-MSC cannot have two send events $s$ and $s'$, executed by the same process, such that $s$ is unmatched, $s'$ is matched, and $s \procrel^+ s'$; indeed, it would not be possible to find a $\onensymb$-linearization, according to Definition~\ref{def:one_n}. The MSCs shown in Fig.~\ref{fig:asy_um_ex} and Fig.~\ref{fig:pp_um_ex} are clearly not $\onensymb$-MSCs.

\paragraph{\bf  $\nn$ communication}
In the $\nn$ (\nnsymb) communicating model, messages are globally ordered and delivered according to  their emission order. Any two messages must be received in the same order as they are sent. These two messages might be sent or received by any process and the two send or receive events might be concurrent.
The $\nn$ coordinates all the senders with all the receivers. A high-level implementation of the $\nn$ communication model could consist in a single FIFO channel shared by all processes. It is considered also in \cite{DBLP:journals/tcs/BasuB16} where it is called  many-to-many (denoted $^\ast$-$^\ast$). However, as underlined in \cite{DBLP:journals/fac/ChevrouHQ16}, such an implementation would be inefficient and unrealistic.
The MSC shown in Fig.~\ref{fig:pp_ex} is clearly not a $\nnsymb$-MSC; if we consider messages $m_1$ and $m_2$ we have that, in every linearization, $!1 \happensbefore !2$ and $?2 \happensbefore ?1$. This violates the constraints imposed by the $\nn$ communication model. The MSC in Fig.~\ref{fig:co_ex} is a $\nnsymb$-MSC because we are able to find a linearization that satisfies the $\nn$ constraint, e.g. $!1\;!2\;!3\;?1\;?2\;?3$.

\begin{definition}[$\nnsymb$-MSC]\label{def:n_n}
	An MSC $\msc = (\Events,\procrel,\lhd,\lambda)$ is a \emph{$\nnsymb$-MSC} if it has a linearization $\linrel$ where, for any two send events $s$ and $s'$, such that $s \linrel s'$
	\begin{itemize}
		\item either $s,s' \in \Matched{\msc}$ and $r \linrel r'$, with $r$ and $r'$   receive events such that $s \lhd r$ and $s' \lhd r'$,
		\item or $s' \in \Unm{\msc}$.
	\end{itemize}
\end{definition}

Such a linearization will be referred to as a \emph{$\nnsymb$-linearization}. Note that, in a $\nnsymb$-linearization, unmatched messages can be sent only after all matched messages have been sent.
As a consequence, a $\nnsymb$-MSC cannot have an unmatched send event $s$ and a matched send event $s'$, such that $s \happensbefore s'$; indeed, $s$ would appear before $s'$ in every linearization, and we would not be able to find a $\nnsymb$-linearization. The MSCs shown in Fig.~\ref{fig:asy_um_ex} and Fig.~\ref{fig:pp_um_ex} are both not $\nn$, since we have unmatched messages that are sent before matched messages.

\paragraph{\bf RSC communication}
The Realizable with Synchronous Communication ($\rsc$) communication model imposes the existence of a scheduling such that any send event is  immediately followed by its corresponding receive event. It was introduced in \cite{DBLP:journals/dc/Charron-BostMT96}, and it is the asynchronous model that comes closest to synchronous communication. 
The MSC  in Fig.~\ref{fig:rsc_ex} is the only example of $\rsc$-MSC: for instance linearization $!1\;?1\;!2\;?2\;!3\;?3$ respects the constraints of the $\rsc$ communication model. 

\begin{definition}[$\rsc$-MSC]\label{def:rsc}
	An MSC $\msc = (\Events,\procrel,\lhd,\lambda)$ is an \emph{\rsc-MSC} if it has no unmatched send events and there is a linearization $\linrel$ where any matched send event is immediately followed by its respective receive event.
\end{definition}

Such a linearization will be referred to as an \emph{$\rsc$-linearization}.

\paragraph*{Classes of MSCs} 
We denote by $\asMSCs$ (resp. $\ppMSCs$, $\coMSCs$, $\mbMSCs$, $\onenMSCs$, $\nnMSCs$, $\rscMSCs$) the sets of all MSCs (resp. $\pp$-MSCs, $\co$-MSCs, $\mb$-MSCs, $\onensymb$-MSCs, $\nnsymb$-MSCs, $\rsc$-MSCs) over the given sets $\Procs$ and $\Msg$. Note that we do not differentiate between isomorphic MSCs.

\section{Asynchronous communication models as classes of executions}\label{sec:impl}

We have defined several communication models
as classes of MSCs. To compare to Chevrou~\emph{et al.} sequential hierarchy of communication models \cite{DBLP:journals/fac/ChevrouHQ16}, we provide alternative definitions of these 
communication models based on executions. We only consider \pp, \none, \onen and \nn, we refer
to~\cite{DBLP:journals/fac/ChevrouHQ16}
for clarifying how the asynchronous, $\rsc$, and $\co$ communication models may
also be defined as sets of executions and fit in this hierarchy.

We consider networks of processes formed by a bunch of FIFO queues that store the messages in transit.
Formally, a \emph{queuing network} is a tuple $\anetwork=(\setofqueuidentifiers,\queuidofprocs)$ such that
$\setofqueuidentifiers$ is a finite set of queue identifiers, and
$\queuidofprocs:\procSet\times\procSet\to \setofqueuidentifiers$ assigns a queue to each
pair of processes.
A queuing network $(\setofqueuidentifiers,\queuidofprocs)$ is $\pp$ if
$\setofqueuidentifiers=\procSet\times\procSet$ and $\queuidofprocs$ is the identity.
The queuing network $(\setofqueuidentifiers,\queuidofprocs)$ is $\mb$ if
$\setofqueuidentifiers=\procSet$ and $\queuidofprocs(p,q)=q$; it is called $\onensymb$ if
$\setofqueuidentifiers=\procSet$ and $\queuidofprocs(p,q)=p$. Finally, it is called
$\nnsymb$ if $\setofqueuidentifiers=\{0\}$ and $\queuidofprocs(p,q)=0$ for all $p,q\in\procSet$.

\paragraph{\bf Configurations, executions, and operational semantics}
A \emph{configuration} of the queuing network $(\setofqueuidentifiers,\queuidofprocs)$ is
a tuple $\aconf=(w_{\aqueueid})_{\aqueueid\in\setofqueuidentifiers}\in (\paylodSet^*)^{\setofqueuidentifiers}$,
where for each queue identifier $\aqueueid$, the queue content $w_{\aqueueid}$ is a finite sequence of messages.
The \emph{initial configuration} $\initconf$ is the one in which all queues are empty, i.e.,
$w_{\aqueueid}=\epsilon$ for all $\aqueueid\in\setofqueuidentifiers$.
A \emph{step} is a tuple $(\aconf,a,\aconf')$, (later written $\aconf\actionstep{a}\aconf'$)
where $\aconf=(w_{\aqueueid})_{\aqueueid\in\setofqueuidentifiers}$,
$\aconf'=(w_{\aqueueid}')_{\aqueueid\in\setofqueuidentifiers}$,
$a$ is an action, and the following holds:
\begin{itemize}
  \item if $a=\sact{p}{q}{\msg}$, then $w'_{\aqueueid}=w_{\aqueueid}\cdot \msg$
  and $w_j'=w_j$ for all $j\in \setofqueuidentifiers\setminus\{\aqueueid\}$,
  where $\aqueueid=\queuidofprocs(p,q)$.
  \item if $a=\ract{p}{q}{\msg}$, then $w_{\aqueueid}=\msg\cdot w'_{\aqueueid}$
  and $w_j'=w_j$ for all $j\in \setofqueuidentifiers\setminus\{\aqueueid\}$,
  where $\aqueueid=\queuidofprocs(p,q)$.
\end{itemize}
An \emph{execution} of the queuing network $(\setofqueuidentifiers,\queuidofprocs)$
is a finite sequence of actions $e=a_1a_2\ldots a_n$ such that
$\initconf\actionstep{a_1}\actionstep{a_2}\ldots\actionstep{a_n}\aconf$ for some configuration $\aconf$.
$e$ is $\pp$ (resp. $\mb$, $\onensymb$, $\nnsymb$) if there exists a \pp queuing network (resp. $\mb$, $\onensymb$, $\nnsymb$)  whose set of executions contains $e$.

\begin{example}
The execution
$$
\sact{p}{q}{m_1}\cdot\sact{q}{r}{m_2}\cdot\ract{q}{r}{m_2}\cdot\ract{p}{q}{m_1}
$$
is $\pp$, $\mb$, and $\onensymb$, but it is not $\nnsymb$ (because $m_2$ is received before $m_1$).
\end{example}

\begin{example}
    The execution
    $$
    \sact{p}{q}{m_1}\cdot\sact{r}{q}{m_2}\cdot\ract{r}{q}{m_2}
    $$ 
    is $\pp$ and $\onensymb$, but it is neither $\mb$ nor $\nnsymb$ (because $m_2$ "overtakes" $m_1$). Note that in the
    final configuration $m_1$ is still in the queue ($m_1$ is "unmatched").
\end{example}

Consider a network $\anetwork$ with two queue identifiers $\aqueueid_1$ and $\aqueueid_2$,
and let $\anetwork'$ be the network obtained by merging the two queues $\aqueueid_1$ and $\aqueueid_2$ in a
same queue. Then $\anetwork'$ imposes more constraints than $\anetwork$ on the sequence of actions it admits,
and any $\anetwork'$-execution also is an $\anetwork$-execution. From this observation, it follows that
the communication models we considered define the hierarchy of executions depicted in
Fig.~\ref{fig:hierarchy-of-executions}. We  refer
to~\cite{DBLP:journals/fac/ChevrouHQ16} for examples illustrating
each class of the hierarchy.

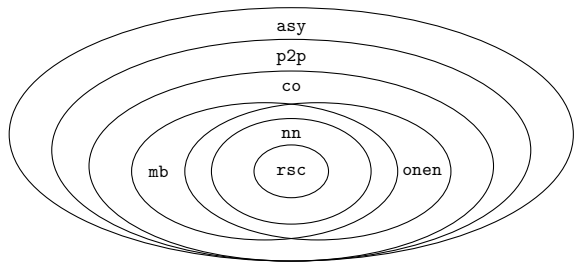
\begin{figure}
\centering
\begin{tikzpicture}[scale=0.7, every node/.style={transform shape}]

\draw(0,-0.1) ellipse (.7cm and .5cm); 
\node (r) at (0,-0.1) {$\rsc$}; 
\draw(0,-0.1) ellipse (1.5cm and 1cm); 
\node (nn) at (0,.6) {$\nnsymb$}; 

\draw(.5,-0.1) ellipse (2.5cm and 1.3cm); 
\node[left] (n1) at (2.95,-0.1) {$\onensymb$}; 
\draw(-.5,-0.1) ellipse (2.5cm and 1.3cm); 
\node[right] (1n) at (-2.8,-0.1) {$\none$}; 

\draw(0,0) ellipse (3.8cm and 1.8cm); 
\node (nn) at (0,1.5) {$\co$}; 

\draw(0,.3) ellipse (4.5cm and 2.1cm); 
\node (nn) at (0,2.05) {$\oneone$}; 

\draw(0,.6) ellipse (5.3cm and 2.4cm); 
\node (nn) at (0,2.6) {$\asy$}; 

\end{tikzpicture} 
    \caption{\label{fig:hierarchy-of-executions} Hierarchy of communication models based on  sets of
    executions (taken from~\cite{DBLP:journals/fac/ChevrouHQ16})}
\end{figure}

To conclude this brief discussion on queuing networks and executions, we clarify how  executions are linked to MSCs classes.  Indeed a linearization $\linrel$ of an MSC
defines a total order on its events, and therefore can be interpreted as an execution.


\begin{fact}
A MSC $\msc$ is $\pp$ (resp. $\mb$, $\onensymb$, $\nnsymb$) if and only if there exists
a linearization $\linrel$ of $\msc$ that induces a $\pp$ execution (resp. a $\mb$, $\onensymb$, $\nnsymb$ execution).
\end{fact}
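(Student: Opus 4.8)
Fix $\msc=(\Events,\procrel,\lhd,\lambda)$ and a linearization $\linrel$, and write $w_\linrel$ for the word over $\Act$ obtained by listing $\lambda(e)$ for the events $e$ in $\linrel$-order. The plan is to prove a per-linearization statement from which the Fact follows by quantifying existentially over $\linrel$ on both sides. Say that $\linrel$ \emph{induces} an execution of a queuing network $\anetwork$ when $w_\linrel$ is a valid execution of $\anetwork$ in which, moreover, each receive event reads from its queue exactly the message deposited by its $\lhd$-matching send (this extra clause is needed: with repeated message contents a FIFO-valid run can re-pair sends and receives in a way incompatible with $\lhd$, so ``induces an execution of $\anetwork$'' is the intended reading). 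The claim to establish is then: for $X$ ranging over $\pp,\mb,\onensymb,\nnsymb$ and $\anetwork$ the corresponding ($\pp$-, $\mb$-, $\onensymb$-, $\nnsymb$-) queuing network defined in Section~\ref{sec:impl}, the linearization $\linrel$ induces an execution of $\anetwork$ iff $\linrel$ is an $X$-linearization of $\msc$ in the sense of Definitions~\ref{def:mb_msc}, \ref{def:one_n} and~\ref{def:n_n}; and for $X=\pp$, where Definition~\ref{def:pp_msc} does not mention a linearization, the right-hand side becomes ``$\msc$ is a $\pp$-MSC'', in which case \emph{every} linearization of $\msc$ works. The fact that unifies the four cases: two send events are routed to the same queue of $\anetwork$ precisely when they satisfy the hypothesis of the relevant clause of the definition (same ordered pair for $\pp$, same receiver for $\mb$, same sender for $\onensymb$, arbitrary for $\nnsymb$), and in the $\pp$ and $\onensymb$ cases such sends lie on a single process line, so that ordering them by $\procrel^+$ or by $\linrel$ is the same thing.

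For the direction from the MSC-side condition to an induced execution, I would run $w_\linrel$ on $\anetwork$ step by step, maintaining the invariant that the content of each queue is the list, in $\linrel$-order, of the contents of the send events routed to that queue that have already been performed but whose $\lhd$-matching receive (if any) has not. Send steps preserve this by appending at the tail; note $\lhd \subseteq \happensbefore \subseteq \linrel$, so every send strictly precedes its receive in $w_\linrel$ and no queue is read ``in advance''. For a receive event $r$ with $s \lhd r$, apply the ordering clause of the definition to $s$ together with any send $s'$ routed to the same queue with $s'\linrel s$: since $s$ is $\lhd$-matched it is not in $\Unm{\msc}$, so the ``unmatched'' alternative of the clause is excluded and $s'$ must also be matched with its receive strictly $\linrel$-before $r$. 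Hence all such $s'$ have already been dequeued, $s$ is at the head of its queue, the step is enabled, it reads the content of $s$ (condition~(2a) of Definition~\ref{def:msc}), and popping $s$'s message restores the invariant with the network's pairing agreeing with $\lhd$. As every step is enabled, $w_\linrel$ is a valid execution of $\anetwork$, i.e.\ an $X$-execution, induced by $\linrel$.

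For the converse, assume $\linrel$ induces an execution of $\anetwork$ and take two send events $s\linrel s'$ routed to the same queue. FIFO discipline deposits $s$ before $s'$, so if $s'$ is dequeued (matched) then $s$ was dequeued earlier, hence $s$ is matched and, writing $s\lhd r$ and $s'\lhd r'$, we get $r\linrel r'$; when the stronger $r\procrel^+ r'$ is required (case $\pp$), it follows because $r$ and $r'$ lie on a single process line, so $\linrel$-order and $\procrel^+$-order coincide there. If instead $s$ is unmatched its message blocks the queue forever, so $s'$ is unmatched too. In all cases the relevant clause of Definitions~\ref{def:pp_msc}, \ref{def:mb_msc}, \ref{def:one_n} and~\ref{def:n_n} holds, so $\linrel$ is an $X$-linearization of $\msc$ (resp.\ $\msc$ is a $\pp$-MSC); quantifying existentially over $\linrel$ yields the Fact. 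I expect the only genuinely delicate point to be the joint bookkeeping of the queue-content invariant and of the coincidence between the network's reception pairing and $\lhd$, in the presence of unmatched (queue-blocking) sends and of repeated message contents; once that is set up carefully, the rest is a routine unwinding of the FIFO operational semantics against the ordering clauses of the MSC definitions.
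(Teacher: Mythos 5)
Your proof is correct, and it is essentially the routine argument that the paper leaves implicit: the Fact is stated without proof, justified only by the remark that a linearization can be interpreted as an execution, and your per-linearization correspondence (queue-content invariant for one direction, FIFO discipline for the other, plus the observation that sends routed to a common queue lie on one process line in the $\pp$ and $\onensymb$ cases, so $\linrel$-order and $\procrel^+$-order coincide there) is exactly the unwinding that is intended. One point you raise is a genuine refinement rather than pedantry: if ``induces a $\pp$ execution'' were read as merely ``the action word is a valid execution of the $\pp$ network,'' the statement would fail when message contents repeat (two identical messages from $p$ to $q$ whose $\lhd$-edges cross give a non-$\pp$ MSC whose linearizations are nevertheless valid FIFO words), so your requirement that the network's send/receive pairing agree with $\lhd$ is needed to make the Fact literally true, and your bookkeeping correctly maintains that agreement, including for unmatched, queue-blocking sends.
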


Note that for \pp the claim is stronger as for a \pp-MSC $\msc$,  \emph{all} of its linearizations are $\pp$ executions. This is not the case for the other communication models.

\section{MSO definability}\label{sec:MSO}
We have introduced seven different communication models and the corresponding classes of MSCs. Here, we show that all of these classes are MSO-definable, i.e., for every
communication model $\comsymb$,
there is an MSO logic formula 
$\msoformulaofcom{{\comsymb}}$ that captures 
exactly the class
$\MSCclassofcom{\comsymb}$ of all
$\comsymb$-MSCs.
 We first recall the formal definition of MSO logic over MSCs.

\begin{definition}[MSO logic]
The set of MSO formulas over MSCs is given by the grammar
$
\phi ::= \mathsf{true} \mid x \procrel y \mid x \lhd y \mid \lambda(x) = a \mid x = y \mid x \in X \mid \exists x.\phi \mid \exists X.\phi \mid \phi \vee \phi \mid \neg \phi
$,
where $a \in \Act$, $x$ and $y$ are first-order variables (taken from an infinite set of variables), interpreted as
events of an MSC, and $X$ is a second-order variable, interpreted
as a set of events. We use common abbreviations such as $\wedge$, $\Rightarrow$, $\forall$, etc. 
\end{definition}

For instance, the formula $$\neg\exists x.(\bigvee_{a \in \sAct} \lambda(x) = a \;\wedge\; \neg \mathit{matched}(x)),$$
with $\mathit{matched}(x) = \exists y.x \lhd y$,
says that there are no unmatched send events. MSCs (a), (b), (c) and (d) of Fig. \ref{fig:exmscs} satisfy the formula. Given a sentence $\phi$, i.e., a formula without free variables,
 $L(\phi)$ denotes the set of asynchronous MSCs that satisfy $\phi$. The formula $\mathsf{true}$  describes the whole set of asynchronous MSCs, i.e., $L(\mathsf{true}) = \asMSCs$. The (reflexive) transitive closure of a binary relation defined by an MSO formula with free variables $x$ and $y$, such as $x \procrel y$, is MSO-definable (see the formula in \cite{longversion}). We will therefore  allow formulas of the form $x \procrel^+ y$, $x \procrel^* y$ or $x \happensbefore y$.

The communication models whose definitions
are stated as the existence of a linearization enjoying some properties (\none, \onen and \nn) are the most difficult
to express in MSO. 
Indeed, their definition suggests
a second-order quantification over a \emph{binary} relation, but MSO is restricted to second-order quantification over 
\emph{unary} predicates. 
We therefore have to introduce alternative definitions that are closer to the logic and show their equivalence to those given in Section~\ref{sec:MSC}. These alternative definitions will also be heavily used in the following sections. The idea is to characterize the MSCs  in terms of the acyclicity of a binary relation that is MSO definable.

\paragraph{\bf Peer-to-peer MSCs}
The MSO formula that defines $\ppMSCs$ (i.e., the set of $\pp$-MSCs) directly follows from Definition~\ref{def:pp_msc}:
\[
	\ppformula = \neg \exists s.\exists s'. \left(
	\bigvee_{\substack{p \in \Procs, q \in \Procs}}\;
	\bigvee_{\substack{a,b \in \pqsAct{p}{q}}}\hspace{-1em}
	(\lambda(s) = a \;\wedge\; \lambda(s') = b) \;\wedge\; s \procrel^+ s' \;\wedge\;
	(\psi_1 \vee \psi_2 )
	\right)
\]
where $\psi_1$ and $\psi_2$ are:
\[
	\psi_1 = \exists r.\exists r'.\left(
	\begin{array}{ll}
		s \lhd r & \wedge\\
		s' \lhd r' & \wedge\\
		r' \procrel^+ r &
	\end{array}
	\right) \quad \quad
	\psi_2 = (\neg \mathit{matched}(s) \wedge \mathit{matched}(s'))
	\]
	\[
	matched(x) = \exists y. x \lhd y
\]

The property $\ppformula$ says that there cannot be two matched send events $s$ and $s'$, with the same sender and receiver, such that $s \procrel^+ s'$ and either
\begin{enumerate*}[label={(\roman*)}]
	\item  their receptions happen in the reverse order, or
	\item $s$ is unmatched and $s'$ is matched.
\end{enumerate*}

\paragraph{\bf Causally ordered MSCs}
As for $\pp$, the MSO-definability of $\coMSCs$ follows from Definition~\ref{def:co_msc}:
\[
	\coformula = \neg \exists s.\exists s'. \left(
	\bigvee_{\substack{q \in \Procs}}\;
	\bigvee_{\substack{a,b \in \pqsAct{\plh}{q}}}\hspace{-1em}
	(\lambda(s) = a \;\wedge\; \lambda(s') = b) \;\wedge\; s \happensbefore s' \;\wedge\;
	(\psi_1 \vee \psi_2 )
	\right)
\]
where $\psi_1$ and $\psi_2$ have been defined above for the \pp case. The property $\coformula$ says that there cannot be two send events $s$ and $s'$, with the same recipient, such that $s \happensbefore s'$ and either
\begin{enumerate*}[label={(\roman*)}]
	\item their corresponding receive events $r$ and $r'$ happen in the opposite order, i.e. $r' \procrel^+ r$, or
	\item $s$ is unmatched and $s'$ is matched.
\end{enumerate*}

\paragraph{\bf Mailbox MSCs}
For the mailbox communication model, Definition~\ref{def:mb_msc} cannot be immediately translated into an MSO formula. Thus, we introduce an alternative definition of $\mb$-MSC that is closer to MSO logic; in particular, we define an additional binary relation that represents a constraint under the $\none$ semantics, which ensures that messages received by a process are sent in the same order as they are received. This definition is shown to be equivalent to Definition~\ref{def:mb_msc} in \cite{longversion}.

\begin{definition} [$\none$ alternative]\label{def:n_one_alt}
	Let an MSC $\msc = (\Events,\procrel,\lhd,\lambda)$ be fixed, and let ${\mbrel} \subseteq \Events \times \Events$
	be defined as $s \mbrel s'$ if there is $q \in \Procs$
	such that $\lambda(s) \in \qsAct{q}$,
	$\lambda(s') \in \qsAct{q}$, and either:
	\begin{itemize}
		\item $s \in \Matched{\msc}$ and $s' \in \Unm{\msc}$, or
		\item $s \lhd r_1$ and $s' \lhd r_2$ for some $r_1,r_2 \in \Events_q$ such that $r_1 \procrel^+ r_2$.
	\end{itemize}

	We let ${\mbpartialstrict} = ({\procrel} \,\cup\, {\lhd} \,\cup\, {\mbrel})^+$.
	$\msc $ is a \emph{$\none$-MSC}
	if ${\mbpartial}$ is a partial order.
\end{definition}
The ${\mbrel}$ relation expresses that two send events that are not necessarily related by a causal path should be scheduled in a precise order because their matching receptions are in this precise order. If ${\mbpartial}$ is a partial order, it means that it is possible to find a linearization $\linrel$, such that $\linrel \;\subseteq\; \mbpartial$. It is easy to see that such a linearization is exactly what we called a $\none$-linearization in Definition~\ref{def:mb_msc}.
The MSO-definability of $\mbMSCs$ follows from Definition~\ref{def:n_one_alt}; in particular, note that 
$\mbpartial$ is reflexive and transitive by definition, 
thus we just have to check acyclicity:
$
	\mbformula = \neg \exists x.\ x \mbpartialstrict x 
$
where $x \mbpartialstrict y$ is obtained as the MSO-definable transitive closure of
the union of the MSO-definable relations $\procrel$, $\lhd$, and $\mbrel$,  
where $x \mbrel y$ may be defined as:
\[
x \mbrel y =
\displaystyle
\hspace{-1em}\bigvee_{\substack{q \in \Procs\\a,b \in \qsAct{q}}}\hspace{-1em}
(\lambda(x) = a \;\wedge\; \lambda(y) = b)
\wedge
\left(
\begin{array}{rl}
& \mathit{matched}(x) \wedge \neg \mathit{matched}(y)\\[1ex]
\vee & \exists x'.\exists y'. (x \lhd x' \;\wedge\; y \lhd y' \;\wedge\; x' \procrel^+ y')
\end{array}
\right).
\]

\paragraph{\bf \onen MSCs}

As for the mailbox communication model, we give an alternative definition of $\onensymb$-MSC; the equivalence with Definition~\ref{def:one_n} is shown in \cite{longversion}.

\begin{definition} [$\onensymb$ alternative]\label{def:one_n_alt}
	For an MSC $\msc = (\Events,\procrel,\lhd,\lambda)$, let ${\onenrel} \subseteq \Events \times \Events$ be defined as $e_1 \onenrel e_2$ if there are two events $e_1$ and $e_2$, and $p \in \Procs$ such that either:
	\begin{itemize}
		\item $\lambda(e_1) \in \psAct{p}$, $\lambda(e_2) \in \psAct{p}$, $e_1 \in \Matched{\msc}$, and $e_2 \in \Unm{\msc}$, or
		\item $\lambda(e_1) \in \prAct{p}$, $\lambda(e_2) \in \prAct{p}$, $s_1 \lhd e_1$ and $s_2 \lhd e_2$ for some $s_1,s_2 \in \Events_p$, and $s_1 \procrel^+ s_2$.
	\end{itemize}

	We let ${\onenpartial} = ({\procrel} \,\cup\, {\lhd} \,\cup\, {\onenrel})^\ast$.
	$\msc$ is a \emph{$\onensymb$-MSC}
	if ${\onenpartial}$ is a partial order.
\end{definition}

The ${\onenrel}$ relation ensures that messages sent by a process are sent and received in an order that is suitable for the $\onensymb$ communication. Since ${\onenpartial}$ is a partial order, it is possible to find a linearization $\linrel$ such that $\linrel \;\subseteq\; \onenpartial$. It is not difficult to see that such a linearization is exactly what we called a $\onensymb$-linearization in Definition~\ref{def:one_n}.
The existence of a MSO formula that defines $\onenMSCs$ follows from Definition~\ref{def:one_n_alt} and the MSO definability
of $\onenrel$:
\[
x \onenrel y =
\begin{array}{rl}
& \left(
	\bigvee_{\substack{p \in \Procs\\a,b \in \psAct{p}}}\hspace{-1em}
	(\lambda(x) = a \;\wedge\; \lambda(y) = b)
	\;\wedge\; \mathit{matched}(x) \;\wedge\; \neg \mathit{matched}(y)
\right) \;\vee\\
& \left(
	\bigvee_{\substack{p \in \Procs\\a,b \in \prAct{p}}}\hspace{-1em}
	(\lambda(x) = a \;\wedge\; \lambda(y) = b)
	\;\wedge\;
	\exists x'.\exists y'. (x' \lhd x \;\wedge\; y' \lhd y \;\wedge\; x' \procrel^+ y')
\right)\\
\end{array}
\]


\paragraph{\bf \nn MSCs} This case is the most involved. As before we give an  alternative definition that introduces an acyclic relation but equivalence to Definition~\ref{def:n_n} does not follow easily as in previous cases. It requires the introduction of an algorithm that finds the \nn-linearization and whose correct termination guarantees the acyclicity of the binary relation.

\begin{definition} [$\nnsymb$ alternative]\label{def:n_n_alt}
	For an MSC $\msc = (\Events,\procrel,\lhd,\lambda)$, let ${\nnrel} = ({\procrel} \,\cup\, {\lhd} \,\cup\, {\mbrel} \,\cup\, {\onenrel})^+$. We define  $\nnbowtie \subseteq \Events \times \Events$,  such that $e_1 \nnbowtie e_2$ if one of the following holds:
	\begin{enumerate}
		\item $e_1 \nnrel e_2$
		\item $\lambda(e_1) \in \prAct{\plh}$, $\lambda(e_2) \in \prAct{\plh}$, $s_1 \lhd e_1$ and $s_2 \lhd e_2$ for some $s_1,s_2 \in \Events$, $s_1 \nnrel s_2$ and $e_1 \notnnrel e_2$.
		\item $\lambda(e_1) \in \psAct{\plh}$, $\lambda(e_2) \in \psAct{\plh}$, $e_1 \lhd r_1$ and $e_2 \lhd r_2$ for some $r_1,r_2 \in \Events$, $r_1 \nnrel r_2$ and $e_1 \notnnrel e_2$.
		\item $e_1 \in \Matched{\msc}$, $e_2 \in \Unm{\msc}$, $e_1 \notnnrel e_2$.
	\end{enumerate}

	$\msc $ is a \emph{$\nnsymb$-MSC}
	if ${\nnbowtie}$ is acyclic.
\end{definition}

The full proof of the equivalence of 
Definitions~\ref{def:n_n} and~\ref{def:n_n_alt}
can be found in \cite{longversion}. Here we show only the more subtle part. 
The implication Definition \ref{def:n_n_alt} $\Rightarrow$ Definition \ref{def:n_n} follows from the fact that the order of receive events imposes an order on sends and the fact that a $\nnsymb$-linearization is also a \mb and $\onensymb$-linearization.
\begin{restatable}{proposition}{nnsecondprop}
\label{prop:n_n_cycl}
	Let $\msc$ be an MSC. If $\nnbowtie$ is cyclic, then $\msc$ is not a $\nnsymb$-MSC.
\end{restatable}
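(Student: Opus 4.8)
The plan is to prove the contrapositive in the stated form directly: assuming $\nnbowtie$ contains a cycle, I must show that no $\nnsymb$-linearization of $\msc$ exists. Suppose for contradiction that $\linrel$ is a $\nnsymb$-linearization, so in particular $\linrel$ is a total order with ${\happensbefore} \subseteq {\linrel}$, and for any two send events $s \linrel s'$ we have either $s,s' \in \Matched{\msc}$ with $r \linrel r'$ (for $s \lhd r$, $s' \lhd r'$), or $s' \in \Unm{\msc}$. The key claim is that every one of the four clauses defining $\nnbowtie$ is \emph{sound} with respect to any $\nnsymb$-linearization, i.e. $e_1 \nnbowtie e_2$ implies $e_1 \linrel e_2$ (and moreover $e_1 \neq e_2$). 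Once this claim is established, a cycle $e_1 \nnbowtie e_2 \nnbowtie \cdots \nnbowtie e_k \nnbowtie e_1$ yields $e_1 \linrel e_1$ with $e_1 \neq e_1$, a contradiction.

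The bulk of the work is verifying the claim clause by clause. For clause~1, $e_1 \nnrel e_2$ means $e_1 (\procrel \cup \lhd \cup \mbrel \cup \onenrel)^+ e_2$; since $\procrel, \lhd \subseteq {\happensbefore} \subseteq {\linrel}$ and since a $\nnsymb$-linearization is also a $\none$- and a $\onensymb$-linearization, the relations $\mbrel$ and $\onenrel$ are also contained in $\linrel$ (this is exactly the content of Definitions~\ref{def:n_one_alt} and~\ref{def:one_n_alt}, which were shown equivalent to Definitions~\ref{def:mb_msc} and~\ref{def:one_n}), so the transitive closure $\nnrel$ is contained in $\linrel$, and it is irreflexive since $\linrel$ restricted to distinct elements is. For clause~2, if $\lambda(e_1),\lambda(e_2)$ are receives with matching sends $s_1 \lhd e_1$, $s_2 \lhd e_2$, $s_1 \nnrel s_2$ (so $s_1 \linrel s_2$ and $s_1 \neq s_2$) and $e_1 \notnnrel e_2$, then by the $\nnsymb$-property applied to the ordered pair of sends $s_1 \linrel s_2$: either both are matched and $e_1 \linrel e_2$, giving what we want; or $s_2 \in \Unm{\msc}$, which is impossible since $s_2 \lhd e_2$. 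Similarly, clause~3 uses the reverse direction of the $\nnsymb$-property: from $r_1 \linrel r_2$ with $e_1 \lhd r_1$, $e_2 \lhd r_2$ one deduces $e_1 \linrel e_2$, because if instead $e_2 \linrel e_1$ then the $\nnsymb$-property would force $r_2 \linrel r_1$, contradicting $r_1 \linrel r_2$ and $r_1 \neq r_2$ (here one uses that $\linrel$ is total, so $e_1$ and $e_2$ are comparable, and that $e_1 \neq e_2$ since $e_1 \notnnrel e_2$ rules out equality — or rather one handles $e_1 = e_2$ separately, noting it forces $r_1 = r_2$ contradicting $r_1 \nnrel r_2$). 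For clause~4, if $e_1 \in \Matched{\msc}$ and $e_2 \in \Unm{\msc}$ with $e_1 \notnnrel e_2$, then $e_1$ and $e_2$ are both send events; in the $\nnsymb$-linearization all matched sends precede all unmatched sends (as noted in the remark after Definition~\ref{def:n_n}), so $e_1 \linrel e_2$, and $e_1 \neq e_2$ since one is matched and the other is not.

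\textbf{Main obstacle.} The delicate points are clauses~2 and~3, where the conclusion $e_i \linrel e_j$ must be extracted from constraints that, a priori, only bound the \emph{other} pair of events. The argument relies essentially on \emph{totality} of the linearization: $e_1$ and $e_2$ are comparable, so if $e_1 \linrel e_2$ fails then $e_2 \linrel e_1$ holds, and then the $\nnsymb$-property (used in the send-to-receive direction for clause~2, and — via its contrapositive — in the receive-to-send direction for clause~3) propagates this to a contradiction with the hypothesis $s_1 \nnrel s_2$ or $r_1 \nnrel r_2$. One must also be careful that $\nnrel$ is \emph{strict} (irreflexive), which is what lets us assert $s_1 \neq s_2$ (resp. $r_1 \neq r_2$) and hence derive the strict inequality needed to close the cycle; this follows because $\nnrel \subseteq \linrel$ restricted to a total order, together with the fact that if $\nnrel$ had a self-loop we would already have a $\nnbowtie$-cycle and nothing to prove — but cleanest is to observe that the whole proposition is vacuous unless $\msc$ admits \emph{no} $\nnsymb$-linearization, and under the contradiction hypothesis $\nnrel$ is genuinely strict. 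Assembling these four soundness facts and chasing the cycle is then immediate.
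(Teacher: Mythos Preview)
Your proof is correct and follows essentially the same approach as the paper's: show that any $\nnsymb$-linearization $\linrel$ must contain each of the four clauses defining $\nnbowtie$ (as strict inequalities), so that a $\nnbowtie$-cycle contradicts the antisymmetry of $\linrel$. The paper's own proof is a brief two-sentence sketch of exactly this idea (noting that a $\nnsymb$-linearization is also a $\none$- and $\onensymb$-linearization, hence absorbs $\nnrel$, and that the send/receive order-matching and the matched-before-unmatched rule absorb clauses~2--4); you have simply unpacked that sketch clause by clause. One cosmetic point: your closing line ``yields $e_1 \linrel e_1$ with $e_1 \neq e_1$'' is garbled---what you mean is that the chain forces $e_1$ strictly before itself in $\linrel$---but the intent is clear and the argument is sound.
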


Let the \emph{Event Dependency Graph} (EDG) of a $\nnsymb$-MSC $\msc$ be a graph that has events as nodes and an edge between any two events $e_1$ and $e_2$ if $e_1 \nnbowtie e_2$. Algorithm  \ref{algonn}, given the EDG of an $\nnsymb$-MSC $\msc$, computes a $\nnsymb$-linearization of $\msc$. We  show that, if $\nnbowtie$ is acyclic, this algorithm always terminates correctly. This, along with Proposition~\ref{prop:n_n_cycl},  shows that Definitions~\ref{def:n_n} and \ref{def:n_n_alt} are equivalent.

\begin{algorithm}[t]
\caption{Algorithm for finding a $\nnsymb$-linearization}
\label{algonn}
\raggedright \textbf{Input}: the EDG of an MSC $\msc$. \\
\raggedright \textbf{Output}: a valid $\nnsymb$-linearization for $\msc$, if $\msc$ is a $\nnsymb$-MSC.
\begin{enumerate}
	\item If there is a matched send event $s$ with in-degree 0 in the EDG, add $s$ to the linearization and remove it from the EDG, along with its outgoing edges, then jump to step 5. Otherwise, proceed to step 2.
	\item If there are no matched send events in the EDG and there is an unmatched send event $s$ with in-degree 0 in the EDG, add $s$ to the linearization and remove it from the EDG, along with its outgoing edges, then jump to step 5. Otherwise, proceed to step 3.
		\item If there is a receive event $r$ with in-degree 0 in the EDG, such that $r$ is the receive event of the first message whose sent event was already added to the linearization, add $r$ to the linearization and remove it from the EDG, along with its outgoing edges, then jump to step 5. Otherwise, proceed to step 4.
		\item Throw an error and terminate.
		\item If all the events of $\msc$ were added to the linearization, return the linearization and terminate. Otherwise, go back to step 1.
\end{enumerate} 
\end{algorithm}

\begin{example}
	Fig.~\ref{fig:nn_algo_ex} shows an example of $\nnsymb$-MSC and its EDG. We use it to show how the algorithm that builds a $\nnsymb$-linearization works. Note that, for convenience, not all the edges of the EDG have been drawn, but those missing would only connect events for which there is already a path is our drawing; these edges do not have any impact on the execution of the algorithm. We start by applying step 1 on the event $!5$, which has in-degree 0. The algorithm starts to build a linearization using $!5$ as the first event, and all the outgoing edges of $!5$ are removed from the EDG, along with the event itself. Now, $!1$ has in-degree 0 and we can apply again step 1. The partial linearization becomes $!5\;!1$. Similarly, we can then apply step 1 on $!2$ and $!3$ to get the partial linearization $!5\;!1\;!2\;!3$. At this point, step 1 and 2 cannot be applied, but we can use step 3 on $?5$, which gets added to linearization. We then apply step 3 also to $?1$ and $?2$, followed by step 1 on $!4$, step 2 on $!6$ (which is an unmatched send event), and step 3 on $?3$ and $?4$. Finally, all the events of the MSC have been added to our linearization, which is $!5\;!1\;!2\;!3\;?5\;?1\;?2\;!4\;!6\;?3\;?4$. Note that this is a $\nnsymb$-linearization.
\end{example}

\begin{figure}[t] 
\centering
\begin{subfigure}[c]{0.4\textwidth}\raggedleft
\begin{tikzpicture}[ scale = .7,every node/.style={transform shape}]
\newproc{0}{p}{-3.3}; 
\newproc{1.2}{q}{-3.3}; 
\newproc{2.4}{r}{-3.3}; 
\newproc{3.6}{s}{-3.3}; 
\newproc{4.8}{t}{-3.3};

\newmsgm{0}{2.4}{-0.6}{-2.4}{5}{0.8}{black};
\newmsgm{0}{1.2}{-1}{-1}{1}{0.65}{black};
\newmsgm{2.4}{1.2}{-1.4}{-1.4}{2}{0.5}{black};
\newmsgm{2.4}{3.6}{-1.8}{-1.8}{3}{0.5}{black};
\newmsgm{1.2}{3.6}{-2.8}{-2.8}{4}{0.2}{black};
\newmsgum{3.6}{4.8}{-1}{6}{0.5}{black}; 

\end{tikzpicture} 
\end{subfigure} 
\qquad \qquad
\begin{subfigure}[c]{0.4\textwidth}\raggedright
	\begin{tikzpicture}[scale = .6,every node/.style={transform shape}] 
\begin{scope} 
\node (r4) at (0,0) {$?4$};
     \node (s4) at (-1,-1) {$!4$};
     \node (r3) at (1,-1) {$?3$};
     \node (r2) at (-1,-2) {$?2$};
     \node (s3) at (1,-3) {$!3$};
     \node (s6) at (2,-2) {$!6$};
     \node (r1) at (-1,-3) {$?1$};
     \node (s1) at (-1,-4) {$!1$};
     \node (r5) at (-2,-4) {$?5$};
     \node (s5) at (-1,-5) {$!5$};
     \node (s2) at (-2,-6) {$!2$};	

     \draw[->] (s2) -- (r5); 
     \draw[->] (s5) -- (r5);
     \draw[->] (s5) -- (s1);
     \draw[->] (s1) -- (r1); 
     \draw[->, color = blue] (r5) -- (r1);
     \draw[->] (r1) -- (r2);
     \draw[->] (r2) -- (s4);
     \draw[->] (s4) -- (r4);
     \draw[->, color = Green, dashed] (s3) -- (s6);
     \draw[->] (s3) -- (r3);
     \draw[->] (s6) -- (r3); 
     \draw[->, color = red] (r3) -- (r4);
     \draw[->, color = blue ] (r2) -- (r3); 
 
     \draw[->] (s2) to[bend left= 40] (r2);  
     \draw[->] (s2) to[bend right= 70] (s3);
     \draw[->, color = Green, dashed] (s2) to[bend right= 70] (s6);
     \draw[->, color = red] (s1) to[bend left= 60] (s2); 
     \draw[->, color = Green, dashed] (s1) to[bend right= 40] (s6); 
     \draw[->, color = Green, dashed] (s4) to[bend right= 30] (s6); 
\end{scope} 
     \begin{scope}[shift = {(2.5,-4.5)}]
 	     \draw[->] (0,0) -- (.8,0); 
 	     \draw[->, color = blue] (0,-.5) -- (.8,-.5); 
 	     \draw[->, color = red] (0,-1) -- (.8,-1); 
 	     \draw[->, color = Green, dashed] (0,-1.5) -- (.8,-1.5); 
	\node[right] (1) at (.8,0)  {$= (\procrel / \lhd)$}; 
	\node[right] (2) at (.8,-.5) {$= (\onenrel)$}; 
	\node[right] (3) at (.8, -1) {$= (\mbrel)$}; 
	\node[right] (4) at (.8,-1.5) {$=$ (other edges)};
     \end{scope} 
\end{tikzpicture}
\end{subfigure}
    \caption{An MSC and its EDG. In the EDG, only meaningful edges are shown.}
    \label{fig:nn_algo_ex}
\end{figure}
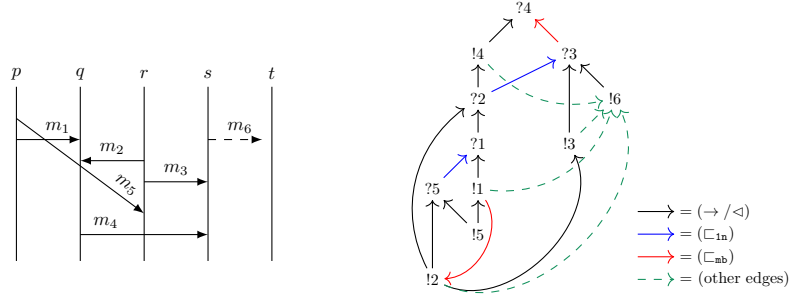

We now need to show that 
\begin{enumerate*}[label={(\roman*)}]
	\item if Algorithm \ref{algonn} terminates correctly (i.e., step 4 is never executed), it returns a $\nnsymb$-linearization, and 
	\item if $\nnbowtie$ is acyclic, the algorithm always terminates correctly.
\end{enumerate*}

\begin{proposition}
	Given an MSC $\msc$, if Algorithm \ref{algonn} returns a linearization then  it is a $\nnsymb$-linearization.
\end{proposition}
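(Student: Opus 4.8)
The plan is to check, directly from the loop invariants of Algorithm~\ref{algonn}, the two requirements that Definition~\ref{def:n_n} imposes on a $\nnsymb$-linearization. So assume the algorithm returns a sequence, i.e.\ step~4 is never reached, and let $\linrel$ be the strict total order on events induced by their position in the returned sequence (with its reflexive closure as the candidate linearization). First I would argue that $\linrel$ is a linearization of $\msc$ at all. Each event is appended and simultaneously deleted from the EDG exactly once in steps~1--3, and the algorithm returns (step~5) only when the EDG is empty, so $\linrel$ ranges over all of $\Events$. The crucial invariant is that an event $e$ is appended only while it has in-degree $0$ in the current EDG; since the incoming edge $e'\to e$ of the EDG (present exactly when $e'\nnbowtie e$) is deleted only when its source $e'$ is itself appended, in-degree $0$ of $e$ at the moment it is appended means every $e'$ with $e'\nnbowtie e$ already precedes $e$ in $\linrel$. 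Because ${\procrel}\subseteq{\nnbowtie}$ and ${\lhd}\subseteq{\nnbowtie}$ (clause~1 of Definition~\ref{def:n_n_alt}, as ${\procrel},{\lhd}\subseteq{\nnrel}$), transitivity of $\linrel$ yields ${\happensbefore}=({\procrel}\cup{\lhd})^\ast\subseteq{\linrel}$, so $\linrel$ is indeed a linearization of $\msc$.

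Next I would rule out an unmatched send preceding a matched one in $\linrel$. Let $s\linrel s'$ with $s'\in\Matched{\msc}$; the claim is $s\in\Matched{\msc}$. An unmatched send event enters $\linrel$ only through step~2, whose guard requires that the EDG contains no matched send event at that iteration. If $s$ were unmatched it would have been appended via step~2, but the matched send $s'$ is appended strictly later, hence was still in the EDG at that moment -- a contradiction. So whenever $s'$ is matched, so is $s$.

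It then remains to show that matching receptions occur in the order of their sends: if $s\linrel s'$, $s\lhd r$ and $s'\lhd r'$ then $r\linrel r'$. Receive events are appended only through step~3, which selects the receive event of the \emph{first} message whose send is already in $\linrel$ but whose matching receive is not. Consider the iteration that appends $r'$: at that point $s'$ is in $\linrel$ (step~3 requires it), hence so is $s$, occurring before $s'$. If $r$ were not yet in $\linrel$, then the message $s\lhd r$ would be a pending message whose send strictly precedes the send of $s'\lhd r'$, so step~3 would have chosen $r$ rather than $r'$ -- a contradiction. Hence $r$ precedes $r'$, i.e.\ $r\linrel r'$. Combining this with the previous paragraph shows that $\linrel$ satisfies every clause of Definition~\ref{def:n_n}, so the returned order is a $\nnsymb$-linearization.

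The hard part, I expect, is making the \emph{first message} discipline of step~3 fully rigorous: one should maintain as an explicit invariant that after each iteration the messages whose send lies in $\linrel$ but whose receive does not form a list ordered by the positions of their sends, and that step~3 always consumes the head of this list; only then is the argument of the third paragraph airtight. The topological-sort reasoning of the first paragraph is conceptually routine, but needs the small observation that $\nnbowtie$ need not be transitive, so one argues via reachability in the EDG rather than by treating $\nnbowtie$ itself as a partial order.
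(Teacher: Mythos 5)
Your proof is correct and follows essentially the same route as the paper's: it argues directly from the guards of steps~2 and~3 that matched receives appear in the order of their sends and that unmatched sends are emitted only after all matched sends. You additionally spell out (via the in-degree-$0$ invariant and ${\procrel},{\lhd}\subseteq{\nnbowtie}$) that the returned sequence extends $\happensbefore$ and is therefore a linearization at all, a point the paper's two-sentence proof leaves implicit.
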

\begin{proof}
	Step 2 ensures that the order (in the linearization) in which matched messages are sent is the same as the order in which they are received. Moreover, according to step 3, an unmatched send event is added to the linearization only if all the matched send events were already added.
\end{proof}

\begin{restatable}{proposition}{nnalgotermination}
\label{prop:nn_algo_term}
	Given an MSC $\msc$, Algorithm \ref{algonn}  terminates correctly if $\nnbowtie$ is acyclic.
\end{restatable}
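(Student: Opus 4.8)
The plan is to argue by contradiction: assume $\nnbowtie$ is acyclic but that Algorithm~\ref{algonn}, run on the EDG of $\msc$, reaches step~4 at some iteration (the algorithm clearly terminates, placing one event per iteration, so only step~4 must be ruled out). Write $A$ for the set of events already placed in the partial linearization at that point and $G = \Events \setminus A$ for the current EDG; since step~5 has not terminated, $G \neq \emptyset$. For a matched send $s$ write $?s$ for the receive with $s \lhd {?s}$, and for a receive $r$ write $!r$ for the send with $!r \lhd r$. Three invariants hold by an easy induction on the iteration count: (I1) $A$ is downward closed under $\nnbowtie$ (if $e \nnbowtie f$ and $f \in A$ then $e \in A$), since an event is placed only when it is $\nnbowtie$-minimal in $G$; (I2) the placement order is a linear extension of $\nnbowtie$ restricted to $A$; (I3) if $!m_1,\dots,!m_p$ lists the matched sends of $A$ in placement order, the receives of $A$ are exactly $?m_1,\dots,?m_\ell$ in that order, for some $\ell \le p$ --- because of the priority of the steps (matched sends, then unmatched sends, then the receive of the first pending message, which is $?m_{\ell+1}$).

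First I would show that every $\nnbowtie$-minimal element of $G$ is a receive. As $G$ is a nonempty sub-DAG of the acyclic $\nnbowtie$, it has a minimal element $v$: a matched send would trigger step~1; an unmatched send would, since step~2 failed, force a matched send $s$ to remain in $G$, but $s \nnbowtie v$ always holds for matched $s$ and unmatched $v$ (clause~1 of Definition~\ref{def:n_n_alt} if $s \nnrel v$, clause~4 otherwise), so $v$ would not be minimal. Next I would check $p > \ell$, so that the first pending message $m_{\ell+1}$ exists: otherwise a minimal (hence receive) $r = ?m \in G$ has $!m \nnbowtie r$ by clause~1, and $!m \in A$ forces $r = ?m \in A$ by (I3), so $!m \in G$, contradicting minimality of $r$.

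The crux is to prove that $?m_{\ell+1}$ is $\nnbowtie$-minimal in $G$, for then step~3 fires, a contradiction. Suppose some $e \in G$ has $e \nnbowtie {?m_{\ell+1}}$. Clauses~3 and~4 of Definition~\ref{def:n_n_alt} need the target to be a send, so are excluded; clause~2 gives a matched send $!e$ with $!e \nnrel !m_{\ell+1}$, hence by (I1)--(I3) $!e$ is placed before $!m_{\ell+1}$ while $e \notin A$, contradicting that $m_{\ell+1}$ is the \emph{first} pending message. So $e \nnrel {?m_{\ell+1}}$ (clause~1); every vertex on a $\nnrel$-path from $e$ lies in $G$ (else $e \in A$ by (I1)), and the last edge of such a path into the receive $?m_{\ell+1}$ is not $\lhd$ (that would put $!m_{\ell+1} \in G$), not $\mbrel$ (its target is a send), not $\onenrel$ (that would force the penultimate receive $x$ to satisfy $!x \nnrel !m_{\ell+1}$, the clause-2 contradiction again), hence is $\procrel$: the process-predecessor $y$ of $?m_{\ell+1}$ lies in $G$. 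I would then run a descent $w_0 = {?m_{\ell+1}}$, $w_1 = y$, $w_2,\dots$ moving from a non-minimal $w_i \in G$ to a single-edge $\nnbowtie$-predecessor $w_{i+1} \in G$ (a $\procrel$- or $\lhd$-predecessor), preserving the invariant that $w_i$ lies $\nnrel$-below ${?m_{\ell+1}}$. At a receive $w_i = {?m'}$ one follows $\lhd$ to $w_{i+1} = !m' \in G$: were $!m' \in A$, then $m'$ is pending with ${?m'} \nnrel {?m_{\ell+1}}$, so clause~3 yields $!m' \nnbowtie !m_{\ell+1}$ and, by (I2)--(I3), $!m'$ is placed before $!m_{\ell+1}$, forcing ${?m'} \in A$ --- impossible. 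At a send $w_i$, any blocking edge other than a $\procrel$-predecessor (a $\mbrel$- or $\onenrel$-predecessor, or a clause-3 witness) yields, by the same ``ordered receptions force ordered sends'' argument plus the invariant, a matched send forced into $A$ while remaining in $G$ --- impossible; so one follows $\procrel$. The chain $w_k \nnbowtie \cdots \nnbowtie w_0$ has distinct vertices by acyclicity and lies in the finite set $G$, so the descent terminates --- yet by the above it can always be extended unless a contradiction was already reached, which forces one.

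The main obstacle is precisely this descent: arranging the single-edge steps so that the invariant ``$w_i$ is $\nnrel$-below ${?m_{\ell+1}}$'' persists, and then exhaustively checking, for every shape of blocking edge at each kind of event --- especially when the process-predecessor $y$ of $?m_{\ell+1}$ is itself a send, and when a matched send reached along the descent admits only clause-3 witnesses --- that it collapses, via clauses~2--4, (I1)--(I3), and the first-pending property, into a $\nnbowtie$-cycle or a simultaneously placed-and-unplaced event. Everything else is bookkeeping. Combined with Proposition~\ref{prop:n_n_cycl} and the easy direction that a successful run outputs a $\nnsymb$-linearization, this gives the equivalence of Definitions~\ref{def:n_n} and~\ref{def:n_n_alt}.
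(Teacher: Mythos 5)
Your overall strategy is the same as the paper's (argue the stuck configuration is impossible, using that the placed set $A$ is $\nnbowtie$-downward closed, that every $\nnbowtie$-minimal event of the remaining graph $G$ is a receive via clause~4 of Definition~\ref{def:n_n_alt}, the existence of a first pending message, and the ``ordered receptions force ordered sends'' fact, which is exactly Proposition~\ref{prop:nn_first_prop}), and your invariants (I1)--(I3), your steps A and B, and your treatment of the direct blocking edges into ${?m_{\ell+1}}$ (clauses 2--4, and $\nnrel$-paths whose last edge is $\lhd$, $\mbrel$ or $\onenrel$) are all sound. The problem is the descent at send events, i.e.\ exactly the part you defer as ``the main obstacle''. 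First, the contradiction you claim for an $\mbrel$- or $\onenrel$-predecessor $z \in G$ of a send $w_i \in G$ does not follow: $z$ is merely a matched send still in $G$, and nothing in (I1)--(I3) forces it into $A$ --- the first-pending argument needs the receive of the send under discussion to be $\nnrel$-below ${?m_{\ell+1}}$, whereas here the receive of $z$ is only known to precede the receive of $w_i$, which has no established relation to ${?m_{\ell+1}}$. (That particular step is repairable: since $z \nnrel w_i \nnrel {?m_{\ell+1}}$, you could simply extend the descent with $z$; insisting on single $\procrel$/$\lhd$ edges is what creates the spurious need for a contradiction there.)

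Second, and this is the genuine gap, the case that survives this repair is not handled: a send $w_i \in G$ with $w_i \nnrel {?m_{\ell+1}}$ all of whose $\nnrel$-predecessors lie in $A$ (in particular its $\procrel$-predecessor, if any), which is non-minimal only because of clause-3 witnesses (or clause-4 witnesses, if $w_i$ is unmatched) in $G$. There you can neither ``follow $\procrel$'' nor extend along an $\nnrel$-predecessor, and the invariant cannot be transferred to the witness $e$, since clause~3 explicitly requires $e \notnnrel w_i$; knowing only that the receive of $e$ is $\nnrel$-below the receive of $w_i$ does not, from the ingredients you list, produce a $\nnbowtie$-cycle or a simultaneously placed-and-unplaced event. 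So the crux of the proposition --- that the receive of the first pending message has in-degree~$0$ in the current EDG --- is not established by your sketch. For comparison, the paper avoids controlling the shape of the traversed edges altogether: it takes a maximal $\nnbowtie$-chain in the EDG below a blocker of that receive, observes that its first element has in-degree~$0$ and hence is the receive of a pending message, and then concludes with Proposition~\ref{prop:nn_first_prop} and the first-pending property. Either adopt that route or supply a genuinely new argument for the clause-3/clause-4-only send case; as written, the proof is incomplete at its central point.
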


The proof  proceeds by induction on the number of events added to the linearization and relies on the fact that since $\nnbowtie$ is acyclic then the EDG of the MSC is a DAG 
(see \cite{longversion}).

Finally, we showed the missing implication
Definition~\ref{def:n_n} $\Rightarrow$ Definition~\ref{def:n_n_alt} and completed the proof of the equivalence of these two definitions. Based on Definition~\ref{def:n_n_alt}, we can now write the MSO formula for $\nnsymb$-MSCs as 
$
	\nnformula = \neg \exists x. x \nnbowtie^{+} x 
$, 
where we can define $x \nnbowtie y$ as:
\[
	x \nnbowtie y =
	\begin{array}{rl}
	& \left(
		\bigvee_{\substack{a,b \in \psAct{\plh}}}
		(\lambda(x) = a \;\wedge\; \lambda(y) = b)
		\;\wedge\; \mathit{matched}(x) \;\wedge\; \neg \mathit{matched}(y)
	\right) \;\vee\\
	& (x \nnrel y) \quad \vee \quad \psi_3 \quad \vee \quad \psi_4\\
	\end{array}
\]

\noindent and $\psi_3$, $\psi_4$ can be specified as:
\[
	\psi_3 =
	\begin{array}{rl}
		& \bigvee_{\substack{a,b \in \prAct{\plh}}}
		  (\lambda(x) = a \;\wedge\; \lambda(y) = b)
		  \;\wedge\; \\
		& \exists x'.\exists y'.(x' \lhd x \;\wedge\; y' \lhd y) \;\wedge\; (x' \nnrel y') \;\wedge\; \neg(x \nnrel y)\\
	\end{array}
\]
\[
	\psi_4 =
	\begin{array}{rl}
		& \bigvee_{\substack{a,b \in \psAct{\plh}}}
		  (\lambda(x) = a \;\wedge\; \lambda(y) = b)
		  \;\wedge\; \\
		& \exists x'.\exists y'.(x \lhd x' \;\wedge\; y \lhd y') \;\wedge\; (x' \nnrel y') \;\wedge\; \neg(x \nnrel y)\\
	\end{array}
\]

Formulas $\psi_3$ and $\psi_4$ encode conditions (2) and (3) in Definition~\ref{def:n_n_alt}, respectively. Note that $\nnrel$ is MSO-definable, since it is defined as the reflexive transitive closure of the MSO-definable relations $\procrel$, $\lhd$, $\mbrel$, and $\onenrel$.

\paragraph{\bf Realizable with Synchronous Communication MSCs} 

Following the characterization given in \cite[Theorem 4.4]{DBLP:journals/dc/Charron-BostMT96}, we  provide an alternative definition of $\rsc$-MSC that is closer to MSO logic. We first  recall the concept of \emph{crown}.

\begin{definition} [Crown]
	Let $\msc$ be an MSC. A \emph{crown} of size $k$ in $\msc$ is a sequence $\langle(s_i,r_i),\, i \in \{1,\dots,k\}\rangle$ of pairs of corresponding send and receive events such that
	\[
		s_1 \happensbeforestrict r_2, s_2 \happensbeforestrict r_3, \dots, s_{k-1} \happensbeforestrict r_k, s_k \happensbeforestrict r_1.
	\]
\end{definition}

\begin{definition} [$\rsc$ alternative]\label{def:rsc_alt}
	An MSC $\msc = (\Events,\procrel,\lhd,\lambda)$ is a \rsc-MSC if and only if it does not contain any crown.
\end{definition}


The following MSO formula derives directly from previous  definition:
\[\Phi_{\rsc} = \neg \exists s_1.\exists s_2. s_1 \varpropto s_2 \;\wedge\; s_2 \varpropto^\ast s_1
\]
\noindent where $\varpropto$ is defined as
\[
s_1 \varpropto s_2 =
\bigvee_{\substack{e \in \sAct}}(\lambda(s_1) = e) \;\wedge\;
s_1 \neq s_2 \;\wedge\;
\exists r_2. (s_1 \happensbeforestrict r_2 \;\wedge\; s_2 \lhd r_2)
\]

\section{Hierarchy of classes of MSCs} \label{sec:hierarchy}

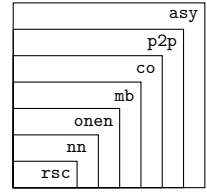
\begin{wrapfigure}{r}{0pt}
\centering	
	\begin{tikzpicture}[scale=0.7, every node/.style={transform shape}]
		\draw  (0,0) rectangle (1.2,.5);
		\draw (1.2,0.25) node[left]{\rsc};
		\draw  (0,0) rectangle (1.6,1);
		\draw (1.5,0.75) node[left]{\nnsymb};
		\draw  (0,0) rectangle (2,1.5);
		\draw (2,1.25) node[left]{\onensymb};
		\draw  (0,0) rectangle (2.4,2);
		\draw (2.4,1.75) node[left]{\mb};
		\draw  (0,0) rectangle (2.8,2.5);
		\draw (2.8,2.25) node[left]{\co};
		\draw  (0,0) rectangle (3.2,3);
		\draw (3.2,2.75) node[left]{\pp};
		\draw  (0,0) rectangle (3.6,3.5);
		\draw (3.6,3.25) node[left]{\asy};
	\end{tikzpicture}
	\caption{MSC classes. }
	\label{fig:msc_hierarchy_fullsmall}
	\end{wrapfigure}  
In this section we show that the classes of MSCs for all the seven communication models  form  the hierarchy  shown in Fig.~\ref{fig:msc_hierarchy_fullsmall}.  Here we just give intuitive explanations for the easy cases and formal proofs for the others. Proofs for all cases can be found in \cite{longversion}.

Notice that Fig.~\ref{fig:hierarchy-of-executions} only talks about single executions; it tells us that there might be an execution that is both \none and \onen, but also an execution that is \none but not \onen, and vice versa. Consider for instance Fig.~\ref{fig:co_ex}, the linearization/execution !1!2!3?1?2?3 is both \none and \onen, !1!2!3?2?1?3 is \none  but not \onen, !1 !3 !2 ?1 ?2 ?3 is \onen but not \none. On the other hand, Fig.~\ref{fig:msc_hierarchy_fullsmall} tells us that, given a \onensymb-MSC, it is always also a \none-MSC; hence, if we are able to find a \onen linearization for an MSC, then we can be sure that a \none  linearization  exists for that MSC. This means that the computation described by a \onen MSC is always realizable using  the \none communication model.

First of all, by definition every $\oneone$-MSC is an \asy-MSC.  Fig.~\ref{fig:fully_asy_ex} shows an example of MSC that is asynchronous but not $\oneone$, hence we have $\ppMSCs \subset \asMSCs$. 
In the causally ordered communication model, any two messages addressed to the same process are received in an order that matches the causal order in which they are sent. In particular, it is easy to see that each \co-MSC is also a $\oneone$-MSC, since for any two messages sent by a process $p$ to another process $q$, the two send events are causally ordered. The MSC shown in Fig.~\ref{fig:pp_ex} is $\oneone$, but not $\co$, hence we can conclude that $\coMSCs \subset \ppMSCs$.
We now show that each $\none$-MSC is a \co-MSC.

\begin{proposition} \label{prop:mb_is_co}
	Every $\none$-MSC is a \co-MSC.
\end{proposition}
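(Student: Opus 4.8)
The plan is to unfold the definitions and argue directly. Fix a $\none$-MSC $\msc = (\Events,\procrel,\lhd,\lambda)$ together with one of its $\none$-linearizations $\linrel$ (which exists by Definition~\ref{def:mb_msc}), and show that $\msc$ satisfies the defining condition of a $\co$-MSC (Definition~\ref{def:co_msc}) — the \emph{same} linearization $\linrel$ will do all the work. The key observation is that, since $\linrel$ is a linearization, $\happensbefore \subseteq \linrel$; hence any two send events ordered by $\happensbefore$ are in particular ordered by $\linrel$, so the mailbox constraint that $\linrel$ carries already constrains causally ordered pairs of sends.

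Concretely, I would take two send events $s,s'$ with $\lambda(s)\in\pqsAct{\plh}{q}$, $\lambda(s')\in\pqsAct{\plh}{q}$, and $s\happensbefore s'$ (the case $s=s'$ being immediate: if $s$ is matched, take $r=r'$; otherwise $s'\in\Unm{\msc}$). From $s\happensbefore s'$ and $\happensbefore\subseteq\linrel$ we get $s\linrel s'$, so the $\none$-linearization property applies: either $s'\in\Unm{\msc}$, which is exactly the second alternative of Definition~\ref{def:co_msc} and we are done, or $s,s'\in\Matched{\msc}$ with $s\lhd r$, $s'\lhd r'$ and $r\linrel r'$. In the latter case the only thing left to prove is the stronger ordering $r\procrel^* r'$ (Definition~\ref{def:co_msc} asks for the process order, not merely the linearization order).

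For this last step I would use that $r$ and $r'$ both belong to $\Events_q$ (they receive messages addressed to $q$), so by Condition~(1) of Definition~\ref{def:msc} they are comparable under the total order $\procrel_q^*$ on $\Events_q$. Since $s\neq s'$ and matching send/receive pairs are unique (Conditions~(2b) and~(2c) of Definition~\ref{def:msc}), we have $r\neq r'$, hence $r\procrel_q^+ r'$ or $r'\procrel_q^+ r$. In the second case $r'\procrel^+ r\subseteq\happensbefore\subseteq\linrel$ gives $r'\linrel r$, which together with $r\linrel r'$ contradicts antisymmetry of the total order $\linrel$; therefore $r\procrel^+ r'$, in particular $r\procrel^* r'$, which is the first alternative of Definition~\ref{def:co_msc}. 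I do not foresee a real obstacle: the argument is a short syntactic manipulation. The only points needing mild care are the degenerate cases ($s=s'$, $r=r'$) and the clean use of antisymmetry of $\linrel$ — as a total order — to collapse the two $\procrel$-orderings into a contradiction.
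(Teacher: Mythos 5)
Your proposal is correct and follows essentially the same route as the paper's proof: fix a $\none$-linearization, use $\happensbefore\subseteq\linrel$ to transfer the mailbox constraint to causally ordered pairs of sends, and conclude $r\procrel^* r'$ from the fact that $r,r'$ lie on the same process line. Your extra care about the degenerate case $s=s'$ and the explicit antisymmetry argument for collapsing $r\linrel r'$ into $r\procrel^+ r'$ only spells out what the paper leaves implicit.
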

\begin{proof}
Let $\msc$ be a $\none$-MSC and $\linrel$ a $\none$-linearization of it. Recall that a linearization has to respect the happens-before partial order over $\msc$, i.e. $\happensbefore\,\subseteq\, \linrel$. Consider any two send events $s$ and $s'$, such that $\lambda(s)\in \pqsAct{\plh}{q}$, $\lambda(s')\in \pqsAct{\plh}{q}$ and $s \happensbefore s'$. Since $\happensbefore\,\subseteq\, \linrel$, we have that $s \linrel s'$ and, by the definition of $\none$-linearization, either
\begin{enumerate*}[label={(\roman*)}]
	\item $s' \in \Unm{\msc}$, or 
	\item $s,s' \in \Matched{\msc}$, $s \lhd r$, $s' \lhd r'$ and $r \linrel r'$. 
\end{enumerate*}
The former clearly respects the definition of \co-MSC, so let us focus on the latter. Note that $r$ and $r'$ are two receive events executed by the same process, hence $r \linrel r'$ implies $r \procrel^+ r'$. It follows that $\msc$ is a \co-MSC.
\end{proof}

Fig.~\ref{fig:co_no_none} shows an example of \co-MSC that is not $\none$. It is causally ordered because we cannot  find two messages, addressed to the same process, such that the corresponding send events are causally related; on the contrary, the MSC is not $\none$ because we have $!4 \mbrel !1$ and $!2 \mbrel !3$, which lead to a cyclic dependency, e.g. $!1 \procrel !2 \mbrel !3 \procrel !4 \mbrel !1$. This example and Proposition~\ref{prop:mb_is_co} prove that $\mbMSCs \subset \coMSCs$.

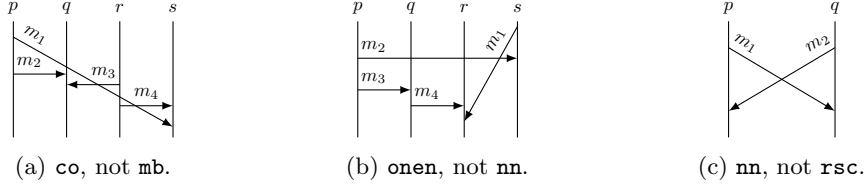
\begin{figure}[t]
	\captionsetup[subfigure]{justification=centering}
\begin{subfigure}[t]{0.3\textwidth}\centering
	\begin{tikzpicture}[scale=0.7, every node/.style={transform shape}]
		\newproc{0}{p}{-2.2};
		\newproc{1}{q}{-2.2};
		\newproc{2}{r}{-2.2};
		\newproc{3}{s}{-2.2};
		
		\newmsgm{0}{3}{-0.3}{-2.0}{1}{0.1}{black};
		\newmsgm{0}{1}{-1.0}{-1.0}{2}{0.3}{black};
		\newmsgm{2}{1}{-1.2}{-1.2}{3}{0.3}{black};
		\newmsgm{2}{3}{-1.6}{-1.6}{4}{0.5}{black};
		
	\end{tikzpicture}
	\caption{$\co$, not $\none$.}
	\label{fig:co_no_none}
\end{subfigure}
\begin{subfigure}[t]{0.3\textwidth}\centering
	\begin{tikzpicture}[scale=0.7, every node/.style={transform shape}]
		\newproc{0}{p}{-2.2};
		\newproc{1}{q}{-2.2};
		\newproc{2}{r}{-2.2};
		\newproc{3}{s}{-2.2};
	
		\newmsgm{3}{2}{-0.1}{-1.9}{1}{0.2}{black};
		\newmsgm{0}{3}{-0.7}{-0.7}{2}{0.1}{black};
		\newmsgm{0}{1}{-1.3}{-1.3}{3}{0.3}{black};
		\newmsgm{1}{2}{-1.6}{-1.6}{4}{0.3}{black};
			
	\end{tikzpicture}
	\caption{$\onensymb$, not $\nnsymb$.}
	\label{fig:onen_no_nn}
\end{subfigure}
\begin{subfigure}[t]{0.3\textwidth}\centering
	\begin{tikzpicture}[scale=0.7, every node/.style={transform shape}]
		\newproc{0}{p}{-2.2};
		\newproc{2}{q}{-2.2};	
	
		\newmsgm{0}{2}{-0.5}{-1.7}{1}{0.1}{black};
		\newmsgm{2}{0}{-0.5}{-1.7}{2}{0.1}{black};
			
	\end{tikzpicture}
	\caption{$\nnsymb$, not $\rsc$.}
	\label{fig:nn_no_rsc}
\end{subfigure}
\caption{Examples of MSCs for various communication models.}\label{fig:exmscs_2}
\end{figure}

In the $\nn$ communication model, any two messages must be received in the same order as they are sent. It is then easy to observe that each $\nnsymb$-MSC is a $\onensymb$-MSC, because each $\nnsymb$-linearization is also a $\onensymb$-linearization. Moreover, Fig.~\ref{fig:onen_no_nn} shows an example of MSC that is $\onen$ but not $\nn$, hence we have that $\nnMSCs \subset \onenMSCs$; in particular, note that for messages $m_1$ and $m_4$ we have $!1 \happensbefore !4$ and $?4 \procrel ?1$, so there cannot be a $\nnsymb$-linearization, but it is possible to find a $\onensymb$-linearization, such as $!1\;!2\;?2\;!3\;?3\;!4\;?4\;?1$. In the \rsc model, every send event is immediately followed by its corresponding receive event. \rsc is then  a special case of $\nn$ communication, and every $\rsc$-MSC is a $\nnsymb$-MSC because a $\rsc$-linearization is always also a $\nnsymb$-linearization. Besides, Fig.~\ref{fig:nn_no_rsc} shows an example of MSC that is $\nn$ but not $\rsc$, therefore $\rscMSCs \subset \nnMSCs$.

\subsection{Relation between $\onensymb$-MSCs and $\none$-MSCs}
Finally, we consider the relation between $\onensymb$-MSCs and $\none$-MSCs that is not as 
straightforward as those seen so far. We 
start by only considering MSCs without unmatched messages. 

\begin{proposition} \label{prop:onen_mb_no_unmatched}
	Every $\onensymb$-MSC without unmatched messages is a $\none$-MSC.
\end{proposition}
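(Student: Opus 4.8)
The goal is to show that any $\onensymb$-MSC $\msc$ with no unmatched messages is a $\none$-MSC. By the alternative characterizations (Definitions~\ref{def:one_n_alt} and~\ref{def:n_one_alt}), it suffices to show that if $\onenpartial = ({\procrel}\cup{\lhd}\cup{\onenrel})^*$ is a partial order, then $\mbpartial = ({\procrel}\cup{\lhd}\cup{\mbrel})^*$ is also a partial order — i.e., acyclicity of the $\mbrel$-augmented relation follows from acyclicity of the $\onenrel$-augmented relation. Since there are no unmatched messages, the "unmatched" disjuncts in the definitions of $\mbrel$ and $\onenrel$ are vacuous, so $s\mbrel s'$ means $s,s'$ are sends to a common process $q$ with $s\lhd r_1$, $s'\lhd r_2$, and $r_1\procrel^+ r_2$; and $e_1\onenrel e_2$ means $e_1,e_2$ are receives of messages sent by a common process $p$, with the matching sends ordered by $\procrel^+$.

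**Key steps.** First I would set up the correspondence between $\mbrel$-edges and $\onenrel$-edges. Given $s\mbrel s'$ witnessed by receptions $r_1\procrel^+ r_2$ on process $q$: here $r_1$ and $r_2$ are receives, but they need not have a common sender, so this is not directly an $\onenrel$-edge. The plan is instead to build, from a hypothetical $\mbpartial$-cycle, a $\onenpartial$-cycle, contradicting that $\onenpartial$ is a partial order. Suppose there is a cycle $x_0 \,R\, x_1 \,R\, \cdots \,R\, x_n = x_0$ where each $R$-step is a $\procrel$, $\lhd$, or $\mbrel$ step, and at least one step is a genuine $\mbrel$ step (otherwise the cycle already lives in $({\procrel}\cup{\lhd})^+ \subseteq {\happensbefore}$, impossible since $\msc$ is an MSC). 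The idea is to replace each $\mbrel$-step $s\mbrel s'$ by a detour through the receive side: $s \lhd r_1 \procrel^+ r_2 \;(\lhd^{-1})\; s'$. This uses $\lhd$ forward and $\lhd$ backward, so it does not stay inside $\onenpartial$ as stated. The real mechanism must be: translate a $\mbrel$-cycle among sends into a cycle among the corresponding receives using only $\procrel$ and $\onenrel$. Concretely, apply $\lhd$ to the whole send-cycle to get a corresponding structure on receives: a $\procrel/\lhd^{-1}$-chain becomes, on receives, governed by $\onenrel$ (since $s\procrel^+ s'$ between sends from the same process gives $\onenrel$ between their receives, when the receives share... ) — and here I would need to be careful, because $\onenrel$ relates receives only when both are addressed-from the same sender.

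**The main obstacle.** The crux — and where I expect the real work — is bridging the gap that $\mbrel$ talks about receives on a common \emph{receiver} while $\onenrel$ talks about receives with a common \emph{sender}. The honest route is probably: take a $\mbpartial$-cycle, project it to a cycle on receive events using the fact that in a cycle of sends every "ordinary" $\mbrel$ or $\procrel^+$ or $\lhd$ segment between two matched sends maps, via $\lhd$, to a $\procrel^+$ relation between the matching receives (because $r_1\procrel^+ r_2$ is exactly the $\mbrel$ witness, and $s\procrel^+ s' \Rightarrow$ their receives are $\onenrel$-related on the sender side only if same sender). So I would instead argue directly: show that $\onenpartial$ being a partial order forces the receive events to be totally orderable by a relation extending $\procrel\cup(\lhd^{-1};\onenrel;\lhd)$, then read off a $\none$-linearization by scheduling each send immediately... no — rather, build the $\none$-linearization from a $\onensymb$-linearization $\linrel$ directly: I claim $\linrel$ itself is a $\none$-linearization. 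Take sends $s\linrel s'$ to a common $q$ with matching $r,r'$; I must show $r\linrel r'$. If $r'\linrel r$, then since $r,r'\in\Events_q$ we have $r'\procrel^+ r$, whence (by definition of $\onenrel$ on the \emph{sender} side? no) — this still needs $s,s'$ same sender. So the witness that makes this work must be: from $r'\procrel^+ r$ and $s\lhd r$, $s'\lhd r'$, one does \emph{not} get a $\onenrel$-edge, so $\linrel$ need not be a $\none$-linearization, and a different linearization must be exhibited. Hence the genuine difficulty is constructing the $\none$-linearization (or equivalently proving $\mbpartial$ acyclic) from $\onenpartial$'s acyclicity, and I expect the proof to go by contradiction on a minimal $\mbpartial$-cycle, extracting from it — via the $\lhd$-images of its send events and the structure of process lines — a $\onenpartial$-cycle, with the no-unmatched-messages hypothesis ensuring every send in the cycle has a receive to push the argument through.
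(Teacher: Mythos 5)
Your proposal identifies the right strategy --- argue the contrapositive and turn a $\mbpartial$-cycle into a $\onenpartial$-cycle, which is exactly what the paper does --- but it stops precisely where the actual work lies, and so it has a genuine gap. You correctly discard the attempt to reuse a $\onensymb$-linearization directly as a $\none$-linearization (indeed $r'\procrel^+ r$ gives no $\onenrel$-edge unless the two sends share a sender), and you correctly observe that replacing an $\mbrel$-step by $s \lhd r_1 \procrel^+ r_2$ followed by $\lhd$ backwards does not stay inside $\onenpartial$. But after naming this ``common receiver vs.\ common sender'' obstacle you only assert that some mechanism ``must be'' found and that you ``expect'' a contradiction argument on a minimal cycle to work; no such mechanism is exhibited, so nothing is actually proved.

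The missing idea is the case analysis that bridges the two relations. Write the $\mbpartial$-cycle using only $\mbrel$-steps and $\happensbeforestrict$-steps between send events, never two consecutive $\happensbefore$. An $\mbrel$-step $s_1 \mbrel s_2$ gives, by definition and absence of unmatched messages, $r_1 \procrel^+ r_2$ for the matching receives. For a step $s_1 \happensbeforestrict s_2$, look at how the causal path leaves $s_1$'s process: either the first message edge used is $s_1$'s own, i.e.\ $s_1 \lhd r_1 \happensbefore s_2 \lhd r_2$, giving $r_1 \happensbefore r_2$; or the path first moves down the process line to a later send $s_k$ of the \emph{same} process, $s_1 \procrel^+ s_k \lhd r_k \happensbefore s_2 \lhd r_2$, and then the same-sender clause of $\onenrel$ applies to the receives $r_1, r_k$ (this is where the hypothesis that $s_1$ is matched is used), giving $r_1 \onenrel r_k \happensbefore r_2$. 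In every case a step between consecutive sends of the cycle yields $r_1 \onenpartial r_2$ for the matching receives, so the whole cycle projects to a $\onenpartial$-cycle and $\msc$ is not a $\onensymb$-MSC. This decomposition of the causal path at the first message edge is the key step your proposal leaves open; with it supplied, your outline coincides with the paper's proof.
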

\begin{proof}
We show that the contrapositive is true, i.e., if an MSC is not mailbox (and it does not have unmatched messages), it is also not $\onen$. Suppose $\msc$ is an asynchronous MSC, but not mailbox. There must be a cycle $\xi$ such that  
$e \mbpartialstrict e$, for some event $e$. 
We can always explicitly  write a cycle $e \mbpartialstrict e$ only using $\mbrel$ and $\happensbeforestrict$. For instance, there might be a cycle $e \mbpartialstrict e$ because we have that $e \mbrel f \happensbeforestrict g \mbrel h \mbrel i \happensbeforestrict e$. Consider any two adjacent events $s_1$ and $s_2$ in the cycle $\xi$, where $\xi$ has been written using only $\mbrel$ and $\happensbeforestrict$, and we never have two consecutive $\happensbefore$. This is always possible, since $a \happensbefore b \happensbefore c$ is written as $a \happensbefore c$. We have two cases:
\begin{enumerate}
	\item $s_1 \mbrel s_2$. We know, by definition of $\mbrel$, that $s_1$ and $s_2$ must be two send events and that $r_1 \procrel^+ r_2$, where $r_1$ and $r_2$ are the receive events that match with $s_1$ and $s_2$, respectively (we are not considering unmatched messages by hypothesis).
	\item $s_1 \happensbeforestrict s_2$. Since $\msc$ is asynchronous by hypothesis, $\xi$ has to contain at least one $\mbrel$. If that was not the case, $\happensbefore$ would also be cyclic and $\msc$ would not be an asynchronous MSC. Recall that we also wrote $\xi$ in such a way that we do not have two consecutive $\happensbefore$. It is not difficult to see that $s_1$ and $s_2$ have to be send events, since they belong to $\xi$. We have two cases:
	\begin{enumerate}
		\item $r_1$ is in the causal path, i.e. $s_1 \lhd r_1 \happensbefore s_2$. In particular, note that $r_1 \happensbefore r_2$.
		\item $r_1$ is not in the causal path, hence there must be a message $m_k$ sent by the same process that sent $s_1$, such that $s_1 \procrel^+ s_k \lhd r_k \happensbefore s_2 \lhd r_2$, where $s_k$ and $r_k$ are the send and receive events associated with $m_k$, respectively. Since messages $m_1$ and $m_k$ are sent by the same process and $s_1 \procrel^+ s_k$, we should have $r_1 \onenrel r_k$, according to the $\onen$ semantics. In particular, note that we have $r_1 \onenrel r_k \happensbefore r_2$.
	\end{enumerate}
	In both case (a) and (b), we conclude that $r_1 \onenpartial r_2$. 
\end{enumerate}
Notice that, for either case, a relation between two send events $s_1$ and $s_2$ (i.e., $s_1 \mbrel s_2$ or $s_1 \happensbefore s_2$) always implies a relation between the respective receive events $r_1$ and $r_2$, according to the $\onen$ semantics. It follows that $\xi$, which is a cycle for the $\mbpartial$ relation, always implies a cycle for the $\onenpartial$ relation (and if $\onenpartial$ is cyclic, $\msc$ is not a $\onensymb$-MSC), as shown by the following example. Let $\msc$ be a non-mailbox MSC, and suppose we have a cycle $s_1 \mbrel s_2 \mbrel s_3 \happensbefore s_4 \mbrel s_5 \happensbefore s_1$. $s_1 \mbrel s_2$ falls into case (1), so it implies $r_1 \procrel^+ r_2$. The same goes for $s_2 \mbrel r_3$, which implies $r_2 \procrel^+ r_3$. $s_3 \happensbefore s_4$ falls into case (2), and implies that $r_3 \onenpartial r_4$. $s_4 \mbrel s_5$ falls into case (1) and it implies $r_4 \procrel^+ r_5$. $s_5 \happensbefore s_1$ falls into case (2) and implies that $r_5 \onenpartial r_1$. Putting all these implications together, we have that $r_1 \procrel^+ r_2 \procrel^+ r_3 \onenpartial r_4 \procrel^+ r_5 \onenpartial r_1$, which is a cycle for $\onenpartial$. Note that, given any cycle for $\mbpartial$, we are always able to apply this technique to obtain a cycle for $\onenpartial$.
\end{proof}

The opposite direction is also true and the proof (see \cite{longversion}) uses the same technique to prove that a cycle for $\onenpartial$ always implies a cycle for $\mbpartial$.

\begin{restatable}{proposition}{mbonennounmatched} 
\label{prop:mb_onen_no_unmatched}
	Every $\none$-MSC without unmatched messages is a $\onensymb$-MSC.
\end{restatable}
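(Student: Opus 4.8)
The plan is to prove the contrapositive, in close analogy with the proof of Proposition~\ref{prop:onen_mb_no_unmatched} but running the construction from the receive side to the send side. Concretely, I assume $\msc$ has no unmatched messages and is not a $\onensymb$-MSC; by Definition~\ref{def:one_n_alt} this means $\onenpartial$ is cyclic, so there is a nontrivial $\onenpartial$-cycle, and by Definition~\ref{def:n_one_alt} it suffices to exhibit an event $e$ with $e \mbpartialstrict e$. Thus the whole task reduces to turning a $\onenpartial$-cycle into a $\mbpartialstrict$-cycle.

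The first step is to normalise the cycle. Since $\procrel \cup \lhd \subseteq \happensbeforestrict$ and $\happensbeforestrict$ is transitive, any $(\procrel \cup \lhd \cup \onenrel)^+$-cycle can be rewritten as an alternation of $\onenrel$-edges and $\happensbeforestrict$-edges with no two consecutive $\happensbeforestrict$-edges; it must contain at least one $\onenrel$-edge, since otherwise $\happensbeforestrict$ would be cyclic, contradicting that $\msc$ is an MSC. Because there are no unmatched messages, the first clause of Definition~\ref{def:one_n_alt} is vacuous, so both endpoints of every $\onenrel$-edge are receive events; together with the ``no two consecutive $\happensbeforestrict$'' shape this makes every vertex of the normalised cycle a receive event. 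I write the cycle as $r_0, r_1, \dots, r_k = r_0$ and let $s_i$ be the unique send event with $s_i \lhd r_i$; then $s_0 = s_k$, and consecutive $r_i$ (hence consecutive $s_i$) are distinct.

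The core step is the claim that every edge of the normalised cycle, say from $r$ to $r'$ with matching sends $s \lhd r$ and $s' \lhd r'$, forces $s \mbpartial s'$. If the edge is an $\onenrel$-edge then $s \procrel^+ s'$ by definition, so $s \mbpartialstrict s'$. If the edge is a $\happensbeforestrict$-edge, I look at how the causal path $r \happensbeforestrict r'$ arrives at $r'$: if it arrives through the message edge $s' \lhd r'$, then $s \lhd r \happensbefore s'$ gives $s \happensbeforestrict s' \subseteq \mbpartialstrict$; otherwise the path makes its last inter-process jump onto $r'$'s process line via some message edge $s_j \lhd r_j$ with $r_j$ on $r'$'s process and $r_j \procrel^+ r'$, so that $s_j$ and $s'$ are both sent to $r'$'s process with $s_j \lhd r_j \procrel^+ r'$ and $s' \lhd r'$, whence $s_j \mbrel s'$, while $s \lhd r \happensbefore s_j$, giving $s \mbpartialstrict s'$ (the case where $r,r'$ lie on the same process is subsumed here with $r_j = r$). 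Since all consecutive $s_i$ are distinct, each $s_i \mbpartial s_{i+1}$ above is actually $s_i \mbpartialstrict s_{i+1}$, and chaining around the cycle by transitivity of $\mbpartialstrict$ yields $s_0 \mbpartialstrict s_0$, so $\msc$ is not a $\none$-MSC.

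I expect the only delicate point to be the last-inter-process-jump argument in the $\happensbeforestrict$ case: one must argue cleanly that any causal path ending at a receive event either ends with that event's own message edge or enters the receiver's process line through some earlier message edge, so that the witness $r_j$ and hence the $\mbrel$-link genuinely exist. This is exactly the dual of case (2b) in the proof of Proposition~\ref{prop:onen_mb_no_unmatched}, and the ``no two consecutive $\happensbefore$'' normalisation keeps the case split finite. Everything else, namely transitivity, the passage between $\mbpartial$ and $\mbpartialstrict$, uniqueness of the matching send, and the fact that an all-$\happensbeforestrict$ cycle is impossible, is routine.
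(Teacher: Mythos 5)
Your proposal is correct and follows essentially the same route as the paper's proof: argue the contrapositive, normalise the $\onenpartial$-cycle into alternating $\onenrel$- and $\happensbeforestrict$-edges between receive events (using the absence of unmatched messages), and translate each such edge into an $\mbpartial$-relation between the matching sends — via $\procrel^+$ for $\onenrel$-edges and, for $\happensbeforestrict$-edges, via either $s \happensbefore s'$ or an intermediate message into the receiver's process yielding an $\mbrel$-link — so that the cycle transfers to $\mbpartialstrict$. Your explicit treatment of the last inter-process jump and of the case where $r,r'$ share a process is a slightly more careful rendering of the paper's case~(b), but it is the same argument.
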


Interestingly enough, Proposition~\ref{prop:onen_mb_no_unmatched} and \ref{prop:mb_onen_no_unmatched} show that the classes of $\none$-MSCs and $\onensymb$-MSCs coincide if we do not allow unmatched messages.  This changes when we add unmatched messages into the mix. However, Proposition~\ref{prop:onen_mb_no_unmatched} still holds.

\begin{proposition} \label{prop:onen_mb_unmatched}
	Every $\onensymb$-MSC is a $\none$-MSC.
\end{proposition}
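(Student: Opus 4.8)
The plan is to reduce to the unmatched--free case treated in Proposition~\ref{prop:onen_mb_no_unmatched} and then re--insert the unmatched sends. Given a $\onensymb$-MSC $\msc$, let $\msc'$ be the MSC obtained from $\msc$ by deleting every event of $\Unm{\msc}$ (and repairing $\procrel$ on the affected processes). First I would check that $\msc'$ is again a $\onensymb$-MSC: restricting a $\onensymb$-linearization of $\msc$ to the events of $\msc'$ gives a linearization of $\msc'$ that still satisfies Definition~\ref{def:one_n}, since deleting unmatched sends leaves $\lhd$ unchanged and $\procrel^{+}$ restricted to the surviving events is included in $\procrel^{+}$ of $\msc$. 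As $\msc'$ has no unmatched messages, Proposition~\ref{prop:onen_mb_no_unmatched} yields that $\msc'$ is a $\none$-MSC.

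The crucial structural fact, and the place where the hypothesis "$\msc$ is $\onensymb$" is really used, is: if $u\in\Unm{\msc}$, then every event $e$ with $u\happensbeforestrict e$ lies on the process of $u$ and is either an unmatched send or a receive event. I would prove this by induction on the length of a causal path $u=e_{0}\to e_{1}\to\dots\to e_{n}=e$: $u$ is unmatched so $e_{0}\to e_{1}$ must be a $\procrel$-step, hence $e_{1}$ is on the process of $u$; by the remark following Definition~\ref{def:one_n} a $\onensymb$-MSC admits no pair of sends $s\procrel^{+}s'$ of one process with $s$ unmatched and $s'$ matched, so $e_{1}$ is either a receive or an unmatched send; such events again have no outgoing $\lhd$-edge, so the next step is again a $\procrel$-step and the argument propagates. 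In particular no unmatched send causally precedes any matched send.

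Next I would build a $\none$-linearization of $\msc$ directly. Let $Q$ be the union of $\procrel$, $\lhd$, the ``matched part'' of $\mbrel$ (pairs $s\mbrel s'$ coming from $s\lhd r$, $s'\lhd r'$ and $r\procrel^{+}r'$), and the set $M$ of pairs $(s,u)$ with $s\in\Matched{\msc}$, $u\in\Unm{\msc}$ and $s,u$ sending to the same process. I claim $Q$ is acyclic. No cycle of $Q$ can pass through an unmatched send $u$: the only outgoing $Q$-edges of $u$, and more generally of any unmatched send or any receive event, are $\procrel$-edges (such an event has no outgoing $\lhd$-edge and is neither a source of $M$ nor of the matched part of $\mbrel$); and by the structural fact every vertex reached from $u$ along $\procrel$-edges is again an unmatched send or a receive event of the process of $u$, so the portion of the cycle from $u$ back to $u$ would be a $\procrel^{+}$-loop on a single process, which is impossible. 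Therefore every $Q$-cycle lies among matched sends and receive events, i.e.\ among the events of $\msc'$; but there $Q$ restricts exactly to $\procrel\cup\lhd\cup\mbrel$ computed on $\msc'$, whose transitive closure is acyclic because $\msc'$ is a $\none$-MSC. Contradiction, so $Q$ is acyclic.

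Finally, take any linearization $\linrel$ of $\msc$ with $Q^{+}\subseteq\linrel$; this exists since $Q$ is acyclic, and $\linrel$ is a genuine linearization because $\happensbefore\subseteq Q^{+}$. It is a $\none$-linearization: for two sends $a,b$ to the same process with $a\linrel b$, if $b$ is unmatched we are done; otherwise $b$ is matched, and $a$ cannot be unmatched (that would give $(b,a)\in M$, hence $b\linrel a$), so $a,b$ are both matched; and if their receptions $r,r'$ satisfied $r'\procrel^{+}r$ we would get the matched-$\mbrel$ pair $(b,a)$, again forcing $b\linrel a$; thus $r\procrel^{+}r'$ and $r\linrel r'$. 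Hence $\msc$ is a $\none$-MSC. The main obstacle is the acyclicity of $Q$, and the structural lemma is exactly what makes that argument close; the rest is routine bookkeeping about how deleting unmatched sends interacts with $\procrel$, $\lhd$ and $\mbrel$.
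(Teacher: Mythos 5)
Your proof is correct, but it is organized quite differently from the one in the paper. The paper argues by contrapositive entirely at the level of cycles: it takes a cycle for $\mbpartialstrict$, writes it with $\mbrel$ and $\happensbeforestrict$ steps, and extends the case analysis of Proposition~\ref{prop:onen_mb_no_unmatched} with four new cases involving unmatched sends, showing two of them cannot occur and that the other two directly force a $\onenpartial$-cycle. You instead prune the unmatched sends to get an MSC $\msc'$, check that pruning preserves the $\onensymb$ property, apply Proposition~\ref{prop:onen_mb_no_unmatched} as a black box to $\msc'$, and then re-insert the unmatched sends via your structural lemma (in a $\onensymb$-MSC nothing causally after an unmatched send can be a matched send, so the causal future of an unmatched send stays on its own process). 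This lemma is the same key observation as the paper's case~(5) -- both rest on the remark after Definition~\ref{def:one_n} -- but you use it to confine cycles of $Q={\procrel}\cup{\lhd}\cup{\mbrel}$ rather than to exhibit a $\onenpartial$-cycle, and you then construct a $\none$-linearization explicitly and verify Definition~\ref{def:mb_msc} directly. Your route is more modular and self-contained (it gives the linearization rather than only a contradiction), at the price of the extra bookkeeping about how deletion interacts with $\procrel$ and $\mbrel$, and of an implicit appeal to the equivalence of Definitions~\ref{def:mb_msc} and~\ref{def:n_one_alt} when you pass from ``$\msc'$ is a $\none$-MSC'' to acyclicity of its relation ${\procrel}\cup{\lhd}\cup{\mbrel}$; that equivalence is proved in the paper, so this is legitimate, but it is worth stating the dependence. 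Two cosmetic points: your $Q$ is exactly ${\procrel}\cup{\lhd}\cup{\mbrel}$ (your set $M$ is just the matched-before-unmatched clause of $\mbrel$), so what you prove is precisely that $\mbpartialstrict$ is acyclic, i.e.\ Definition~\ref{def:n_one_alt}; and ``$Q$ restricts exactly to'' should be an inclusion up to transitive closure, since $\procrel$ of $\msc'$ contains the jump edges over deleted events -- the direction you need (a $Q$-cycle among events of $\msc'$ yields a cycle for the relation computed on $\msc'$) does hold.
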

\begin{proof}
Let $\msc$ be an asynchronous MSC. The proof proceeds as for Proposition~\ref{prop:onen_mb_no_unmatched}, but unmatched messages introduce some additional cases. Consider any two adjacent events $s_1$ and $s_2$ in a cycle $\xi$ for $\mbpartialstrict$, where $\xi$ has been written using only $\mbrel$ and $\happensbeforestrict$, and we never have two consecutive $\happensbeforestrict$. These are some additional cases:
\begin{enumerate}\setcounter{enumi}{2}
	\item $u_1 \mbrel s_2$, where $u_1$ is the send event of an unmatched message. This case never happens because of how $\mbrel$ is defined.
	\item $u_1 \happensbefore u_2$, where $u_1$ and $u_2$ are both send events of unmatched messages. Since both $u_1$ and $u_2$ are part of the cycle $\xi$, there must be an event $s_3$ such that $u_1 \happensbefore u_2 \mbrel s_3$. However, $u_2 \mbrel s_3$ falls into case (3), which can never happen.
	\item $u_1 \happensbefore s_2$, where $u_1$ is the send event of an unmatched message and $s_2$ is the send event of a matched message. Since we have a causal path between $u_1$ and $s_2$, there has to be a message $m_k$, sent by the same process that sent $m_1$, such that $u_1 \procrel^+ s_k \lhd r_k \happensbefore s_2 \lhd r_2$\footnote{Note that we can have $m_k = m_2$}, where $s_k$ and $r_k$ are the send and receive events associated with $m_k$, respectively. Since messages $m_1$ and $m_k$ are sent by the same process and $m_1$ is unmatched, we should have $s_k \onenrel u_1$, according to the $\onen$ semantics, but $u_1 \procrel^+ s_k$. It follows that if $\xi$ contains $u_1 \happensbefore s_2$, we can immediately conclude that $\msc$ is not a $\onensymb$-MSC.
	\item $s_1 \mbrel u_2$,  where $s_1$ is the send event of a matched message and $u_2$ is the send event of an unmatched message. Since both $s_1$ and $u_2$ are part of a cycle, there must be an event $s_3$ such that $s_1 \mbrel u_2 \happensbefore s_3$; we cannot have $u_2 \mbrel s_3$, because of case (3). $u_2 \happensbefore s_3$ falls into case (5), so we can conclude that $\msc$ is not a $\onensymb$-MSC.
\end{enumerate}
We showed that cases (3) and (4) can never happen, whereas  (5) and (6)  imply that $\msc$ is not $\onen$. If we combine them with the cases described in Proposition~\ref{prop:onen_mb_no_unmatched} we have the full proof.
\end{proof}

The MSC in Fig.~\ref{fig:pp_um_ex} shows a simple example of an MSC with unmatched messages that is $\none$ but not $\onensymb$. This, along with Proposition~\ref{prop:onen_mb_unmatched}, effectively shows that $\onenMSCs \subset \mbMSCs$.

\section{Application: synchronizability and bounded model-checking}\label{sec:checking}

In this section, we show how the MSO characterization  induces several decidability results for
synchronizability and bounded model-checking problems on systems of
communicating finite state machines. A communicating finite state machine is a finite state automaton labeled with send
and receive actions; a system $\Sys$ is a finite collection of such machines.
An MSC $\msc$ is an asynchronous behavior of $\Sys$ if every process
time line of $\msc$ is accepted by its corresponding process automaton
\ifappendix
(see \cite{longversion} for a formal definition of these notions).
\fi
We write $L_{\asy}(\Sys)$ to denote the set of asynchronous behaviors of $\Sys$,
and we write $L_{\comsymb}(\Sys)$ to denote the restriction of $L_{\asy}$ to a specific communication model $\comsymb$,
i.e., $L_{\comsymb}(\Sys)=L_{\asy}(\Sys)\cap\MSCclassofcom{\comsymb}$.

In general, even simple verification problems, e.g.,  control-state reachability, are undecidable for
communicating systems~\cite{DBLP:journals/jacm/BrandZ83}, under all communication models (except
$\rsc$, which we will not consider anymore from now on).
They may become decidable if we consider only a certain class of behaviors. This motivates the following
definition of  generic \emph{bounded model-checking} problem.   
Let $\aMSCclass$ be a class of $\MSCs$s,  the $\aMSCclass$-bounded model-checking problem
for a communication model $\comsymb \in \{\asy, \oneone, \co, \none, \onensymb, \nnsymb \}$ is: given a system $\Sys$ and a MSO
specification $\amsoformula$, decide whether
$L_{\comsymb}(\Sys)\cap \aMSCclass \subseteq L(\amsoformula)$.
Here, we  consider only classes $\aMSCclass$ of MSCs that describe behaviors that are as close as possible to synchronous ones. So the bounded model-checking problem
corresponds to an under-approximation of the standard model-checking problem where the
system is assumed to be "almost synchronous". The question of the completeness of this under-approximation, 
i.e., whether $L_{\comsymb}(\Sys)\subseteq \aMSCclass$, will be 
referred to as the "synchronizability problem".

Bollig~\emph{et al.}~\cite{BolligGFLLS21} introduced a general framework that allows us to derive decidability results 
for the bounded model-checking and synchronizability problems
for various classes of MSCs $\aMSCclass$. 
Here, we have managed to make this framework parametric in the communication
model. To this aim, we require that  the communication
model, combined with the bounding class $\aMSCclass$,  enforces a bounded treewidth of the
MSCs, which is not always the case. Moreover a key lemma in the framework
of  Bollig~\emph{et al.} relied on the existence of "borderline violations", which was granted
by a form of prefix closure of the MSCs of a given class. However, this prefix closure property
does not hold for all communication models, and these models must be treated with specific
techniques.

\paragraph{\bf Special treewidth and bounded model-checking}
\emph{Special treewidth} (STW)
is a graph measure that  indicates how close
a graph is to a tree. 
An MSC is a graph where the nodes are the events and the edges are represented by the $\procrel$ and the $\lhd$ relations. Similarly to what has been done by Bollig~\emph{et al.} in \cite{BolligGFLLS21}, but adapted to our generic framework, we adopt a  game-based definition for special treewidth:
Adam and Eve play a turn based "decomposition game" on an MSC $\msc = (\Events, \procrel, \lhd, \lambda)$. $\msc$
Eve starts to play and does a move, which consists in the following steps:
\begin{enumerate}
	\item marking some events of $\msc$, resulting in the \emph{marked MSC fragment} $(M, U')$, where $U' \subseteq \Events$ is the subset of marked events,
	\item removing edges whose both endpoints are marked, in such a way that the resulting MSC is disconnected (i.e. there are at least two different connected components),
	\item splitting $(M, U)$ in $(M_1, U_1)$ and $(M_2, U_2)$ such that $M$ is the disjoint (unconnected) union of $M_1$ and $M_2$
	and marked nodes are inherited.
\end{enumerate}
Once Eve does her move, it is Adam's turn. Adam  chooses one of the two marked MSC fragments, either $(M_1, U_1)$ or $(M_2, U_2)$. Now it is again Eve's turn, and she has to do a move on the marked MSC fragment that was chosen by Adam. The game continues in alternating turns between the two players until they reach a point where all the events on the current marked MSC fragment are marked.
For $k \in \N$,  the game is $k$-winning for Eve if she has a strategy that allows her, independently of Adam's moves, to end the game in a way that every marked MSC fragment visited during the game has at most $k+1$ marked events. The goal of Eve is to keep $k$ as low as possible. 

The special treewidth of an MSC is the least $k$ such that
the associated game is $k$-winning for Eve
(see for instance~\cite{DBLP:journals/corr/abs-1904-06942}).
The set of MSCs whose special treewidth is at most $k$ is denoted by $\stwMSCs{k}$. It is easy to check that trees have a special treewidth of 1.

\begin{example}
	Let $\msc$ the MSC of the Fig.~\ref{fig:pp_ex}. In this example, we show that $\msc$ has a special treewidth of at most 3, since Eve is able to find a strategy that leads to a 3-winning game.  We use colors to mark events. Eve starts by marking 4 events. The edges whose both endpoints are marked can be removed (dotted edges in the figure) and the graph becomes disconnected. Eve then splits the graph in 2 and Adam has to choose.
	Suppose the Adam picks the subgraph with the red and yellow events already marked (top branch in the figure). Eve can mark the third event and, by doing so, the game ends.
	Suppose Adam chooses the subgraph with the blue and green events (bottom branch). Eve marks the two nodes in the bottom, removes 3 edges, and splits the graph in two. Note that one of the two subgraphs already has all events marked, so Adam picks the other one (top branch).
	Eve simply marks the missing event and the game ends. This is a 3-winning game for Eve since, independently of Adam's choices, we have at most 4 marked event at each step.
	Fig.~\ref{fig:stw-ex} shows an example of a 3-winning game for the MSC in Fig.~\ref{fig:pp_ex}.

\end{example}

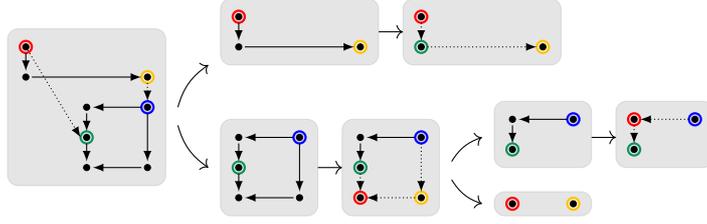
\begin{figure}[t]
	\begin{center}
		\begin{tikzpicture}[scale = .8]
			\begin{scope}
				\draw[gray,fill = gray, rounded corners, opacity=0.2] (-.3,.3) rectangle (2.3,-2.3);
				\draw[red, thick, fill = red!20] (0,0) circle (0.1);
				\draw[Green, thick, fill = Green!20] (1,-1.5) circle (0.1);
				\draw[Yellow, thick, fill = Yellow!20] (2,-.5) circle (0.1);
				\draw[blue, thick, fill = blue!20] (2,-1) circle (0.1);

				\draw[black, fill = black] (0,0) circle (.05);
				\draw[black, fill = black] (0,-.5) circle (.05);
				\draw[black, fill = black] (2,-.5) circle (.05);
				\draw[black, fill = black] (1,-1) circle (.05);
				\draw[black, fill = black] (2,-1) circle (.05);
				\draw[black, fill = black] (1,-1.5) circle (.05);
				\draw[black, fill = black] (1,-2) circle (.05);
				\draw[black, fill = black] (2,-2) circle (.05);

				\draw[>=latex, ->] (0,-.1) to (0, -.4);
				\draw[>=latex, ->,  densely dotted] (0,0) to (.9, -1.5);
				\draw[>=latex, ->] (.1,-.5) to (1.9, -.5);
				\draw[>=latex, ->,  densely dotted] (2,-.6) to (2, -.9);
				\draw[>=latex, ->] (1.9,-1) to (1.1, -1);
				\draw[>=latex, ->] (1,-1.1) to (1, -1.4);
				\draw[>=latex, ->] (1,-1.6) to (1, -1.9);
				\draw[>=latex, ->] (2,-1.1) to (2, -1.9);
				\draw[>=latex, ->] (1.9,-2) to (1.1, -2);

				\draw[->] (2.5,-1) to[bend left = 20] (3,-.3);
				\draw[->] (2.5,-1.3) to[bend right = 20] (3,-2);
				\draw[->] (7,-1.9) to[bend left = 20] (7.5,-1.5);
				\draw[->] (7,-2.2) to[bend right = 20] (7.5,-2.6);

				\draw[=>latex, ->] (5.8,.25) to (6.2,.25);
				\draw[=>latex, ->] (4.8,-2) to (5.2,-2);
				\draw[=>latex, ->] (9.3,-1.5) to (9.7,-1.5);
			\end{scope}
			\begin{scope}[shift = {(3.5,.5)}]
				\draw[gray,fill = gray, rounded corners, opacity=0.2] (-.3,.3) rectangle (2.3,-.8);
				\draw[red, thick, fill = red!20] (0,0) circle (0.1);
				\draw[Yellow, thick, fill = Yellow!20] (2,-.5) circle (0.1);

				\draw[black, fill = black] (0,0) circle (.05);
				\draw[black, fill = black] (0,-.5) circle (.05);
				\draw[black, fill = black] (2,-.5) circle (.05);

				\draw[>=latex, ->] (0,-.1) to (0, -.4);
				\draw[>=latex, ->] (.1,-.5) to (1.9, -.5);
			\end{scope}
			\begin{scope}[shift = {(6.5,.5)}]
				\draw[gray,fill = gray, rounded corners, opacity=0.2] (-.3,.3) rectangle (2.3,-.8);
				\draw[red, thick, fill = red!20] (0,0) circle (0.1);
				\draw[Green, thick, fill = Green!20] (0,-.5) circle (0.1);
				\draw[Yellow, thick, fill = Yellow!20] (2,-.5) circle (0.1);

				\draw[black, fill = black] (0,0) circle (.05);
				\draw[black, fill = black] (0,-.5) circle (.05);
				\draw[black, fill = black] (2,-.5) circle (.05);

				\draw[>=latex, ->, densely dotted	] (0,-.1) to (0, -.4);
				\draw[>=latex, ->, densely dotted	] (.1,-.5) to (1.9, -.5);
			\end{scope}
			\begin{scope}[shift = {(2.5,-.5)}]
				\draw[gray,fill = gray, rounded corners, opacity=0.2] (.7,-.7) rectangle (2.3,-2.3);
				\draw[Green, thick, fill = Green!20] (1,-1.5) circle (0.1);
				\draw[blue, thick, fill = blue!20] (2,-1) circle (0.1);

				\draw[black, fill = black] (1,-1) circle (.05);
				\draw[black, fill = black] (2,-1) circle (.05);
				\draw[black, fill = black] (1,-1.5) circle (.05);
				\draw[black, fill = black] (1,-2) circle (.05);
				\draw[black, fill = black] (2,-2) circle (.05);
				\draw[>=latex, ->] (1.9,-1) to (1.1, -1);
				\draw[>=latex, ->] (1,-1.1) to (1, -1.4);
				\draw[>=latex, ->] (1,-1.6) to (1, -1.9);
				\draw[>=latex, ->] (2,-1.1) to (2, -1.9);
				\draw[>=latex, ->] (1.9,-2) to (1.1, -2);
			\end{scope}
			\begin{scope}[shift = {(4.5,-.5)}]
				\draw[gray,fill = gray, rounded corners, opacity=0.2] (.7,-.7) rectangle (2.3,-2.3);
				\draw[red, thick, fill = red!20] (1,-2) circle (0.1);
				\draw[Green, thick, fill = Green!20] (1,-1.5) circle (0.1);
				\draw[Yellow, thick, fill = Yellow!20] (2,-2) circle (0.1);
				\draw[blue, thick, fill = blue!20] (2,-1) circle (0.1);

				\draw[black, fill = black] (1,-1) circle (.05);
				\draw[black, fill = black] (2,-1) circle (.05);
				\draw[black, fill = black] (1,-1.5) circle (.05);
				\draw[black, fill = black] (1,-2) circle (.05);
				\draw[black, fill = black] (2,-2) circle (.05);
				\draw[>=latex, ->] (1.9,-1) to (1.1, -1);
				\draw[>=latex, ->] (1,-1.1) to (1, -1.4);
				\draw[>=latex, ->,  densely dotted] (1,-1.6) to (1, -1.9);
				\draw[>=latex, ->,  densely dotted] (2,-1.1) to (2, -1.9);
				\draw[>=latex, ->,  densely dotted] (1.9,-2) to (1.1, -2);
			\end{scope}

			\begin{scope}[shift = {(7,-.5)}]
				\draw[gray,fill = gray, rounded corners, opacity=0.2] (.7,-.4) rectangle (2.3,-1.5);
				\draw[blue, thick, fill = blue!20] (2,-.7) circle (0.1);
				\draw[Green, thick, fill = Green!20] (1,-1.2) circle (0.1);

				\draw[black, fill = black] (1,-.7) circle (.05);
				\draw[black, fill = black] (2,-.7) circle (.05);
				\draw[black, fill = black] (1,-1.2) circle (.05);

				\draw[>=latex, ->] (1.9,-.7) to (1.1, -.7);
				\draw[>=latex, ->] (1,-.8) to (1, -1.1);
			\end{scope}

			\begin{scope}[shift = {(9,-.5)}]
				\draw[gray,fill = gray, rounded corners, opacity=0.2] (.7,-.4) rectangle (2.3,-1.5);
				\draw[red, thick, fill = red!20] (1,-.7) circle (0.1);
				\draw[blue, thick, fill = blue!20] (2,-.7) circle (0.1);
				\draw[Green, thick, fill = Green!20] (1,-1.2) circle (0.1);

				\draw[black, fill = black] (1,-.7) circle (.05);
				\draw[black, fill = black] (2,-.7) circle (.05);
				\draw[black, fill = black] (1,-1.2) circle (.05);

				\draw[>=latex, ->,  densely dotted] (1.9,-.7) to (1.1, -.7);
				\draw[>=latex, ->,  densely dotted] (1,-.8) to (1, -1.1);
			\end{scope}

			\begin{scope}[shift = {(7,-.6)}]
				\draw[gray,fill = gray, rounded corners, opacity=0.2] (.7,-1.8) rectangle (2.3,-2.2);
				\draw[red, thick, fill = red!20] (1,-2) circle (0.1);
				\draw[Yellow, thick, fill = Yellow!20] (2,-2) circle (0.1);

				\draw[black, fill = black] (1,-2) circle (.05);
				\draw[black, fill = black] (2,-2) circle (.05);
			\end{scope}

		\end{tikzpicture}
	\end{center}
  \caption{Decomposition game for the MSC of Fig.~\ref{fig:pp_ex}. This is a 3-winning game for Eve.}
  \label{fig:stw-ex}
\end{figure}

Courcelle's theorem implies that the following problem is
decidable: given a MSO formula $\amsoformula$ and $k\geq 1$,
decide whether $\amsoformula$ holds for all MSCs $\msc\in\stwMSCs{k}$.
Therefore, a direct consequence of Courcelle's theorem and of our MSO characterization
of the communication models is that bounded-model-checking is decidable\footnote{cfr. proof in \cite{longversion}}.

\begin{restatable}{theorem}{thmBoundedMC}\label{thm:bounded-model-checking}
Let $\comsymb\in\{\asy, \co, \pp, \mb, \onensymb, \nnsymb, \rsc\}$ and $k\geq 1$.
Then the following problem is decidable: given a system $\Sys$ and
a MSO specification $\amsoformula$, decide whether
$L_{\comsymb}(\Sys)\cap \stwMSCs{k}\subseteq L(\amsoformula)$.
\end{restatable}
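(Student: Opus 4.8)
The plan is to reduce the problem to MSO satisfiability over the class $\stwMSCs{k}$, which is decidable by Courcelle's theorem applied to the game-based characterization of special treewidth recalled above. Concretely, I would express both the set of behaviors of $\Sys$ and the communication model $\comsymb$ as MSO sentences, and then observe that the desired inclusion is the validity of a single MSO sentence over all MSCs of special treewidth at most $k$.

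First I would build an MSO sentence $\Phi_{\Sys}$ with $L(\Phi_{\Sys}) = L_{\asy}(\Sys)$. This is the standard encoding of automata in MSO, applied process by process: an MSC $\msc$ is an asynchronous behavior of $\Sys$ iff, for every $p \in \Procs$, the word read along the $p$-timeline is accepted by the automaton $A_p$ of $p$. To say this in MSO one existentially quantifies, for each control state $q$ of $A_p$, a second-order variable $X_q$ marking the events of $\Events_p$ after which $A_p$ is in state $q$, and then asserts in first-order logic over $\procrel_p$ and $\lambda$ that the $X_q$'s form a partition encoding a legal run that starts in the initial state, is consistent with the transition function, and ends in a final state on the $p$-maximal event. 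Since $\Procs$ is finite, the conjunction over all $p$ is an MSO sentence.

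Next, recall from Section~\ref{sec:MSO} that each model $\comsymb \in \{\asy,\co,\pp,\mb,\onensymb,\nnsymb,\rsc\}$ has an MSO sentence $\msoformulaofcom{\comsymb}$ with $L(\msoformulaofcom{\comsymb}) = \MSCclassofcom{\comsymb}$ (taking $\mathsf{true}$ for $\asy$). Because $L_{\comsymb}(\Sys) = L_{\asy}(\Sys) \cap \MSCclassofcom{\comsymb}$, the inclusion $L_{\comsymb}(\Sys)\cap \stwMSCs{k}\subseteq L(\amsoformula)$ fails iff there is an MSC $\msc$ of special treewidth at most $k$ with $\msc \models \Phi_{\Sys} \wedge \msoformulaofcom{\comsymb} \wedge \neg\amsoformula$; equivalently, the inclusion holds iff every $\msc \in \stwMSCs{k}$ satisfies the MSO sentence $\psi := (\Phi_{\Sys} \wedge \msoformulaofcom{\comsymb}) \Rightarrow \amsoformula$. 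Whether a fixed MSO sentence holds on all MSCs of special treewidth at most $k$ is decidable by Courcelle's theorem as recalled above: MSCs are finite structures over the signature $(\procrel,\lhd,(\lambda{=}a)_{a\in\Act})$, and the winning condition of the decomposition game for parameter $k$ describes exactly those obtainable from a fixed finite family of tree-shaped graph operations. Combining the two equivalences gives the statement.

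The step requiring the most care is $\Phi_{\Sys}$ — one must check that the ``run'' predicates are genuinely MSO (they are, since only the linear process orders $\procrel_p$ and finitely many state markers are involved) — but the substantive work underpinning the theorem was already carried out in Section~\ref{sec:MSO}, namely the MSO characterizations of the seven models, and especially the constructive argument for $\nnsymb$ (Definition~\ref{def:n_n_alt} together with Algorithm~\ref{algonn}). Given those and Courcelle's theorem, the proof here is essentially bookkeeping; in particular it says nothing about completeness of the bounded abstraction, i.e.\ the synchronizability question of Fig.~\ref{fig:stw-boundshort}, which must be treated by separate arguments.
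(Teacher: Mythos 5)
Your proposal is correct and follows essentially the same route as the paper: encode $L_{\asy}(\Sys)$ by an MSO run formula, conjoin the model formula $\msoformulaofcom{\comsymb}$ from Section~\ref{sec:MSO}, rewrite the inclusion as validity of $(\Phi_{\Sys}\wedge\msoformulaofcom{\comsymb})\Rightarrow\amsoformula$ over $\stwMSCs{k}$, and conclude by Courcelle's theorem. The only cosmetic difference is that the paper's communicating machines have no accepting states, so the ``final state'' clause in your run encoding is unnecessary; otherwise the arguments coincide.
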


\paragraph{\bf The synchronizability problem}

Theorem~\ref{thm:bounded-model-checking} remains true if instead of
$\stwMSCs{k}$ we bound the model-checking problem with
a class $\aMSCclass$ of MSCs that is both treewidth bounded and MSO definable.
The synchronizability problem (SP, for short) consists in deciding whether this bounded model-checking is complete,
i.e., whether all the behaviors generated by a given communicating system are included in this
class $\aMSCclass$, i.e., whether
$\cL{\Sys} \subseteq \aMSCclass$.

\begin{definition}
Let a communication model $\comsymb$ and a class $\aMSCclass$ of MSCs be fixed. The
$(\comsymb,\aMSCclass)$-synchronizability problem is defined as follows: given a
system $\Sys$, decide whether $L_{\comsymb}(\Sys)\subseteq \aMSCclass$.
\end{definition}

In \cite{BolligGFLLS21} the authors show that, for $\comsymb = \oneone$ and $\comsymb = \none$, 
the $(\comsymb,\aMSCclass)$-synchronizability problem is decidable for several classes $\aMSCclass$. 
We generalize their result to other communication models under a general assumption on the bounding class $\aMSCclass$.

\begin{restatable}{theorem}{thmsync}\label{thm:sync}
	For any $\comsymb \in \{\asy, \oneone, \co, \none, \onensymb, \nnsymb\}$ and
	for all   class of MSCs $\aMSCclass$,
	if $\aMSCclass$ is STW-bounded and MSO-definable,
	then the $(\comsymb,\aMSCclass)$-synchroniza\-bility problem is decidable.
\end{restatable}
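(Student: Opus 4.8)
The plan is to adapt the framework of Bollig~\emph{et al.}~\cite{BolligGFLLS21}: the $(\comsymb,\aMSCclass)$-synchronizability problem will be reduced to the satisfiability of a single MSO sentence over $\stwMSCs{k+c}$, the MSCs of special treewidth at most $k+c$, where $k$ is a bound with $\aMSCclass\subseteq\stwMSCs{k}$ (supplied together with $\aMSCclass$) and $c$ is a fixed constant; this last problem is decidable by Courcelle's theorem, exactly as in Theorem~\ref{thm:bounded-model-checking}. As is customary for this problem, I would first assume w.l.o.g.\ that all control states of $\Sys$ are accepting, so that the set of $\comsymb$-behaviors of $\Sys$ becomes closed under deleting a maximal event of a linearization; for the classes $\aMSCclass$ of interest, which are prefix-closed, this does not affect the answer. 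Three predicates on an MSC $\msc$ are MSO-definable and will be used as black boxes: ``$\msc$ is a behavior of $\Sys$'' (a regular condition on the process time lines), ``$\msc\in\MSCclassofcom{\comsymb}$'' (Section~\ref{sec:MSO}; call this sentence $\varphi_\comsymb$), and ``$\msc\in\aMSCclass$'' (by hypothesis; call this $\psi$).

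I would then call an MSC $\msc$ a \emph{borderline violation} if $\msc$ is a $\comsymb$-behavior of $\Sys$, $\msc\notin\aMSCclass$, and some $\happensbefore$-maximal event $e$ of $\msc$ satisfies $\msc\setminus\{e\}\in\aMSCclass$. The core claim is that $L_\comsymb(\Sys)\not\subseteq\aMSCclass$ if and only if a borderline violation exists. One direction is immediate. For the other, take a counterexample $\msc\in L_\comsymb(\Sys)\setminus\aMSCclass$ with the fewest events, fix a $\comsymb$-linearization of $\msc$, let $e$ be its last event --- necessarily $\happensbefore$-maximal, hence also $\procrel$-maximal on its process --- and put $\msc^-=\msc\setminus\{e\}$. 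The decisive point is that $\msc^-$ is again a $\comsymb$-MSC: the order it inherits is a prefix of the chosen $\comsymb$-linearization and is still a $\comsymb$-linearization of $\msc^-$. Combined with prefix-closure of the behaviors of $\Sys$, this makes $\msc^-$ a $\comsymb$-behavior of $\Sys$ with strictly fewer events, so minimality forces $\msc^-\in\aMSCclass$, and $\msc$ is a borderline violation.

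Next I would bound the special treewidth of borderline violations. Such an $\msc$ is obtained from $\msc^-=\msc\setminus\{e\}\in\aMSCclass\subseteq\stwMSCs{k}$ by adding the single event $e$, which is incident to at most two edges (its $\procrel$-predecessor and, if $e$ is a receive, its $\lhd$-partner); appending an event of bounded degree raises the special treewidth by at most a constant $c$ (the decomposition game for $\msc$ simulates the one for $\msc^-$ while keeping $e$ and its neighbours marked), so every borderline violation lies in $\stwMSCs{k+c}$. Hence the existence of a borderline violation is equivalent to the satisfiability over $\stwMSCs{k+c}$ of the MSO sentence stating that $\msc$ is a behavior of $\Sys$, that $\msc$ satisfies $\varphi_\comsymb$, that $\msc$ does not satisfy $\psi$, and that some $\happensbefore$-maximal event $e$ makes the relativization of $\psi$ to the events distinct from $e$ true --- that relativization being routine, since $\msc\setminus\{e\}$ is exactly the induced substructure. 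By Courcelle's theorem this satisfiability is decidable, which proves the theorem.

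The hard part will be justifying that $\msc^-$ is a $\comsymb$-MSC, i.e.\ that $\MSCclassofcom{\comsymb}$ is closed under prefixes of its own $\comsymb$-linearizations. For $\asy$ (where the class is all MSCs), $\oneone$, and $\none$ this is straightforward and already underlies~\cite{BolligGFLLS21}. For $\co$ I expect a short case analysis: deleting a maximal send only removes constraints, whereas deleting a maximal receive unmatches its send, and any causally later send to the same recipient that could clash with it cannot exist --- precisely because the removed receive is $\happensbefore$-maximal. The delicate cases are $\onensymb$ and $\nnsymb$: there, closure fails for prefixes of \emph{arbitrary} linearizations --- e.g.\ deleting a non-last maximal receive can unmatch a send that $\procrel$-precedes still-matched sends, breaking the discipline that unmatched sends come last --- so it is essential to delete the last event of a genuine $\comsymb$-linearization rather than an arbitrary maximal event. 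For $\nnsymb$ this is exactly where Algorithm~\ref{algonn} and Proposition~\ref{prop:nn_algo_term} do the work: they guarantee that a $\nnsymb$-linearization exists and that its prefixes remain $\nnsymb$-MSCs. Carrying out these prefix-closure verifications model by model, and threading the $\nnsymb$ case through Algorithm~\ref{algonn}, is the technical heart of the argument; the rest is bookkeeping around Courcelle's theorem.
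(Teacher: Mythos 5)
Your proposal is correct and follows the same overall scheme as the paper (which itself adapts Bollig~\emph{et al.}): exhibit a ``borderline violation'', bound its special treewidth by $k+2$ via the Eve-strategy argument for adding one event of degree at most two, and conclude with Courcelle's theorem. Where you genuinely diverge is in how the non-prefix-closed models $\onensymb$ and $\nnsymb$ are handled, and in the endgame. The paper introduces $\onensymb$- and $\nnsymb$-prefixes (sets downward-closed w.r.t.\ $\onenpartial$, resp.\ $\nnbowtie$), proves that the classes and the languages $\onenL{\Sys}$, $\nnL{\Sys}$ are closed under \emph{these} prefixes (Propositions~\ref{prop:prefixes-onen}--\ref{lem:nn-prefix-closed}), removes $\onenpartial$-/$\nnbowtie$-maximal events to reach a minimal violating prefix, and then reduces synchronizability to the already-proved bounded model-checking statement via Proposition~\ref{prop:continuous}; you instead delete the \emph{last event of a chosen $\comsymb$-linearization} and argue directly that the truncated order is still a $\comsymb$-linearization, then encode the existence of a borderline violation as one MSO sentence with $\psi$ relativized to $\Events\setminus\{e\}$ and test satisfiability over $\stwMSCs{k+c}$. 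Both routes work; yours avoids introducing the $\comsymb$-prefix machinery (at the cost of a per-model verification of the truncation claim), while the paper's $\onenpartial$/$\nnbowtie$ formulation buys reusable closure lemmas and a cleaner reduction to Theorem~\ref{thm:bounded-model-checking}. One inaccuracy to fix: your appeal to Algorithm~\ref{algonn} and Proposition~\ref{prop:nn_algo_term} for the $\nnsymb$ case is misplaced --- they concern the equivalence of Definitions~\ref{def:n_n} and~\ref{def:n_n_alt} and say nothing about prefixes or truncations. The fact you actually need is elementary and should be argued directly: since $e$ is the \emph{last} event of the $\nnsymb$-linearization, $e$ is either an unmatched send (removing it changes nothing) or a receive $r$ with $s\lhd r$, and then no send $s'$ with $s\linrel s'$ can be matched in $\msc\setminus\{e\}$, because the $\nnsymb$-condition would force $r\linrel r'$ with $r$ last; hence the newly unmatched $s$ violates nothing, and the truncated linearization is a $\nnsymb$-linearization of $\msc^-$ (the $\onensymb$ and $\none$ cases are analogous). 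Also, the aside about making all control states accepting is vacuous here, since the paper's systems have no acceptance condition and $L_{\asy}(\Sys)$ is prefix-closed as defined; as phrased (``for the classes $\aMSCclass$ of interest, which are prefix-closed'') it would moreover not cover the theorem's arbitrary $\aMSCclass$, so it is best dropped.
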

The proof of Theorem~\ref{thm:sync} echoes the proof of~\cite[Theorem~11]{BolligGFLLS21-long}, with the main technical argument being  the existence of a "borderline violation"  
(see~\cite[Lemma~9]{BolligGFLLS21-long}). However, the existence of a borderline violation is more subtle to establish, because  $\onenMSCs$ and 
$\nnMSCs$ are not prefixed-closed (see Fig.~\ref{fig:onen-prefix}).
A way to solve this technical issue is to consider a more strict notion of prefix. All details of
the proof of Theorem~\ref{thm:sync} can be found in \cite{longversion}.

\begin{figure}[t]
	\captionsetup[subfigure]{justification=centering}
\begin{subfigure}[t]{0.45\textwidth}\centering
	\begin{tikzpicture}[scale=0.7, every node/.style={transform shape}]
		\newproc{0}{p}{-1.5};
		\newproc{1}{q}{-1.5};
		\newproc{2}{r}{-1.5};

		\newmsgm{0}{1}{-0.5}{-0.5}{1}{0.3}{black};
		\newmsgm{0}{2}{-1.0}{-1.0}{2}{0.7}{black};

	\end{tikzpicture}
	\caption{A $\nnsymb$-MSC $\msc$.}
\end{subfigure}
\begin{subfigure}[t]{0.45\textwidth}\centering
	\begin{tikzpicture}[scale=0.7, every node/.style={transform shape}]
		\newproc{0}{p}{-1.5};
		\newproc{1}{q}{-1.5};
		\newproc{2}{r}{-1.5};

		\newmsgum{0}{1}{-0.5}{1}{0.3}{black};
		\newmsgm{0}{2}{-1.0}{-1.0}{2}{0.7}{black};

	\end{tikzpicture}
	\caption{A prefix of $\msc$.}
\end{subfigure}
	\caption{A $\nnsymb$-MSC with a prefix that is neither a $\onensymb$-MSC nor a $\nnsymb$-MSC.}
	\label{fig:onen-prefix}
\end{figure}
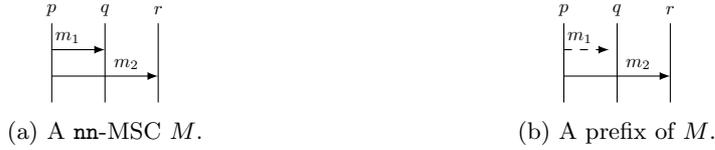

In the remainder, we investigate which combinations of $\comsymb$ and $\aMSCclass$ fit the hypotheses of this theorem.
We review the classes of weakly synchronous and weakly k-synchronous inspired
by~\cite{DBLP:conf/cav/BouajjaniEJQ18},
and the classes of existentially $k$-bounded and universally $k$-bounded MSCs~\cite{genest2004kleene}.
%
%
Fig.~\ref{fig:stw-bound} summarizes the decidability results of the
$(\comsymb,\aMSCclass)$-synchronizability problem for each combination of $\comsymb$ and $\aMSCclass$ 
we will consider.

\begin{figure}[t]
\centering
		\begin{tabular}{| c | c | c|  c| c| }
			\hline
			& Weakly  & Weakly  & $\exists$k & $\forall$k  \\
			& sync & k-sync & bounded & bounded \\
			\hline \hline
			$\asy$ &  unbounded STW & \cmark & \cmark & \cmark \\
			\hline
			$\oneone$  & \xmark~[1] & \cmark~[1] & \cmark~[1] & \cmark~[1] \\
			\hline
			$\co$  & \xmark & \cmark & \cmark & \cmark \\
			\hline
			$\none$ & \cmark~[1] & \cmark~[1] & \cmark~[1] & \cmark~[1] \\
			\hline
			$\onensymb$ & \cmark & \cmark & \cmark & \cmark \\
			\hline
			$\nnsymb$ & \cmark & \cmark & \cmark & \cmark \\
			\hline
		\end{tabular}
		\caption{Table summarising the (un)decidability results for the synchronizability problems (each 
		combination of a communication model $\comsymb$ and a class $\aMSCclass$ of MSCs is a different 
		synchronizability problem). 
		The symbol \xmark\;stands for undecidability and unbounded special treewidth
		of $\MSCclassofcom{\comsymb}\cap \aMSCclass$, whereas \cmark\;stands for decidability and bounded STW
		of $\MSCclassofcom{\comsymb}\cap \aMSCclass$.  
		[1] indicates that the result was shown by Bollig~\emph{et al.}~\cite{BolligGFLLS21}.
		Unbounded STW stands for unbounded STW
		of $\MSCclassofcom{\comsymb}\cap \aMSCclass$ (but not necessarily undecidability).}
		\label{fig:stw-bound}
\end{figure}

\paragraph{\bf Weakly synchronous MSCs}

We start by recalling the definition  of the class of weakly synchronous MSCs as introduced in \cite{BolligGFLLS21}.
We say an MSC is weakly synchronous if it can be chunked into \emph{exchanges}, where an exchange is an MSC that allows one to schedule all 
send events before all receive events. 

\begin{definition}[Exchange]\label{def:weak-synchr}
Let $\msc = (\Events,\procrel,\lhd,\lambda)$ be an MSC.
We say that $\msc$ is an \emph{exchange} if
$\SendEv{\msc}$ is
a ${\happensbefore}$-downward-closed set.
\end{definition}

%
%
%
%
%
%
%
%
%

In other words, an exchange is an MSC $\msc$ where no send event depends on a receive event. 
If that is the case, we can find a linearization for $\msc$ where all the send events are executed before the 
receive events. Remember that $\msc_1\cdot\msc_2$ denote the vertical concatenation of MSCs (see 
Section~\ref{sec:MSC}).

\begin{definition}[Weakly synchronous]\label{def:weaksync-new}
	We say that $\msc \in \MSCs$ is
	\emph{weakly synchronous} if it is of the form
	$\msc = \msc_1 \cdot \msc_2 \cdots \msc_n$
	such that every $\msc_i$ is an exchange.
\end{definition}


In \cite{BolligGFLLS21} it is shown that, for the class of weakly synchronous MSCs, 
the synchronizability problem is undecidable for $\oneone$, but decidable for $\mb$.
Here we investigate the decidability of weak synchronizability for the other 
communication models. We first show that weak synchronizability 
is undecidable for causally ordered communication. 
The proof is an adaptation of the one given in~\cite[Theorem~20]{BolligGFLLS21-long} for the $\oneone$ case (cfr. \cite{longversion}).
\begin{restatable}{proposition}{propCoWeakSync}
\label{prop:co-weaksync}
	The following problem is undecidable:
	given a communicating system $\System$,
	is every MSC in $\coL{\System}$ weakly synchronous?
\end{restatable}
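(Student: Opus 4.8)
The idea is to reduce an undecidable problem about communicating systems — for instance the non-halting problem for a deterministic two-counter machine — to the weak synchronizability problem under causal ordering, mimicking the construction used in~\cite{BolligGFLLS21-long} for the $\oneone$ case but being careful about the extra freedom that causal ordering allows with respect to peer-to-peer FIFO. Concretely, I would build a communicating system $\System$ that faithfully simulates a two-counter machine, so that every halting run corresponds to some $\co$-MSC in $\coL{\System}$ that is \emph{not} weakly synchronous, while every non-halting run only produces $\co$-MSCs that are weakly synchronous (e.g.\ because the simulation stays ``almost synchronous'', one exchange per step). Then ``every MSC in $\coL{\System}$ is weakly synchronous'' is equivalent to ``the machine does not halt'', which is undecidable.

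\textbf{Key steps, in order.} First I would recall the $\oneone$ construction of~\cite[Theorem~20]{BolligGFLLS21-long}: it encodes counters as a stream of messages on FIFO channels, and the ``witness of non-synchronizability'' is a gadget whose only non-weakly-synchronous behavior appears exactly when the machine reaches its halting instruction. Second, I would check which parts of that construction rely on the FIFO discipline and which survive under $\co$. The delicate point is that $\coL{\System}\supseteq \ppL{\System}$: relaxing to causal ordering \emph{adds} MSCs, so a behavior that was absent in the $\oneone$ semantics (and hence could not break weak synchronizability) might reappear in the $\co$ semantics and accidentally be weakly synchronous or, worse, break weak synchronizability on a non-halting run. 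So the third step is to add enough control messages / acknowledgements so that the relevant message orderings are \emph{forced by causal chains}, not merely by the channel discipline. Concretely, whenever two messages to the same process must be ordered, I would insert an explicit causal path (a detour message through an intermediate ``sequencer'' process, acknowledged before the next send) so that Definition~\ref{def:co_msc} already forces the intended order; this pins the $\co$ behaviors down to essentially the same set as the $\oneone$ behaviors of the original gadget. Fourth, I would argue the correspondence precisely: (i) if the machine halts, exhibit one concrete $\co$-MSC in $\coL{\System}$ containing the non-weakly-synchronous witness gadget (using that the witness is still a legal $\co$-MSC — it was a legal $\pp$-MSC and $\ppMSCs\subset\coMSCs$ by the hierarchy); (ii) if the machine never halts, show every MSC of $\coL{\System}$ decomposes as a concatenation of exchanges, one per simulated instruction, invoking Definition~\ref{def:weaksync-new} — here the added acknowledgements are what guarantee each step's sends are $\happensbefore$-downward-closed once the previous step has fully completed.

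\textbf{Main obstacle.} The crux is step three: controlling the $\co$ semantics. Under $\oneone$, channels are pairwise FIFO and the proof author has tight control over message order ``for free''; under $\co$ that control must be re-derived from causality, and one must simultaneously ensure that (a) the halting witness remains realizable as a $\co$-MSC and (b) no \emph{new} $\co$-behavior of a non-halting run is non-weakly-synchronous. Getting both directions right — enough causal constraints to kill spurious reorderings, but not so many that the witness MSC itself becomes illegal — is where the real work lies; everything else (the two-counter-machine encoding, the exchange decomposition bookkeeping) is routine adaptation of~\cite{BolligGFLLS21-long}, and the concatenation argument for the non-halting case is immediate from Definitions~\ref{def:weak-synchr} and~\ref{def:weaksync-new}. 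I would therefore budget most of the write-up for carefully specifying the sequencer/acknowledgement gadget and proving it pins down the $\co$ behaviors.
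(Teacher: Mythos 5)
Your overall strategy (reuse the undecidability reduction behind~\cite[Theorem~20]{BolligGFLLS21-long} and argue that it survives the change of communication model) is the right one, but your analysis of what needs to be controlled rests on an inverted hierarchy, and this is a genuine gap. In the paper's hierarchy $\coMSCs \subset \ppMSCs$: causal ordering is a \emph{strengthening} of peer-to-peer, not a relaxation, so $\coL{\System} \subseteq \ppL{\System}$ for every system. Consequently your worry (b) --- that passing to $\co$ might \emph{add} behaviors that break weak synchronizability on non-halting runs --- is vacuous, while your justification for (a) is invalid: you claim the non-weakly-synchronous witness ``is still a legal $\co$-MSC because it was a legal $\pp$-MSC and $\ppMSCs\subset\coMSCs$'', but that inclusion is false, and in general a $\pp$-MSC need not be causally ordered. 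The only genuine issue is precisely the one your argument does not establish: that the witness behaviors (indeed, all relevant $\pp$ behaviors of the reduction system) are causally ordered, so that the $\co$ semantics does not silently delete them and turn the answer into a trivial ``yes''.

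The paper closes exactly this point, and does so without any modification of the system: in the PCP reduction of~\cite{BolligGFLLS21-long} the system has two provers and two verifiers, with only unidirectional channels from provers to verifiers and no channel between the provers, so no two send events of different provers can ever be causally related ($\happensbefore$); hence every $\pp$ behavior of that system is already causally ordered, $\ppL{\System} \subseteq \coMSCs$, giving $\coL{\System} = \ppL{\System}$ and letting the $\oneone$ proof go through verbatim. Your proposed sequencer/acknowledgement gadgets are therefore not only unnecessary but risky: extra acknowledgements create new causal paths and new channels, which can both shrink the set of causally ordered behaviors (possibly killing the witness) and change the system whose synchronizability is being decided, so you would have to redo both directions of the reduction from scratch. (Switching the source problem from PCP to two-counter machines is harmless in principle, but it does not remove this obligation.)
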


For $\onensymb$ and $\nn$, on the other hand, weak synchronizability is decidable. 

\begin{proposition}\label{thm:weak-sync}
	Let $\comsymb \in \{\onensymb, \nnsymb\}$.
	The following problem is decidable:
	given a communicating system $\System$,
	is every MSC in $\cL{\System}$ weakly synchronous?
\end{proposition}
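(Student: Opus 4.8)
The strategy is to invoke Theorem~\ref{thm:sync}: it suffices to check that the class $\aMSCclass$ of weakly synchronous MSCs is both MSO-definable and STW-bounded, and that $\comsymb\in\{\onensymb,\nnsymb\}$ is one of the models covered by that theorem (it is). Since Theorem~\ref{thm:sync} already gives decidability of $(\comsymb,\aMSCclass)$-synchronizability under exactly these two hypotheses, the whole content of the proof reduces to these two facts about the weakly synchronous class. Note that the MSO-definability of $\onenMSCs$ and $\nnMSCs$ themselves is available from Section~\ref{sec:MSO}, so nothing extra is needed on the communication-model side.

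\textbf{MSO-definability of weak synchronizability.} First I would argue that "$\msc$ is weakly synchronous" is MSO-definable over MSCs. By Definition~\ref{def:weaksync-new}, $\msc$ is weakly synchronous iff its events can be partitioned into intervals (along every process line) that each form an exchange, i.e., a block in which $\SendEv{\cdot}$ is happens-before-downward-closed. Equivalently, one phrases this directly: there is a labelling of events by "exchange numbers" — captured by a single monadic predicate $X$ that collects, say, the last event of each exchange on each process, or more cleanly by guessing the partition via finitely many second-order variables after observing that the relevant "phase" information is bounded — such that within any maximal block no receive event happens-before a send event of the same block, and blocks are consistent across process lines with the concatenation relation. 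This is a routine MSO encoding of a bounded colouring respecting $\procrel$, $\lhd$, and $\happensbefore$ (the latter being MSO-definable as already noted). I would cite the analogous encoding in Bollig~\emph{et al.}~\cite{BolligGFLLS21}, since the class of weakly synchronous MSCs and its MSO formula are taken verbatim from there.

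\textbf{STW-boundedness.} Second, I would recall (again from~\cite{BolligGFLLS21}) that the class of weakly synchronous MSCs has bounded special treewidth: each exchange has all sends before all receives, so an exchange decomposes with a bounded "frontier" (the set of matched-but-not-yet-received sends / pending channel contents is bounded inside a single exchange because each exchange uses each channel in a controlled way), and the vertical concatenation of exchanges composes these decompositions sequentially with a bounded interface of at most one active event per process plus the in-flight messages of the current exchange. Hence there is a uniform $k$ with $\aMSCclass \subseteq \stwMSCs{k}$; intersecting with $\onenMSCs$ or $\nnMSCs$ can only lower the treewidth.

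\textbf{Conclusion and main obstacle.} With both hypotheses of Theorem~\ref{thm:sync} verified for $\aMSCclass$ = weakly synchronous MSCs and $\comsymb\in\{\onensymb,\nnsymb\}$, the theorem immediately yields decidability of the stated problem. The only genuinely delicate point in the background is not in this proposition but inside Theorem~\ref{thm:sync} itself — namely that $\onenMSCs$ and $\nnMSCs$ are not prefix-closed (Fig.~\ref{fig:onen-prefix}), so the "borderline violation" argument of~\cite{BolligGFLLS21-long} must be adapted with a stricter notion of prefix; here, however, we are permitted to assume Theorem~\ref{thm:sync} as already established, so the present proof is essentially a short verification of the two side conditions, both of which are inherited from~\cite{BolligGFLLS21}.
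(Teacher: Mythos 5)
There is a genuine gap at the STW-boundedness step. You claim that the class of weakly synchronous MSCs itself has bounded special treewidth and that "intersecting with $\onenMSCs$ or $\nnMSCs$ can only lower the treewidth." The first claim is false: without any restriction on the communication model, an exchange may have arbitrarily many messages simultaneously in flight (the bound on exchange size only exists in the weakly \emph{$k$-}synchronous case), so there is no bounded "frontier," and the class of weakly synchronous MSCs has unbounded special treewidth. This is precisely why the paper's table records "unbounded STW" for $\asy$ and undecidability (with unbounded STW) for $\oneone$ and $\co$ in the weakly synchronous column, and why Bollig~\emph{et al.} proved weak synchronizability \emph{undecidable} for $\oneone$. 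If your premise were true, Theorem~\ref{thm:sync} would make weak synchronizability decidable for every model in its scope, contradicting those results. Consequently you also cannot take $\aMSCclass$ to be the plain class of weakly synchronous MSCs when applying Theorem~\ref{thm:sync}, since that theorem requires $\aMSCclass$ to be STW-bounded.

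The paper's proof avoids this by first replacing the bounding class: since $\cL{\System}\subseteq\MSCclassofcom{\comsymb}$, asking whether every MSC of $\cL{\System}$ is weakly synchronous is equivalent to asking for inclusion in $\aMSCclass :=$ the class of $\onensymb$ (resp.\ $\nnsymb$) weakly synchronous MSCs. MSO-definability of this intersection is the conjunction of the weakly-synchronous formula from \cite{BolligGFLLS21} with $\Phi_{\onensymb}$ (resp.\ $\Phi_{\nnsymb}$) from Section~\ref{sec:MSO}, which matches the spirit of your first step. But STW-boundedness is obtained indirectly: \cite{BolligGFLLS21} shows that the class of \emph{mailbox} weakly synchronous MSCs is STW-bounded, and the hierarchy of Section~\ref{sec:hierarchy} gives $\nnMSCs\subset\onenMSCs\subset\mbMSCs$, so the $\onensymb$ and $\nnsymb$ weakly synchronous classes are subclasses of an STW-bounded class. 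This use of the inclusion $\onenMSCs\subset\mbMSCs$ is the essential ingredient your proposal is missing; without it the decidability claim does not follow.
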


\begin{proof}

	We will consider $\comsymb = \onensymb$; the proof for $\comsymb = \nnsymb$ is similar. We would like to know if every MSC in $\onenL{\System}$ is in the class of weakly synchronous MSCs. Since every MSC in $\onenL{\System}$ is a $\onensymb$-MSC, we can equivalently restrict the problem to the class of weakly synchronous MSCs that are also $\onensymb$-MSCs. Let $\aMSCclass$ be the class of $\onensymb$ weakly synchronous MSCs; we show that $\aMSCclass$ is MSO-definable and STW-bounded, which implies the decidability of $SP$ for Theorem~\ref{thm:sync}. The class of weakly synchronous MSCs was shown to be MSO-definable in \cite{BolligGFLLS21}; to be precise, their characterization is for $\oneone$ weakly synchronous MSCs (since their definition of MSC is equivalent to our definition of $\pp$-MSC), but it also works for (asynchronous) weakly synchronous MSCs. We showed in Section~\ref{sec:MSO} that $\onenMSCs$ is MSO-definable; it follows that the class of $\onensymb$ weakly synchronous MSCs is also MSO-definable (we just take the conjuction of the the two formulas). The class of $\none$ weakly synchronous MSCs was shown to be STW-bounded in \cite{BolligGFLLS21}, and since $\onenMSCs \subset \mbMSCs$, we also have that the class of $\none$ weakly synchronous MSCs has a bounded special treewidth. 
\end{proof}

\paragraph{\bf Weakly {$k$}-synchronous MSCs}

We consider now weakly $k$-synchronous MSCs (\cite{BolligGFLLS21}), which are the weakly synchronous MSCs such that the number of messages sent per exchange is at most $k$. 

\begin{wrapfigure}{r}{0pt}
	\begin{tikzpicture}[scale=0.7, every node/.style={transform shape}]
	  \draw (0,0) node{$p$} ;
	  \draw (1,0) node{$q$} ;
	  \draw (2,0) node{$r$} ;
	  \draw (0,-0.25) -- (0,-3.1) ;
	  \draw (1,-0.25) -- (1,-3.1);
	  \draw (2, -0.25) -- (2, -3.1) ;
	  \draw[>=latex,->, dashed] (0,-0.75) -- (0.9, -0.75) node[midway,above]{$\amessage_1$};

	  \draw[>=latex,->] (1, -1.75) -- (0, -1.75) node[midway, above] {$\amessage_2$};

	  \draw[>=latex,->] (2,-2.75) -- (1,-2.75) node[midway, above] {$\amessage_3$};
	  \draw[dashed] (-0.5,-1.25) -- (2.5,-1.25) ;
	  \draw[dashed] (-0.5,-2.25) -- (2.5,-2.25) ;
	\end{tikzpicture}
	\caption{MSC $\mscweakSexist$}
	\label{fig:msc_weak_S_exist}
\end{wrapfigure}

\begin{definition}[$k$-exchange]\label{def:weak-k-synchr}
Let $\msc = (\Events,\procrel,\lhd,\lambda)$ be an MSC
and $k \in \N$.
 $\msc$ is a $k$-\emph{exchange} if
$\msc$ is an exchange and $\cardinalof{\SendEv{\msc}} \le k$.
\end{definition}

\begin{definition}[Weakly $k$-synchronous]\label{def:weaksync}
Let $k \in \N$.
 $\msc \in \MSCs$ is
weakly $k$-synchronous if it is of the form
$\msc = \msc_1 \cdot \msc_2 \cdots \msc_n$
such that every $\msc_i$ is a $k$-exchange.
\end{definition}

\begin{example}
MSC $\mscweakSexist$ in Fig.~\ref{fig:msc_weak_S_exist} is weakly $1$-synchronous, as it can be
decomposed  into three \kE{1}s (the decomposition is depicted by the
horizontal dashed lines).
\end{example}

As for weakly synchronous MSCs, the class of weakly $k$-synchronous MSCs was already shown to be MSO-definable and STW-bounded in \cite{BolligGFLLS21}, and these results still hold even for our definition of MSC. A direct application of Theorem~\ref{thm:sync} shows that, for weakly $k$-synchronous MSCs, $SP$ is decidable for all communication models.

\begin{proposition}\label{thm:weak-k-sync}
	Let $\comsymb \in \{\asy, \oneone, \co, \none, \onensymb, \nnsymb\}$.
	The following problem is decidable:
	given a communicating system $\System$,
	is every MSC in $\cL{\System}$ weakly $k$-synchronous?
\end{proposition}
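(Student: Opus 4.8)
The plan is to obtain this as an immediate consequence of Theorem~\ref{thm:sync}. Let $\aMSCclass$ be the class of weakly $k$-synchronous MSCs (Definition~\ref{def:weaksync}). For a given $\System$, the problem in the statement asks whether $\cL{\System} \subseteq \aMSCclass$, which is exactly the $(\comsymb,\aMSCclass)$-synchronizability problem. Hence it suffices to verify the two hypotheses of Theorem~\ref{thm:sync}: that $\aMSCclass$ is MSO-definable and STW-bounded.

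Both facts were established by Bollig~\emph{et al.}~\cite{BolligGFLLS21}, and I would simply argue that they carry over verbatim to our slightly more liberal notion of MSC. For MSO-definability, being a concatenation of $k$-exchanges (Definition~\ref{def:weak-k-synchr}) is a property expressed purely in terms of $\procrel$, $\lhd$, and the happens-before relation $\happensbefore$ — the size constraint ``at most $k$ send events per exchange'' amounting to a bounded quantifier pattern — and all of these are MSO-definable over arbitrary MSCs, so the formula of \cite{BolligGFLLS21} applies unchanged. For STW-boundedness, each $k$-exchange contains at most $k$ send and at most $k$ receive events, so cutting an MSC along its exchange boundaries yields a path-like decomposition in which each bag consists of one $k$-exchange together with the at most $\cardinalof{\Procs}$ boundary events it shares with the adjacent exchange; this bounds the special treewidth by a function of $k$ and $\cardinalof{\Procs}$ only, independently of the communication model $\comsymb$.

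Combining these two observations with Theorem~\ref{thm:sync} yields decidability for every $\comsymb \in \{\asy, \oneone, \co, \none, \onensymb, \nnsymb\}$; note that intersecting $\aMSCclass$ with $\MSCclassofcom{\comsymb}$ (implicit in $\cL{\System}$) does not disturb the argument, since it can only shrink the class and so preserves both MSO-definability and STW-boundedness. The only real subtlety — a matter of bookkeeping rather than a genuine obstacle, as in the proof of Proposition~\ref{thm:weak-sync} — is the sanity check that the constructions of \cite{BolligGFLLS21} are originally stated for their notion of MSC, which coincides with our $\oneone$-MSCs, and that they transfer to the asynchronous setting; unlike the non-$k$-bounded weakly synchronous case for $\asy$ (where the special treewidth is unbounded), here the per-exchange message bound is precisely what precludes unbounded special treewidth, so no passage to a proper sub-class of $\aMSCclass$ is needed.
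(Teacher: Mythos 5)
Your proposal is correct and follows essentially the same route as the paper: observe that the class of weakly $k$-synchronous MSCs is MSO-definable and STW-bounded (carrying over the results of Bollig~\emph{et al.}~\cite{BolligGFLLS21} to the more general notion of MSC used here), and then conclude by a direct application of Theorem~\ref{thm:sync}. The extra details you supply (the sketch of the path-like decomposition along exchange boundaries and the remark contrasting with the unbounded-STW situation for weakly synchronous $\asy$) are consistent with, and merely elaborate on, what the paper cites.
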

\begin{proof}
	The class $\aMSCclass$ of weakly $k$-synchronous MSCs is MSO-definable and STW-bounded, therefore the
	result follows from Theorem~\ref{thm:sync}.
\end{proof}

\paragraph{\bf Existentially bounded MSCs}

We move now to existentially $k$-bounded MSCs, first introduced by Lohrey and Muschol~\cite{DBLP:conf/fossacs/LohreyM02}, that form a relevant class of MSC for extending the
B\"uchi-Elgot-Trakhthenbrot theorem from words to MSCs \cite{genest2004kleene,GKM07}. 
Existentially bounded MSCs represent the behavior of systems that can be realized with bounded channels. 
We stick to the original definition of Lohrey and Muscholl of $k$-bounded MSCs, where $k$
represents the bound on the number of messages in transit from a given process to another, so that globally
there may be up to $k\cardinalof{\procSet}^2$ in transit.\footnote{This may look
surprising in our general context to count messages in transit in that way, but it can be seen that, up to picking
a different value for the bound $k$, it is equivalent to the possibly more
intuitive definition based on counting all messages in transit whatever their sender and receiver.}
Intuitively, we say that an MSC is existentially $k$-bounded if it admits a linearization where, at any moment in time, and for all pair of processes $p,q$, there are no more than $k$ messages in transit from $p$ to $q$. Such a linearization will be referred to as a $k$-\emph{bounded linearization}. We give formal definitions below.

\begin{definition}\label{def:lin_k_bounded}
	Let $\msc = (\Events,\procrel,\lhd,\lambda) \in \MSCs$ and $k \in \N$.
	A linearization $\linrel$ of $\msc$ is called
	$k$-\emph{bounded} if, for all $e \in \SendEv{\msc}$, with $\lambda(e) = \sact{p}{q}{\msg}$, we have
	\[
	\sametype{e}{\pqsAct{p}{q}}{\linrel} - \sametype{e}{\pqrAct{p}{q}}{\linrel} \le k
	\]
\end{definition}
\noindent where $\sametype{e}{A}{\rel} = |\{f \in \Events \mid (f,e) \in \rel$ and $\lambda(f) \in A\}|$.
For instance, $\sametype{e}{\pqsAct{p}{q}}{\linrel}$ denotes the number of send events from $p$ to $q$ that occured before $e$ according to $\linrel$. Note that, since $\linrel$ in reflexive, $e$ itself is counted in $\sametype{e}{\pqsAct{p}{q}}{\linrel}$.

\begin{wrapfigure}[9]{r}{0.17\textwidth}
	\centering
	\begin{tikzpicture}[scale = .7,every node/.style={transform shape}]
	  \begin{scope}[shift = {(0,0)}]

		\draw (0,-0.1) node{$p$} ;
		\draw (1.25,-0.1) node{$q$} ;
		\draw (2.5,-0.1) node{$r$} ;
		\draw (2.5, -0.35) -- (2.5, -3.4) ;
		\draw (0,-0.35) -- (0,-3.4) ;
		\draw (1.25,-0.35) -- (1.25,-3.4);

		\draw[>=latex,->] (0, -0.5) -- (1.25, -1.25) node[pos=0.4, sloped, above] {$\amessage_1$};
		\draw[>=latex,->] (0, -1.5) -- (1.25, -2.25) node[pos=0.55, sloped, above] {$\amessage_1$};
		\draw[>=latex,->, dashed] (0, -2.45) -- (1.25*0.9, -3.2) node[pos=0.65, sloped, above] {$\amessage_1$};

		\draw[>=latex,->] (1.25, -0.75) -- (0, -2) node[pos=0.5, sloped, above] {$\amessage_2$};
		\draw[>=latex,->] (1.25, -1.75) -- (0, -3) node[pos=0.5, sloped, above] {$\amessage_2$};

		\draw[>=latex,->] (1.25, -1) -- (2.5, -1) node[midway, above] {$\amessage_3$};
		\draw[>=latex,->] (1.25, -1.5) -- (2.5, -1.5) node[midway, above] {$\amessage_3$};
		\draw[>=latex,->] (1.25, -2.5) -- (2.5, -2.5) node[midway, above] {$\amessage_3$};

	  \end{scope}
			%
			%
			%
	\end{tikzpicture}
	\caption{MSC $\mscexist$}
	\label{fig:msc_exist}
\end{wrapfigure}

\begin{definition}[Existentially bounded MSC]\label{def:ek_bounded_msc}
	Let $\msc = (\Events, \rightarrow, \lhd, \lambda) \in \asMSCs$ and $k \in \mathbb{N}$.  $\msc$ is \emph{existentially $k$-bounded} ($\exists k$-bounded) if it has a $k$-bounded linearization.
\end{definition}
We now look at the definitions of $\oneone$ $\exists k$-bounded MSCs and causally ordered $\exists k$-bounded, which are quite straightforward.

\begin{example}
MSC $\mscexist$ in Fig.~\ref{fig:msc_exist}
is existentially $1$-bounded, as witnessed by the linearization $!2\;!1\;!3\;?3\;?1\;!1\;?2\;!3\;?3 \ldots$
Note that $\mscexist$ is not weakly synchronous as we cannot divide it into exchanges.
\end{example}

\begin{definition}
	An MSC $\msc$ is \emph{$\oneone$ existentially $k$-bounded} ($\oneone$-$\exists k$-bounded) if it is a $\pp$-MSC and it is also existentially $k$-bounded.
\end{definition}
\begin{definition}
	An MSC $\msc$ is \emph{causally ordered existentially $k$-bounded} ($\cosymb$-$\exists k$-bounded) if it is a causally ordered MSC and it is also existentially $k$-bounded.
\end{definition}

When moving  to the other communication models, the definitions are not as straightforward. Indeed when defining  \none, \onen and \nn,  we  require the existence of a  linearization, which represents a sequence of events that can be executed by a $\none$, resp. \onen and \nn system. Hence, in order to define $\exists k$-bounded MSCs we should require that there exists a $k$-bounded linearization that is also a $\mb$-linearization (resp. \onen and \nn), not just any linearization. 

\begin{definition}
	An MSC $\msc$ is \emph{$\none$ existentially $k$-bounded} ($\none$-$\exists k$-bounded) if it has a $k$-bounded $\mb$-linearization.
\end{definition}

\begin{definition}
	An MSC $\msc$ is \emph{$\onensymb$ existentially $k$-bounded} (onen-$\exists k$-bounded) if it has a $k$-bounded $\onensymb$-linearization.
\end{definition}

\begin{definition}
	An MSC $\msc$ is \emph{$\nnsymb$ existentially $k$-bounded} (nn-$\exists k$-bounded) if it has a $k$-bounded $\nnsymb$-linearization.
\end{definition}

We show that each of the $\exists k$-bounded classes of MSCs presented so far is MSO-definable and STW-bounded. We then derive the decidability of $SP$ in a similar way to what we did in the proof of Proposition~\ref{thm:weak-sync} for weakly synchronous MSCs.

\paragraph*{MSO-definability}

We start by investigating the MSO-definability of all the variants of $\exists k$-bounded MSCs, we begin with the most general class of $\exists k$-bounded MSCs.
Following the approach taken in \cite{DBLP:conf/fossacs/LohreyM02}, we introduce a binary relation $\relb$ ($\linrel_b$ in their work) associated with a given bound $k$ and an MSC $\msc$. Let $k \ge 1$ and $\msc$ be a fixed MSC. We have $r \relb s$ if, for some $i \ge 1$ and some channel ($p$,$q$)\footnote{Recall that ($p,\,q$) is a channel where messages are sent by $p$ and received by $q$.}:
\begin{enumerate}
	\item $r$ is the $i$-th receive event (executed by $q$).
	\item $s$ is the ($i+k$)-th send event (executed by $p$).
\end{enumerate}
For any two events $s$ and $r$ such that $r \relb s$, every linearization of $\msc$ in which $r$ is executed after $s$ cannot be $k$-bounded. Intuitively, we can read $r \relb s$ as "$r$ has to be executed before $s$ in a $k$-bounded linearization". A linearization $\linrel$ that respects $\relb$ (i.e., $\relb \,\subseteq\, \linrel$) is $k$-bounded.

\begin{wrapfigure}[9]{r}{0.2\textwidth}
	\centering
	\begin{tikzpicture}[scale=0.7, every node/.style={transform shape}]
	  \begin{scope}[shift = {(0,0)}]
		\newproc{0}{p}{-2.2};
		\newproc{2}{q}{-2.2};
	
		\newmsgm{0}{2}{-.4}{-.4}{1}{0.3}{black};
		\newmsgm{0}{2}{-1.1}{-1.1}{2}{0.3}{black};
		\newmsgm{0}{2}{-1.8}{-1.8}{3}{0.3}{black};
	  \end{scope}

	\end{tikzpicture}
	\caption{MSC $\msc_4$.}
	\label{fig:bounded_lin_ex}
\end{wrapfigure}
\begin{example}
  Consider MSC $\msc_4$ in Fig~\ref{fig:bounded_lin_ex}. Suppose we want to look for a $2$-bounded linearization. For $k=2$, we have $?1 \relbk{2} !3$; if we find a valid linearization that respect the $\relbk{2}$ relation, then it is $2$-bounded, e.g., $!1\;!2\;?1\;!3\;?2\;?3$ (note that $?1$ is executed before $!3$). On the other hand, the linearization $!1\;!2\;!3\;?1\;?2\;?3$ is not 2-bounded, since $?1$ is executed after $!3$.
\end{example}

In \cite{DBLP:conf/fossacs/LohreyM02} it was shown that an MSC is $\exists k$-bounded if and only if the relation $\happensbefore \cup \relb$ is acyclic. Since $\happensbefore$ and acyclicity are both MSO-definable, it suffices to find an MSO formula that defines $\relb$ to claim the MSO-definability of $\exists k$-bounded MSCs. Unfortunately, $\relb$ is not MSO-definable because MSO logic cannot be used to "count" for an arbitrary $i$. For this reason, we introduce a similar MSO-definable binary relation $\relbAsy$, and we show that an MSC $\msc$ is $\exists k$-bounded MSC iff $\happensbefore \cup \relbAsy$ is acyclic and another condition holds. Let $k \ge 1$ and $\msc$ be a fixed MSC; we have $r \relbAsy s$ if, for some $i \ge 1$ and some channel ($p$,$q$):
\begin{itemize}
	\item There are $k+1$ send events $(s_1, \dots, s_k, s)$, where at least one is matched, such that $s_1 \procrel^+ \dots \procrel^+ s_k \procrel^+ s$.
 	\item $r$ is the first receive event for the matched send events among $s_1, \dots, s_k, s$.
\end{itemize}

\begin{proposition}\label{prop:asy_ek_def_alt}
	An MSC $\msc$ is $\exists k$-bounded if and only if $\happensbefore \cup \relbAsy$ is acyclic and, for each channel ($p$,$q$), there are at most $k$ unmatched send events.
\end{proposition}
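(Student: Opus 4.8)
The plan is to prove both implications, isolating the two features that go beyond the classical Lohrey--Muscholl setting: the treatment of unmatched send events, and the substitution of the (non-MSO) relation $\relb$ by the MSO-definable relation $\relbAsy$. Throughout I use as a black box the characterization of~\cite{DBLP:conf/fossacs/LohreyM02} recalled above, namely that an MSC without unmatched send events is $\exists k$-bounded iff $\happensbefore \cup \relb$ is acyclic.

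For the ($\Rightarrow$) direction, fix a $k$-bounded linearization $\linrel$. First, if some channel $(p,q)$ carried $k+1$ unmatched sends $u_1 \procrel^+ \cdots \procrel^+ u_{k+1}$, then (being pairwise $\procrel$-comparable on $p$) all of them are $\linrel$-before-or-equal to $u_{k+1}$; since none of them is received, and every received message on $(p,q)$ has its send before its reception, the count of Definition~\ref{def:lin_k_bounded} at the event $u_{k+1}$ is at least $k+1$, a contradiction; hence every channel has at most $k$ unmatched sends. Second, to get acyclicity it suffices to show $\relbAsy \subseteq {\linrel}$, since then $\happensbefore \cup \relbAsy$ sits inside a total order. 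So take $r \relbAsy s$ witnessed by sends $s_1 \procrel^+ \cdots \procrel^+ s_k \procrel^+ s$ on a channel $(p,q)$, with $r$ the $\procrel$-least receive among the matched sends of $\{s_1, \dots, s_k, s\}$. If instead $s \linrel r$, then every matching receive $r_j$ of a matched $s_j$ satisfies $r \procrel^* r_j$, hence $s \linrel r \linrel r_j$; so at the event $s$ all of $s_1, \dots, s_k, s$ have been sent and none of the corresponding messages received, which again forces the count of Definition~\ref{def:lin_k_bounded} at $s$ to be at least $k+1$. Hence $r \linrel s$, and $\happensbefore \cup \relbAsy$ is acyclic.

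For the ($\Leftarrow$) direction, assume $\happensbefore \cup \relbAsy$ is acyclic and every channel has at most $k$ unmatched sends. I would build a $k$-bounded linearization greedily from the partial order $P = (\happensbefore \cup \relbAsy)^+$: repeatedly schedule a $P$-minimal not-yet-scheduled event, always preferring a receive event to a send event (and, among send events, one whose channel currently holds fewer than $k$ messages). Acyclicity of $P$ guarantees a minimal element at each step, so the procedure produces a linearization of $\msc$, since it refines $\happensbefore$. The content is to show it is $k$-bounded, i.e.\ that when the $n$-th send $\sigma_n$ of a channel $(p,q)$ is scheduled (with $n>k$), at least $n-k$ receive events on $(p,q)$ are already scheduled. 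For each $i \le n-k$ the window $\{\sigma_i, \dots, \sigma_{i+k}\}$ contains a matched send (at most $k$ are unmatched on $(p,q)$), hence yields a receive $r^{(i)}$ with $r^{(i)} \relbAsy \sigma_{i+k}$; since $\sigma_{i+k} \procrel^* \sigma_n$ and $\sigma_n$ is $P$-enabled, $r^{(i)}$ has already been scheduled. Using that the scheduled receives on $(p,q)$ form a $\procrel$-downward-closed set and that each $r^{(i)}$ is the $\procrel$-least receive of its window, one argues that $r^{(1)}, \dots, r^{(n-k)}$ — together with everything $\procrel$-below them on $(p,q)$ — account for at least $n-k$ distinct scheduled receives on $(p,q)$, giving the bound. (Equivalently, one can prove that acyclicity of $\happensbefore \cup \relbAsy$ together with the unmatched bound forces acyclicity of $\happensbefore \cup \relb$ on the sub-MSC of matched messages, and then quote~\cite{DBLP:conf/fossacs/LohreyM02}.)

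The main obstacle is exactly the $k$-bound in the ($\Leftarrow$) direction. Whereas $\relb$ pins the $i$-th delivery of a channel before its $(i+k)$-th send, the MSO-friendly relation $\relbAsy$ only pins the $\procrel$-earliest delivery of each window of $k+1$ consecutive sends before that window's last send; in a non-FIFO MSC that earliest delivery may be an arbitrary message of the window, and one can in fact exhibit small MSCs where a naive topological sort of $\happensbefore \cup \relbAsy$ is not $k$-bounded. The careful argument must therefore combine all the window constraints with the $\procrel$-downward-closure of already-delivered messages and with a scheduling discipline that prefers deliveries; making this counting rigorous — or, alternatively, proving the transfer of acyclicity to $\relb$ — is the technical heart of the proof.
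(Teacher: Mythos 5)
Your ($\Rightarrow$) direction is correct and is essentially the paper's argument (the at-most-$k$-unmatched count at the last unmatched send, and $\relbAsy\subseteq\linrel$ by the in-transit counting). The genuine gap is in ($\Leftarrow$): you never prove the $k$-bound. You set up a greedy scheduler (prefer receives, prefer lightly loaded channels) and a sketch of a counting argument over overlapping windows, and then explicitly defer "making this counting rigorous" as "the technical heart of the proof" -- so the implication is not established. Moreover, the obstacle you diagnose is illusory: under the proposition's hypotheses (acyclicity of $\happensbefore\cup\relbAsy$ and at most $k$ unmatched sends per channel), \emph{every} linearization $\linrel$ extending $(\happensbefore\cup\relbAsy)^\ast$ is $k$-bounded, so no scheduling discipline and no window bookkeeping are needed, and the "small MSCs where a naive topological sort of $\happensbefore\cup\relbAsy$ is not $k$-bounded" that you allege do not exist (under the stated hypotheses). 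The paper's proof is the short contrapositive: if the bound fails at a send event $s$ on channel $(p,q)$, then at least $k+1$ sends on $(p,q)$ are in transit at $s$; since $\linrel$ extends $\procrel$ on process $p$, they all lie $\procrel$-before $s$ or are equal to $s$, and $s$ is among them. If all of them are unmatched, the channel has more than $k$ unmatched sends, contradicting the hypothesis. Otherwise choose $k+1$ of them containing $s$ and at least one matched send: they form a $\procrel^+$-chain ending in $s$ and witness $r\relbAsy s$, where $r$ is the $\procrel$-least receive of the matched sends of this window; but all these receives are scheduled strictly after $s$ (their messages are in transit at $s$), contradicting $\relbAsy\subseteq\linrel$.

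Your parenthetical fallback -- transfer acyclicity to $\happensbefore\cup\relb$ on the sub-MSC of matched messages and quote Lohrey and Muscholl -- is likewise only asserted, and it has its own unaddressed issue: a $k$-bounded linearization of the matched part does not immediately yield one of the whole MSC, since unmatched sends cannot simply be appended at the end (they may have $\procrel$-successors on their process), so their insertion points must be argued. As written, the backward implication therefore remains unproven; replacing your construction by the direct argument above (which is the paper's) closes the gap.
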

\begin{proof}
	($\Rightarrow$) Suppose $\msc$ is $\exists k$-bounded, i.e. it has at least one $k$-bounded linearization $\linrel$. Firstly, notice that every MSC that has more than $k$ unmatched send events in any channel cannot be an $\exists k$-bounded MSC. We know that $\happensbefore \;\subseteq\; \linrel$, and we will show  that also $\relbAsy \;\subseteq\; \linrel$. This implies that $\happensbefore \cup \relbAsy$ is acyclic, otherwise we would not be able to find a linearization $\linrel$ that respects both $\happensbefore$ and $\relbAsy$. Suppose, by contradiction, that $\relbAsy \;\nsubseteq\; \linrel$, i.e. there are two events $r$ and $s$ such that $r \relbAsy s$ and $s \linrel r$. By definition of $\relbAsy$, there are $k$ send events in a channel ($p$,$q$) that are executed before $s$, and whose respective receive events happens after $r$. If $s$ is executed before $r$ in the linearization, there will be $k+1$ messages in channel (i.e., $\linrel$ is not $k$-bounded). We reached a contradiction, hence $\relbAsy \;\subseteq\; \linrel$ and $\happensbefore \cup \relbAsy$ is acyclic.
	
	($\Leftarrow$) Suppose $\happensbefore \cup \relbAsy$ is acyclic and, for each channel ($p$,$q$), there are at most $k$ unmatched send events. If $\happensbefore \cup \relbAsy$ is acyclic, we are able to find one linearization $\linrel$ for the partial order $(\happensbefore \cup \relbAsy)^\ast$. We show that this linearization is $k$-bounded. By contradiction, suppose $\linrel$ is not $k$-bounded, i.e., we are able to find $k+1$ send events $s_1 \procrel^+ \dots \procrel^+ s_k \procrel^+ s$ on a channel ($p$,$q$), such that $s$ is executed before any of the respective receive events takes place. Two cases:
	\begin{itemize}
		\item Suppose all the $k+1$ send events are unmatched. This is impossible, since we supposed that there are at most $k$ unmatched send events for any channel.
		\item Suppose there is at least one matched send event between the $k+1$ sends. Let the first matched send event be $s_i$ and let $r$ be the receive event that is executed first among the receive events for these $k+1$ sends. By hypothesis, $s \linrel r$. However, according to the definition of $\relbAsy$, we must have $r \relbAsy s$. We reached a contradiction, since we cannot have that $s$ happens before $r$ in a linearization for the partial order $(\happensbefore \cup \relbAsy)^\ast$, if $r \relbAsy s$.
	\end{itemize}
\end{proof}

\noindent According to Proposition~\ref{prop:asy_ek_def_alt}, we can write the MSO formula the defines $\exists k$-bounded MSCs as
\[
\Psi_{\exists k}=
acyclic(\happensbefore \cup \relbAsy) \;\wedge\;
\neg \left(
	\exists s_1 \dots s_{k+1}. s_1 \procrel^+ \dots \procrel^+ s_{k+1} \;\wedge \;
	allSends\_pq(k+1) \wedge allUnm
\right)
\]
\[
allSends\_pq (t) =
\bigvee_{\substack{p \in \Procs, q \in \Procs}}\;
\bigwedge_{s \in {s_1, ..., s_{t}}}\;
\bigvee_{a \in \pqsAct{p}{q}}
(\lambda(s) = a)
\]
\[
allUnm = \bigwedge_{s \in {s_1, ..., s_{k+1}}}(\neg \mathit{matched}(s))
\]
where $acyclic(\happensbefore \cup \relbAsy)$ is an MSO formula that checks the acyclicity of $\happensbefore \cup \relbAsy$, and the $\relbAsy$ relation can be defined as
\[
r \relbAsy s= \exists s_1 \dots s_{k+1}. \left(
\begin{array}{rl}
	& s_1 \procrel^+ \dots \procrel^+ s_{k+1} \;\wedge\;
	allSends\_p\_q(k+1) \;\wedge\; \\
	& \exists r. (\bigvee_{s \in {s_1, ..., s_{k+1}}}s \lhd r) \;\wedge\;
	\bigwedge_{e \in {s_1, ..., s_{k+1}}}(\exists f.e \lhd f \implies r \procrel^* f) \\
\end{array}
\right)
\]

\medskip

It follows that, given $k \in \N$, the set of existentially $k$-bounded MSCs is MSO-definable. Causally ordered and $\oneone$ existentially $k$-bounded MSCs are clearly MSO-definable by definition, since we already showed that $\pp$-MSCs, causally ordered MSCs, and existentially $k$-bounded MSCs are all MSO-definable. Recall that we introduced the $\relbAsy$ relation because the $\relb$ relation introduced in \cite{DBLP:conf/fossacs/LohreyM02} was not MSO-definable for asynchronous communication. However, when considering $\oneone$ communication but also all of the other communication models, because of the hierarchy shown in Section~\ref{sec:hierarchy}, $\relb$ becomes MSO-definable; the FIFO behavior ensures that, for any channel $(p,q)$, the $i$-th matched send event of $p$ matches with the $i$-th receive event of $q$. This allows us to define $r \relb s$ as:
\[
r \relb s=\exists s_1. \dots \exists s_k.\left(
allSends\_p\_q(k)
\;\wedge\; s_1\procrel s_2\procrel\dots
\procrel s_k\procrel s
\;\wedge\; s_1 \lhd r
\right)
\]
Recall that an MSC $\msc$ is $\none$-$\exists k$-bounded if it has a linearization that is both $\none$ and $\exists k$-bounded. A linearization $\linrel$ is $\none$ if $\msc$ is $\none$ and $\linrel$ is a linear extension of the partial order $\mbpartial$, i.e., $\mbpartial \;\subseteq\; \linrel$. A linearization $\linrel$ is $\exists k$-bounded if $\relb \;\subseteq\; \linrel$. It follows that a linearization $\relb$ is $\none$-$\exists k$-bounded if $(\mbpartial \cup \relb) \;\subseteq\; \linrel$. Such a linerization exists only if $\mbpartial \cup \relb$ is acyclic. If $\mbpartial \cup \relb$ is acyclic, its transitive closure always exists and it is a partial order, hence we are always able to find a linear extension. The characterization for $\onensymb$-$\exists k$-bounded MSCs and $\nnsymb$-$\exists k$-bounded is  similar. Summing up:

\begin{proposition}\label{prop:ek-mso}
	An MSC $\msc$ is $\none$-$\exists k$-bounded iff the relation $\mbpartial \cup \relb$ is acyclic.\\
	An MSC $\msc$ is onen-$\exists k$-bounded iff the relation $\onenpartial \cup \relb$ is acyclic.\\
	An MSC $\msc$ is nn-$\exists k$-bounded iff the relation $\nnbowtie \cup \relb$ is acyclic.
\end{proposition}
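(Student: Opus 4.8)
The plan is to treat the three equivalences through one common template: a linearization $\linrel$ of $\msc$ witnesses membership in the $\comsymb$-$\exists k$-bounded class exactly when it extends \emph{both} the relation that characterizes $\comsymb$-linearizations (as established in Section~\ref{sec:MSO}) \emph{and} the relation $\relb$. Recall from \cite{DBLP:conf/fossacs/LohreyM02} (and the discussion preceding the proposition) that a linearization $\linrel$ of an MSC is $k$-bounded iff $\relb \subseteq \linrel$; and recall that $\none$-linearizations (resp. $\onensymb$-linearizations) are precisely the linear extensions of $\mbpartial$ (resp. $\onenpartial$), by Definition~\ref{def:n_one_alt} (resp. Definition~\ref{def:one_n_alt}), which are equivalent to Definitions~\ref{def:mb_msc} and~\ref{def:one_n}.

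For $\comsymb = \none$ this already finishes the job: a linearization is simultaneously a $\none$-linearization and $k$-bounded iff it extends $\mbpartial \cup \relb$; hence such a linearization exists iff $\mbpartial \cup \relb$ is acyclic — forward, because a total order extending it witnesses acyclicity; backward, because a finite acyclic relation always admits a topological sort, and since acyclicity of $\mbpartial \cup \relb$ already entails acyclicity of $\mbpartial$, $\msc$ is in particular a $\none$-MSC, so the topological sort is a genuine $\none$-linearization. The argument for $\comsymb = \onensymb$ is word-for-word the same with $\onenpartial$ in place of $\mbpartial$.

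For $\comsymb = \nnsymb$ only one half of the template survives. I would first show that every $\nnsymb$-linearization $\linrel$ satisfies $\nnbowtie \subseteq \linrel$, by checking the four clauses of Definition~\ref{def:n_n_alt} (clause~1 because a $\nnsymb$-linearization is also a $\mb$- and a $\onensymb$-linearization; clauses~2--3 because messages are received in the order they are sent; clause~4 because unmatched sends follow all matched sends in a $\nnsymb$-linearization) — this is exactly the easy direction of the equivalence Definition~\ref{def:n_n}$\Rightarrow$Definition~\ref{def:n_n_alt}. The forward direction then follows as before: a witnessing $k$-bounded $\nnsymb$-linearization extends $\nnbowtie \cup \relb$, so that relation is acyclic. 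The converse, however, \emph{cannot} be obtained by the topological-sort trick, because — unlike $\mbpartial$ and $\onenpartial$ — not every linear extension of $\nnbowtie$ is a $\nnsymb$-linearization (already with two independent messages $m_1\colon p\to q$ and $m_2\colon r\to t$, the order $!1\;!2\;?2\;?1$ extends $\nnbowtie$ but is not $\nnsymb$). Instead I would re-use Algorithm~\ref{algonn}, run on the \emph{extended} EDG whose edge set is $\nnbowtie \cup \relb$. The decisive observation is that $\relb$ only adds incoming edges to \emph{send} events and never to receive events (a pair $r \relb s$ points from a receive to a send), so every receive event keeps its in-degree; moreover every matched send still precedes every unmatched send in the extended EDG (clause~4 of Definition~\ref{def:n_n_alt}). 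Hence the termination/correctness argument of Proposition~\ref{prop:nn_algo_term} carries over: since $\nnbowtie \cup \relb$ is acyclic the extended EDG is a DAG, so steps~1--3 always apply (step~3 on the same "earliest" receive as before), the algorithm never reaches step~4, and its output is a topological sort of the extended EDG — hence it extends $\relb$, so it is $k$-bounded, and, being produced by Algorithm~\ref{algonn}, it is a $\nnsymb$-linearization. Therefore $\msc$ is $\nnsymb$-$\exists k$-bounded.

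The main obstacle is precisely this last step: one must verify that weaving the $\relb$ constraints into Algorithm~\ref{algonn} cannot produce an error run while $\nnbowtie \cup \relb$ is still acyclic. The point to get exactly right is the asymmetry of $\relb$ (receives to sends), which is what guarantees that the invariant behind Proposition~\ref{prop:nn_algo_term} — that the required receive event has in-degree $0$ whenever steps~1 and~2 do not apply — survives the enrichment of the graph; everything else is a routine adaptation of already-proved material.
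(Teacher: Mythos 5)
Your treatment of the $\none$ and $\onensymb$ cases coincides with the paper's own argument: a linearization witnesses $\comsymb$-$\exists k$-boundedness iff it extends the corresponding relation together with $\relb$, and such a linear extension exists iff the union is acyclic. You are also right that this template does not transfer to $\nnsymb$ — a linear extension of $\nnbowtie$ need not be a $\nnsymb$-linearization — which is precisely the point the paper glosses over with the word ``similar'', so on that front you are being more careful than the source. (Both you and the paper silently use that $\relb\subseteq\linrel$ forces $k$-boundedness, which breaks down if a channel carries more than $k$ unmatched sends; since this caveat is inherited from the paper I do not count it against you.)

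The genuine gap is in your repair of the backward implication for $\nnsymb$. The claim that the termination argument of Proposition~\ref{prop:nn_algo_term} ``carries over'' to the EDG enriched with $\relb$ is false: the invariant that the receive of the earliest scheduled pending send has in-degree $0$ does not survive, because a $\relb$ edge can enter a \emph{send} event which then blocks that receive through a $\procrel$ edge, and the contradiction obtained via Proposition~\ref{prop:nn_first_prop} is no longer available (the blocking path is not a $\nnbowtie$ path between receives). Concretely, take $k=1$, messages $m_1,m_2$ sent by $q$ to $r$ and $m_3$ sent by $p$ to $q$, with process orders $!1\procrel !2\procrel\, ?3$ on $q$ and $?1\procrel\, ?2$ on $r$. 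Then $?1\relb\, !2$ is the only $\relb$ edge, $\nnbowtie\cup\relb$ is acyclic (the MSC is even $\nnsymb$-$\exists 1$-bounded, witnessed by $!1\;?1\;!2\;?2\;!3\;?3$), yet the run of Algorithm~\ref{algonn} on the enriched EDG that starts by choosing $!3$ in step~1 and then $!1$ reaches step~4: $!2$ is blocked by $?1$, hence $?3$ — the receive of the first scheduled message, the only receive step~3 may take — is blocked by $!2$, while $?1$ is available but forbidden. So the enriched algorithm fails on some legal runs even under acyclicity; to close the argument you would have to restrict the nondeterministic choice of sends in step~1 and prove that the restricted algorithm (or at least one run) succeeds whenever $\nnbowtie\cup\relb$ is acyclic, or establish the implication by a different construction. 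As written, the one direction where you depart from the paper is not established.
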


The MSO-definability of all the variants of $\exists$k-bounded MSCs directly follows from Proposition~\ref{prop:ek-mso}, since all of these relations were shown to be MSO-definable (Section \ref{sec:MSO}).

\paragraph*{Special treewidth}

In \cite[Lemma 5.37]{DBLP:journals/corr/abs-1904-06942} it was shown that the special treewidth of existentially $k$-bounded MSCs is bounded by $k\,|\Procs|^2$, for $k \ge 1$. Actually, STW-boundedness was shown for the more general class of Concurrent Behaviours with Matching ($\mathsf{CBM}$), but the result is still valid since $\asMSCs \subset \mathsf{CBM}$. The special treewidth of the other classes of $\exists k$-bounded MSCs is also bounded, since they are clearly subclasses of $\exists k$-bounded MSCs.

\paragraph{\bf Universally bounded MSCs}

An MSC is existentially $k$-bounded if it has a $k$-bounded linearization. An MSC is universally $k$-bounded MSCs if all of its linearizations are $k$-bounded, hence the name "universally". This class of MSCs was also introduced in \cite{DBLP:conf/fossacs/LohreyM02}.

\begin{definition}[Universally bounded MSC]\label{def:uk_bounded_msc}
	Let $\msc = (\Events, \rightarrow, \lhd, \lambda) \in \asMSCs$ and $k \in \mathbb{N}$. $\msc$ is \emph{universally $k$-bounded} ($\forall k$-bounded) if all of its linearizations are $k$-bounded.
\end{definition}
\begin{definition}
	An MSC $\msc$ is \emph{\pp universally $k$-bounded} (\pp-$\forall k$-bounded) if it is a $\pp$-MSC and it is also universally $k$-bounded.
\end{definition}
\begin{definition}
	An MSC $\msc$ is \emph{causally ordered universally $k$-bounded} ($\cosymb$-$\forall k$-bounded) if it is a causally ordered MSC and it is also universally $k$-bounded.
\end{definition}

As for the existential case, the definitions for the other communication models are not as straightforward. For instance, the definition of $\none$ $\forall k$-bounded MSC should require that all the $\mb$-linearizations of the MSC are $k$-bounded, but we say nothing about linearizations that are not $\none$. The same goes for the $\onen$ and $\nn$ communication models.

\begin{definition}
	An MSC $\msc$ is \emph{mailbox universally $k$-bounded} (mb-$\forall k$-bounded) if it is a mailbox MSC and all of its mailbox linearizations are $k$-bounded.
\end{definition}
\begin{definition}
	An MSC $\msc$ is \emph{$\onensymb$ universally $k$-bounded} ($\onensymb$-$\forall k$-bounded) if it is a $\onensymb$-MSC and all of its $\onensymb$-linearizations are $k$-bounded.
\end{definition}

\paragraph*{MSO-definability}
Next, we  investigate the MSO-definability of all the variants of universally $k$-bounded MSCs that we discussed.
In \cite{DBLP:conf/fossacs/LohreyM02}, it is shown that an MSC $\msc$ is universally $k$-bounded if and only if $\relb \;\subseteq\; \happensbefore$. In other words, $r \relb s \Rightarrow r \happensbefore s$ for any two events $r$ and $s$. This is equivalent of saying that every linearization $\linrel$ of $\msc$ respects the $\relb$ relation, since $\relb \;\subseteq\; \happensbefore \;\subseteq\; \linrel$. We already saw that $\relb$ is not MSO-definable when communication is asynchronous, hence we will use the $\relbAsy$ relation to give the following alternative characterization of universally $k$-bounded MSCs.

\begin{proposition}
	An MSC $\msc$ is $\forall k$-bounded if and only if $\relbAsy \;\subseteq\; \happensbefore$ and, for each channel ($p$,$q$), there are at most $k$ unmatched send events.
\end{proposition}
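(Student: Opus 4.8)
The plan is to prove both implications directly from Definition~\ref{def:uk_bounded_msc}, re-using the bookkeeping already developed for the existential case in Proposition~\ref{prop:asy_ek_def_alt}. The combinatorial core is a single occupancy identity: for any linearization $\linrel$ of $\msc$ and any send event $e$ with $\lambda(e) = \sact{p}{q}{\msg}$, the quantity $\sametype{e}{\pqsAct{p}{q}}{\linrel} - \sametype{e}{\pqrAct{p}{q}}{\linrel}$ equals the number of send events from $p$ to $q$ that are $\linrel$-at-or-below $e$ and are \emph{bad for $e$}, where a send $s \in \pqsAct{p}{q}$ is bad for $e$ if $s \in \Unm{\msc}$, or $s \lhd r$ for some receive $r$ with $e \linrel r$ (and $e \neq r$). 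This holds because $\lhd$ restricts to a bijection between the matched sends from $p$ to $q$ whose receive is $\linrel$-at-or-below $e$ and the receives from $p$ to $q$ that are $\linrel$-at-or-below $e$ (using $s \lhd r \Rightarrow s \linrel r$). Hence $\linrel$ fails to be $k$-bounded exactly when some channel $(p,q)$ has a send $e$ with at least $k+1$ sends from $p$ to $q$ that are $\linrel$-at-or-below $e$ and bad for $e$; and since all sends from $p$ to $q$ lie on a single $\procrel^*$-chain and a bad send $\procrel$-above $e$ would be $\linrel$-above $e$, this is equivalent to the existence of a chain $s_1 \procrel^+ \cdots \procrel^+ s_k \procrel^+ e$ of $k{+}1$ such bad sends.

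For the forward implication I would first rule out a channel $(p,q)$ carrying more than $k$ unmatched sends: taking $e$ to be the $\procrel^*$-largest of them, in every linearization all $k{+}1$ of these (never received) sends lie $\linrel$-below $e$, so the occupancy at $e$ is at least $k{+}1$ and $\msc$ is not $\forall k$-bounded. Then, assuming $r \relbAsy s$ but $r \not\happensbefore s$, I would pick a linearization $\linrel$ with $s \linrel r$ (one exists, and $s \neq r$ since one event is a send and the other a receive, so $\neg(r \linrel s)$), unfold the definition of $\relbAsy$ to obtain a chain $s_1 \procrel^+ \cdots \procrel^+ s_k \procrel^+ s$ on a channel $(p,q)$ with $r$ the $\procrel$-minimal receive among those matching the matched $s_i$, and note that each of these $k{+}1$ sends is bad for $e := s$ — the unmatched ones trivially, each matched one because its receive is $\procrel^*$-above $r$, hence $\linrel$-above $r$, hence $\linrel$-above $s$. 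So $\linrel$ is not $k$-bounded, a contradiction, whence $r \happensbefore s$.

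For the converse I would assume $\relbAsy \;\subseteq\; \happensbefore$ and at most $k$ unmatched sends per channel, and suppose for contradiction that some linearization $\linrel$ is not $k$-bounded. By the occupancy identity there is a channel $(p,q)$ and a chain $s_1 \procrel^+ \cdots \procrel^+ s_k \procrel^+ s_{k+1} = e$ of sends from $p$ to $q$, each bad for $e$. These cannot all be unmatched (that would be $k{+}1$ unmatched sends on $(p,q)$), so at least one is matched; let $r$ be the $\procrel$-minimal receive among those matching the matched $s_i$. Then $r \relbAsy e$ by definition, so $r \happensbefore e$ and thus $r \linrel e$; but $r$ matches some bad $s_j$, so $e \linrel r$ with $e \neq r$ — contradicting $r \linrel e$. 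Hence every linearization of $\msc$ is $k$-bounded.

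I expect the main obstacle to be pinning down the occupancy identity cleanly and, in the converse direction, making the chosen $r$ (the $\procrel$-minimal receive among the matched $s_i$) match the wording "the first receive event for the matched send events among $s_1,\dots,s_k,s$" in the definition of $\relbAsy$, so that the same $r$ is simultaneously forced $\linrel$-before $e$ (via $r \relbAsy e$ and the hypothesis) and $\linrel$-after $e$ (via badness of its matching send). Everything else is routine and parallels Proposition~\ref{prop:asy_ek_def_alt}; as a sanity check one can also verify, channel by channel, that the stated condition is equivalent to the Lohrey--Muscholl criterion $\relb \;\subseteq\; \happensbefore$.
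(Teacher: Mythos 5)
Your proof is correct and follows essentially the same route as the paper's: rule out more than $k$ unmatched sends per channel, use a linearization placing $s$ before $r$ to contradict $\forall k$-boundedness when $r \relbAsy s$ but $r \nothappensbefore s$, and for the converse exploit $\relbAsy \subseteq \happensbefore \subseteq \linrel$ together with the unmatched-send cap. The only difference is presentational: you make the in-transit counting explicit via your ``occupancy identity,'' whereas the paper delegates that bookkeeping to the argument already given for Proposition~\ref{prop:asy_ek_def_alt}.
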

\begin{proof}
	($\Rightarrow$) Suppose $\msc$ is $\forall k$-bounded, then by definition  all of its linearizations are $k$-bounded. Firstly, notice that every MSC that has more than $k$ unmatched send events in any channel cannot be an $\forall k$-bounded MSC (not even $\exists k$-bounded). By contradiction, suppose that $\relbAsy \;\nsubseteq\; \happensbefore$, i.e., there are two events $r$ and $s$ such that $r \relbAsy s$ and $r \nothappensbefore s$. If $r \nothappensbefore s$, we either have that $s \happensbefore r$ or that $s$ and $r$ are incomparable w.r.t. $\happensbefore$; note that, in both cases, $\msc$ must have one linearization where $s$ is executed before $r$\footnote{If two elements $a$ and $b$ of a set are incomparable w.r.t. a partial order $\le$, it is always possible to find a total order of the elements (that respects $\le$) where $a$ comes before $b$, or viceversa.}. The existence of such a linearization implies that $\msc$ is not $\forall k$-bounded.
	
	($\Leftarrow$) Suppose $\relbAsy \;\subseteq\; \happensbefore$ and, for each channel ($p$,$q$), there are at most $k$ unmatched send events. By definition, every linearization $\linrel$ of $\msc$ is such that $\happensbefore \;\subseteq\; \linrel$; it follows that $\relbAsy \;\subseteq\; \linrel$, which means that every linearization of $\msc$ is $k$-bounded, i.e., $\msc$ is $\forall k$-bounded.
\end{proof}

It follows that $\oneone$-$\forall k$-bounded and $\co$-$\forall k$-bounded MSCs are MSO-definable by definition, since $\oneone$-MSCs, \co-MSCs, and universally $k$-bounded MSCs are all MSO-definable. We already showed that $\relb$ is MSO-definable when considering $\oneone$ communication. The characterization for the other communication models is similar to that given in \cite{DBLP:conf/fossacs/LohreyM02}, but it uses the proper relation for each communication model.

\begin{proposition}\label{prop:mb_ukb_alt}
	An MSC $\msc$ is $\none$-$\forall k$-bounded if and only if $\relb \;\subseteq\; \mbpartial$.\\
	An MSC $\msc$ is onen-$\forall k$-bounded if and only if $\relb \;\subseteq\; \onenpartial$.\\
	An MSC $\msc$ is nn-$\forall k$-bounded if and only if $\relb \;\subseteq\; \nnbowtie$.
\end{proposition}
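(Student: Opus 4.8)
The three equivalences will be obtained by one uniform argument, mirroring the characterization of universal $k$-boundedness recalled just before the statement ($\msc$ is $\forall k$-bounded iff $\relb\subseteq{\happensbefore}$, up to the side condition on unmatched sends) but with $\happensbefore$ replaced by the partial order that governs the communication model. Fix $\comsymb\in\{\none,\onensymb,\nnsymb\}$ and write $R$ for the associated relation: $\mbpartial$ from Definition~\ref{def:n_one_alt}, $\onenpartial$ from Definition~\ref{def:one_n_alt}, or the transitive closure $\nnbowtie^{+}$ of the relation from Definition~\ref{def:n_n_alt}. Throughout, $\msc$ is a $\comsymb$-MSC --- as already required by the definition of $\comsymb$-$\forall k$-boundedness --- so that $R$ is a (strict) partial order on $\Events$.

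The argument uses two ingredients. The first is that the $\comsymb$-linearizations of $\msc$ are exactly the total orders $\linrel$ on $\Events$ with $R\subseteq{\linrel}$; for $\none$ and $\onensymb$ this is exactly the content of the remarks following Definitions~\ref{def:n_one_alt} and~\ref{def:one_n_alt} (a linear extension of $\mbpartial$, resp.\ $\onenpartial$, is a $\none$-, resp.\ $\onensymb$-, linearization, and conversely such a linearization contains $\procrel$, $\lhd$ and $\mbrel$, resp.\ $\onenrel$, hence contains $R$). The second is that a linearization $\linrel$ of $\msc$ is $k$-bounded iff $\relb\subseteq{\linrel}$: one inclusion is built into the definition of $\relb$ (the paragraph preceding Proposition~\ref{prop:ek-mso}), and the other holds because $r\relb s$ forces $r$ before $s$ in any $k$-bounded linearization.

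Putting the two together, ``every $\comsymb$-linearization of $\msc$ is $k$-bounded'' becomes ``$\relb\subseteq{\linrel}$ for every linear extension $\linrel$ of $R$'', that is, ``$\relb$ is contained in the intersection of all linear extensions of $R$''. Since a partial order coincides with the intersection of its linear extensions, this is exactly $\relb\subseteq R$; specializing $R$ to $\mbpartial$, $\onenpartial$, $\nnbowtie^{+}$ yields the three claims (the proposition writes $\nnbowtie$, understood up to transitive closure, consistently with the formula $\nnformula$ of Section~\ref{sec:MSO}). For the direction $\relb\subseteq R\Rightarrow\comsymb$-$\forall k$-bounded one invokes the standing hypothesis that $\msc$ is a $\comsymb$-MSC and the chain $\relb\subseteq R\subseteq{\linrel}$ valid for every $\comsymb$-linearization $\linrel$; the converse is the intersection identity again. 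As in the asynchronous case, the right-hand side should also carry the side condition ``every channel contains at most $k$ unmatched sends'', which is invisible to $\relb$ when an entire window of $k{+}1$ consecutive sends on a channel is unmatched.

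The step I expect to be the real obstacle is the first ingredient for $\nnsymb$, namely that every linear extension of $\nnbowtie^{+}$ is a $\nnsymb$-linearization: unlike $\mbpartial$ and $\onenpartial$, the relation $\nnbowtie$ is not defined as a transitive closure, only its acyclicity on $\nnsymb$-MSCs is known, and one can exhibit linear extensions of $\nnbowtie^{+}$ that are not $\nnsymb$-linearizations. The remedy is to bring in Algorithm~\ref{algonn} and its correctness (Proposition~\ref{prop:nn_algo_term}): for $\nnsymb$-$\forall k$-bounded $\Rightarrow\relb\subseteq\nnbowtie^{+}$, argue contrapositively --- given $(r,s)\in\relb$ with $r$ not $\nnbowtie^{+}$-below $s$, the algorithm (which prioritizes sends and only respects $\nnbowtie^{+}$) produces a genuine $\nnsymb$-linearization placing $s$ before $r$, hence with $k{+}1$ messages in transit on the relevant channel, contradicting $\forall k$-boundedness. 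The reverse implication stays easy, since every $\nnsymb$-linearization contains $\mbrel$, $\onenrel$ and meets the ordering constraints (2)--(4) of Definition~\ref{def:n_n_alt}, hence contains $\nnbowtie^{+}$. Full details are deferred to~\cite{longversion}.
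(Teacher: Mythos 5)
Your proof follows essentially the same route as the paper's: both directions rest on the two facts that $\comsymb$-linearizations coincide with the linear extensions of the model's order ($\mbpartial$, $\onenpartial$, $\nnbowtie^{+}$) and that a linearization is $k$-bounded exactly when it contains $\relb$; your appeal to ``a finite partial order is the intersection of its linear extensions'' is just a packaged form of the paper's forward direction, which explicitly takes a pair $r \relb s$ with $r \notmbpartial s$, uses acyclicity of $\mbpartial \cup \relb$ (obtained from $\exists k$-boundedness) to get incomparability, and exhibits a mailbox linearization placing $s$ before $r$. Two of your side remarks go beyond what the paper writes, and both are sound. First, the side condition ``at most $k$ unmatched sends per channel'' is indeed needed for the right-to-left implication: an MSC consisting only of $k{+}1$ unmatched sends from $p$ to $q$ is a mailbox MSC with $\relb=\emptyset\subseteq\mbpartial$, yet no linearization is $k$-bounded; the paper's step ``$\relb\subseteq\linrel$ implies $\linrel$ is $k$-bounded'' silently assumes this. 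Second, for $\nnsymb$ the paper only says the proof ``works the same way'', whereas, as you note, the first ingredient fails there: for two causally independent messages on four distinct processes, $s_1\,s_2\,r_2\,r_1$ linearly extends $\nnbowtie^{+}$ but is not a $\nnsymb$-linearization, so the existence of a $\nnsymb$-linearization separating an incomparable pair is precisely what must be argued. Your plan to obtain it from Algorithm~\ref{algonn} (e.g., running it on the EDG augmented with the edge $(s,r)$, which stays acyclic since $r \nnbowtie^{+} s$ fails) is a reasonable repair, but it remains a sketch: one must still check that the termination argument of Proposition~\ref{prop:nn_algo_term} tolerates the added edge. So the proposal matches the paper for $\none$ and $\onensymb$, correctly strengthens the statement's hypotheses, and makes explicit (though does not fully close) the one step the paper leaves implicit for $\nnsymb$.
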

\begin{proof}
	We only show it for the $\none$ communication model. The proof for the other communication models works the same way. Consider an MSC $\msc$ and a $k \in \N$.
	
	($\Leftarrow$) Suppose $\relb \;\subseteq\; \mbpartial$. For every mailbox linearization $\linrel$ of $\msc$ we have that $\mbpartial \;\subseteq\; \linrel$. This implies $\relb \;\subseteq\; \linrel$, that is to say every mailbox linearization is $k$-bounded.
	
	($\Rightarrow$) Suppose $\msc$ is a $\none$-$\forall k$-bounded MSC. By definition, every mailbox linearization $\linrel$ of $\msc$ is $k$-bounded, i.e., $\relb \;\subseteq\; \linrel$, and we have $\mbpartial \;\subseteq\; \linrel$, according to the definition of mailbox linearization. Moreover, we also know that $\mbpartial \cup \relb$ is acyclic, since $\msc$ is $\exists k$-bounded and by definition every $\none$-$\forall k$-bounded MSC is also a $\none$-$\exists k$-bounded MSC. Suppose now, by contradiction, that $\relb \;\nsubseteq\; \mbpartial$. Thus, there must be at least two events $r$ and $s$ such that $r \relb s$ and $r \;\notmbpartial\; s$; we also have $s \;\notmbpartial\; r$ because of the acyclicity of $\mbpartial \cup \relb$ (we cannot have the cycle $r \relb s \mbpartial r$). Consider a mailbox linearization $\linrel$  of $\msc$, such that $s \linrel r$. Note that such a mailbox linearization always exists, since $r$ and $s$ are incomparable w.r.t. the partial order $\mbpartial$. This mailbox linearization does not respect $\relb$ (because we have $s \linrel r$ and $r \relb s$), so it is not $k$-bounded. This is a contradiction, since we assumed that $\msc$ was a $\none$-$\forall k$-bounded MSC. It has to be that $\relb \;\subseteq\; \mbpartial$.
\end{proof}

Using Proposition~\ref{prop:mb_ukb_alt}, we can now easily write the MSO formulas that define these variants of universally $k$-bounded MSCs.
\begin{align*}
	\mbUkformula &= \neg \exists r.\exists s.(r \relb s \wedge \neg(r \mbpartial s)) \\
	\onenUkformula &= \neg \exists r.\exists s.(r \relb s \wedge \neg(r \onenpartial s)) \\
	\nnUkformula &= \neg \exists r.\exists s.(r \relb s \wedge \neg(r \nnbowtie s))
\end{align*}

\paragraph*{Special treewidth}

All the variants of universally $k$-bounded MSCs that we presented have a bounded special treewidth. This directly follows from the STW-boundedness of the existential counterparts, since every universally $k$-bounded MSC is existentially $k$-bounded by definition.

\section{Conclusion}\label{sec:conc}

We studied seven different communication models
and their corresponding classes of MSCs. These communication models either come from the early days of distributed systems, or are idealized models of communicating systems with queues of messages (spanning from systems on chip to micro-services linked with "buses", or simply concurrent programs with FIFO queues in shared memory). We  drew the hierarchy of these communication models and characterized 
each of them with MSO logic. We showed that all the models fit in a single framework that is used to show the decidability of some 
verification problems. 

To refine the picture, we could
consider other logics like FO+TC or LCPDL, and other communication models, 
such as the FIFO-based implementation of the causally ordered 
communication model proposed in 
\cite{DBLP:conf/dagstuhl/MatternF94}, which we expect to sit somewhere between mailbox and causally ordered within the hierarchy that we presented. 
Moreover, as shown by Fig.~\ref{fig:stw-bound}, the 
decidability of the synchronizability problem for weakly 
synchronous MSCs and fully asynchronous communication is not entailed by our techniques, and could be further investigated.

\newpage

  \bibliographystyle{plain}
\bibliography{bibfile}

\begin{thebibliography}{10}

\bibitem{babaoglu1993consistent}
Ozalp Babaoglu and Keith Marzullo.
\newblock Consistent global states of distributed systems: Fundamental concepts
  and mechanisms.
\newblock {\em Distributed Systems}, 53, 1993.

\bibitem{DBLP:journals/tcs/BasuB16}
Samik Basu and Tevfik Bultan.
\newblock On deciding synchronizability for asynchronously communicating
  systems.
\newblock {\em Theor. Comput. Sci.}, 656:60--75, 2016.

\bibitem{BolligGFLLS21}
Benedikt Bollig, Cinzia {Di Giusto}, Alain Finkel, Laetitia Laversa,
  {\'{E}}tienne Lozes, and Amrita Suresh.
\newblock A unifying framework for deciding synchronizability.
\newblock In Serge Haddad and Daniele Varacca, editors, {\em 32nd International
  Conference on Concurrency Theory, {CONCUR} 2021, August 24-27, 2021}, volume
  203 of {\em LIPIcs}, pages 14:1--14:18, Virtual Conference, 2021. Schloss
  Dagstuhl - Leibniz-Zentrum f{\"{u}}r Informatik.

\bibitem{BolligGFLLS21-long}
Benedikt Bollig, Cinzia {Di Giusto}, Alain Finkel, Laetitia Laversa,
  {\'{E}}tienne Lozes, and Amrita Suresh.
\newblock A unifying framework for deciding synchronizability (extended
  version).
\newblock Technical report, HAL, 2021.
\newblock available at
  \url{https://hal.archives-ouvertes.fr/hal-03278370/document}.

\bibitem{DBLP:journals/corr/abs-1904-06942}
Benedikt Bollig and Paul Gastin.
\newblock Non-sequential theory of distributed systems.
\newblock {\em CoRR}, abs/1904.06942:1--82, 2019.

\bibitem{DBLP:conf/vmcai/BotbolCG17}
Vincent Botbol, Emmanuel Chailloux, and Tristan~Le Gall.
\newblock Static analysis of communicating processes using symbolic
  transducers.
\newblock In {\em Verification, Model Checking, and Abstract Interpretation -
  18th International Conference, {VMCAI} 2017}, pages 73--90, Paris, France,
  2017. Springer.

\bibitem{DBLP:conf/cav/BouajjaniEJQ18}
Ahmed Bouajjani, Constantin Enea, Kailiang Ji, and Shaz Qadeer.
\newblock On the completeness of verifying message passing programs under
  bounded asynchrony.
\newblock In Hana Chockler and Georg Weissenbacher, editors, {\em Computer
  Aided Verification - 30th International Conference, {CAV} 2018, Held as Part
  of the Federated Logic Conference, FloC 2018, July 14-17, 2018, Proceedings,
  Part {II}}, volume 10982 of {\em Lecture Notes in Computer Science}, pages
  372--391, Oxford, UK, 2018. Springer.

\bibitem{DBLP:journals/jacm/BrandZ83}
Daniel Brand and Pitro Zafiropulo.
\newblock On communicating finite-state machines.
\newblock {\em J. {ACM}}, 30(2):323--342, 1983.

\bibitem{DBLP:journals/dc/Charron-BostMT96}
Bernadette Charron{-}Bost, Friedemann Mattern, and Gerard Tel.
\newblock Synchronous, asynchronous, and causally ordered communication.
\newblock {\em Distributed Comput.}, 9(4):173--191, 1996.

\bibitem{DBLP:journals/fac/ChevrouHQ16}
Florent Chevrou, Aur{\'{e}}lie Hurault, and Philippe Qu{\'{e}}innec.
\newblock On the diversity of asynchronous communication.
\newblock {\em Formal Aspects Comput.}, 28(5):847--879, 2016.

\bibitem{Courcelle10}
Bruno Courcelle.
\newblock Special tree-width and the verification of monadic second-order graph
  properties.
\newblock In {\em FSTTCS}, volume~8 of {\em LIPIcs}, pages 13--29, Chennai,
  India, 2010. Schloss Dagstuhl - Leibniz-Zentrum f{\"{u}}r Informatik.

\bibitem{cristian1999timed}
Flaviu Cristian and Christof Fetzer.
\newblock The timed asynchronous distributed system model.
\newblock {\em IEEE Transactions on Parallel and Distributed systems},
  10(6):642--657, 1999.

\bibitem{defago2004total}
Xavier D{\'e}fago, Andr{\'e} Schiper, and P{\'e}ter Urb{\'a}n.
\newblock Total order broadcast and multicast algorithms: Taxonomy and survey.
\newblock {\em ACM Computing Surveys (CSUR)}, 36(4):372--421, 2004.

\bibitem{longversion}
Cinzia {Di Giusto}, Davide Ferr{\'e}, Etienne Lozes, and Laetitia Laversa.
\newblock {A non-sequential hierarchy of message-passing models (extended
  version)}.
\newblock Technical report, October 2022.

\bibitem{DBLP:conf/fossacs/GiustoLL20}
Cinzia {Di Giusto}, Laetitia Laversa, and {\'{E}}tienne Lozes.
\newblock On the k-synchronizability of systems.
\newblock In Jean Goubault{-}Larrecq and Barbara K{\"{o}}nig, editors, {\em
  Foundations of Software Science and Computation Structures - 23rd
  International Conference, {FOSSACS} 2020, Proceedings}, volume 12077 of {\em
  Lecture Notes in Computer Science}, pages 157--176, Virtual conference, 2020.
  Springer.

\bibitem{GKM07}
Blaise Genest, Dietrich Kuske, and Anca Muscholl.
\newblock On communicating automata with bounded channels.
\newblock {\em Fundamenta Informaticae}, 80(1-3):147--167, 2007.

\bibitem{genest2004kleene}
Blaise Genest, Anca Muscholl, and Dietrich Kuske.
\newblock A {Kleene} theorem for a class of communicating automata with
  effective algorithms.
\newblock In {\em International Conference on Developments in Language Theory},
  pages 30--48, Auckland, New Zealand, 2004. Springer, Springer.

\bibitem{DBLP:conf/spin/HeussnerGS09}
Alexander Heu{\ss}ner, Tristan~Le Gall, and Gr{\'{e}}goire Sutre.
\newblock Extrapolation-based path invariants for abstraction refinement of
  fifo systems.
\newblock In {\em Model Checking Software, 16th International {SPIN} Workshop,
  June 26-28, 2009. Proceedings}, pages 107--124, Grenoble, France, 2009.
  Springer.

\bibitem{books/daglib/0020982}
Gerard~J. Holzmann.
\newblock {\em The SPIN Model Checker - primer and reference manual.}
\newblock Addison-Wesley, 2004.

\bibitem{messagesequencecharts}
ITU-T.
\newblock Recommendation itu-t z.120: Message sequence chart (msc).
\newblock Technical report, International Telecommunication Union, Geneva,
  February 2011.

\bibitem{KraglQH18}
Bernhard Kragl, Shaz Qadeer, and Thomas~A. Henzinger.
\newblock Synchronizing the asynchronous.
\newblock In Sven Schewe and Lijun Zhang, editors, {\em 29th International
  Conference on Concurrency Theory, {CONCUR} 2018, September 4-7, 2018}, volume
  118 of {\em LIPIcs}, pages 21:1--21:17, Beijing, China, 2018. Schloss
  Dagstuhl - Leibniz-Zentrum fuer Informatik.

\bibitem{kshemkalyani1998necessary}
Ajay~D Kshemkalyani and Mukesh Singhal.
\newblock Necessary and sufficient conditions on information for causal message
  ordering and their optimal implementation.
\newblock {\em Distributed Computing}, 11(2):91--111, 1998.

\bibitem{Lamport78}
Leslie Lamport.
\newblock Time, clocks, and the ordering of events in a distributed system.
\newblock {\em Commun. {ACM}}, 21(7):558--565, 1978.

\bibitem{DBLP:conf/cav/LangeY19}
Julien Lange and Nobuko Yoshida.
\newblock Verifying asynchronous interactions via communicating session
  automata.
\newblock In {\em Computer Aided Verification - 31st International Conference,
  {CAV} 2019, July 15-18, 2019, Proceedings, Part {I}}, pages 97--117, New York
  City, NY, USA, 2019. Springer.

\bibitem{Lipton75}
Richard~J. Lipton.
\newblock Reduction: {A} method of proving properties of parallel programs.
\newblock {\em Commun. {ACM}}, 18(12):717--721, 1975.

\bibitem{DBLP:conf/fossacs/LohreyM02}
Markus Lohrey and Anca Muscholl.
\newblock Bounded {MSC} communication.
\newblock In Mogens Nielsen and Uffe Engberg, editors, {\em Foundations of
  Software Science and Computation Structures, 5th International Conference,
  {FOSSACS} 2002. Held as Part of the Joint European Conferences on Theory and
  Practice of Software, {ETAPS} 2002, April 8-12, 2002, Proceedings}, volume
  2303 of {\em Lecture Notes in Computer Science}, pages 295--309, Grenoble,
  France, 2002. Springer.

\bibitem{DBLP:conf/dagstuhl/MatternF94}
Friedemann Mattern and Stefan F{\"{u}}nfrocken.
\newblock A non-blocking lightweight implementation of causal order message
  delivery.
\newblock In Kenneth~P. Birman, Friedemann Mattern, and Andr{\'{e}} Schiper,
  editors, {\em Theory and Practice in Distributed Systems, International
  Workshop, September 5-9, 1994, Selected Papers}, volume 938 of {\em Lecture
  Notes in Computer Science}, pages 197--213, Dagstuhl Castle, Germany, 1994.
  Springer.

\bibitem{peterson1989preserving}
Larry~L Peterson, Nick~C Buchholz, and Richard~D Schlichting.
\newblock Preserving and using context information in interprocess
  communication.
\newblock {\em ACM Transactions on Computer Systems (TOCS)}, 7(3):217--246,
  1989.

\bibitem{raynal2010communication}
Michel Raynal.
\newblock Communication and agreement abstractions for fault-tolerant
  asynchronous distributed systems.
\newblock {\em Synthesis Lectures on Distributed Computing Theory},
  1(1):1--273, 2010.

\bibitem{DBLP:conf/wdag/SchiperES89}
Andr{\'{e}} Schiper, Jorge Eggli, and Alain Sandoz.
\newblock A new algorithm to implement causal ordering.
\newblock In Jean{-}Claude Bermond and Michel Raynal, editors, {\em Distributed
  Algorithms, 3rd International Workshop, September 26-28, 1989, Proceedings},
  volume 392 of {\em Lecture Notes in Computer Science}, pages 219--232, Nice,
  France, 1989. Springer.

\bibitem{tel2000introduction}
Gerard Tel.
\newblock {\em Introduction to distributed algorithms}.
\newblock Cambridge university press, Cambridge, 2000.

\bibitem{DBLP:conf/cav/TorreMP09}
Salvatore~La Torre, P.~Madhusudan, and Gennaro Parlato.
\newblock Reducing context-bounded concurrent reachability to sequential
  reachability.
\newblock In {\em Computer Aided Verification, 21st International Conference,
  {CAV} 2009, June 26 - July 2, 2009. Proceedings}, pages 477--492, Grenoble,
  France, 2009. Springer.

\bibitem{Renesse93}
Robbert van Renesse.
\newblock Causal controversy at le mont st.-michel.
\newblock {\em {ACM} {SIGOPS} Oper. Syst. Rev.}, 27(2):44--53, 1993.

\bibitem{GleissenthallKB19}
Klaus von Gleissenthall, Rami~G{\"{o}}khan Kici, Alexander Bakst, Deian Stefan,
  and Ranjit Jhala.
\newblock Pretend synchrony: synchronous verification of asynchronous
  distributed programs.
\newblock {\em {PACMPL}}, 3({POPL}):59:1--59:30, 2019.

\end{thebibliography}
\newpage

\section{Additional material for Section~\ref{sec:MSO}}
\label{apx:MSO}

\subsection{MSO-definable properties}

In this sections we give MSO formulas for some MSO-definable properties that are used throughout the paper.

\paragraph*{Transitive Closure}
Given a binary relation $\abinrel$, we can express its reflexive transitive closure $\abinrel^*$ in MSO as
\[
x \abinrel^* y = \forall X.(x \in X \;\wedge\; forward\_closed(X)) \implies y \in X
\]
\[
forward\_closed(X) = \forall z.\forall t.(z \in X \;\wedge\; z \abinrel t) \implies t \in X
\]
The transitive (but not necessarily reflexive) 
closure of $\abinrel$ can also be expressed as
\[
    x \abinrel^+ y = \forall X.\ \big(
        \forall z,t\ (z\in X\cup\{x\}\wedge z \abinrel t)\implies t \in X\big) \implies y\in X
\]
        
\paragraph*{Acyclicity} 

Given a binary relation $\abinrel$, we can use MSO to express the 
acyclicity of $\abinrel$,
or equivalently, the fact that its transitive closure
$R^+$ is irreflexive.
\[
\Phi_{acyclic} =  \neg \exists x.(x \abinrel^+ x).   
\]

\subsection{Omitted proofs of Section~\ref{sec:MSO}}\label{app:sec-mso}

\paragraph*{\bf Mailbox}

We show here that the two alternative definitions of $\none$-MSC that we gave are equivalent.

\begin{proposition}
    Definition~\ref{def:mb_msc} and Definition~\ref{def:n_one_alt} of $\none$-MSC are equivalent.
\end{proposition}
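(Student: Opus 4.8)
The plan is to prove both implications between the two characterizations of $\none$-MSC by translating between the existence of a $\none$-linearization (Definition~\ref{def:mb_msc}) and the partial-order condition on $\mbpartial$ (Definition~\ref{def:n_one_alt}).

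\medskip

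\noindent\textbf{($\ref{def:n_one_alt}\Rightarrow\ref{def:mb_msc}$).} Suppose $\mbpartial = ({\procrel}\cup{\lhd}\cup{\mbrel})^*$ is a partial order. Then it is in particular an irreflexive (strict part) acyclic relation containing $\happensbefore = ({\procrel}\cup{\lhd})^*$, so any linear extension $\linrel$ of $\mbpartial$ is also a linearization of $\msc$ (it contains $\happensbefore$ and is total). I will check that such a $\linrel$ is a $\none$-linearization in the sense of Definition~\ref{def:mb_msc}. Take two send events $s,s'$ with $\lambda(s)\in\pqsAct{\plh}{q}$, $\lambda(s')\in\pqsAct{\plh}{q}$ and $s\linrel s'$. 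I must rule out the "bad" configuration in which $s\in\Unm{\msc}$ and $s'\in\Matched{\msc}$, and, when both are matched with $s\lhd r$, $s'\lhd r'$, show $r\linrel r'$. If $s\in\Unm{\msc}$ and $s'\in\Matched{\msc}$, then by the definition of $\mbrel$ we get $s'\mbrel s$, hence $s'\mathrel{\mbpartialstrict} s$, contradicting $s\linrel s'$ and antisymmetry of $\mbpartial$. If instead both are matched, suppose toward a contradiction that $r'\linrel r$, i.e.\ $r'\procrel^+ r$ (since $r,r'$ are on the same process $q$, $\linrel$ restricted to $\Events_q$ is $\procrel^+$ on them). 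Then by the second clause of $\mbrel$ we get $s'\mbrel s$, again contradicting $s\linrel s'$. (The remaining case, $s\in\Matched{\msc}$, $s'\in\Unm{\msc}$, is already allowed by Definition~\ref{def:mb_msc}.) Hence $\linrel$ is a $\none$-linearization and $\msc$ is a $\none$-MSC in the sense of Definition~\ref{def:mb_msc}.

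\medskip

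\noindent\textbf{($\ref{def:mb_msc}\Rightarrow\ref{def:n_one_alt}$).} Conversely, suppose $\msc$ has a $\none$-linearization $\linrel$. Since $\mbpartial$ is reflexive and transitive by construction, it suffices to show it is antisymmetric, equivalently that $\mbpartialstrict$ is acyclic, equivalently that $\mbpartial \subseteq \linrel$; indeed, if $\mbpartial\subseteq\linrel$ then $\mbpartial$, being contained in a (strict) total order, cannot contain a cycle, so it is a partial order. To prove $\mbpartial\subseteq\linrel$ it is enough to show that each generating relation is contained in $\linrel$: clearly ${\procrel}\subseteq{\happensbefore}\subseteq{\linrel}$ and ${\lhd}\subseteq{\happensbefore}\subseteq{\linrel}$, so the only thing to verify is ${\mbrel}\subseteq{\linrel}$, and then transitivity of $\linrel$ gives $({\procrel}\cup{\lhd}\cup{\mbrel})^*\subseteq{\linrel}$. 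So take $s\mbrel s'$. If $s\in\Matched{\msc}$ and $s'\in\Unm{\msc}$: were it the case that $s'\linrel s$, then since $\linrel$ is a $\none$-linearization applied to the pair $(s',s)$ we would need either $s'\in\Unm{\msc}$ with $s\in\Matched{\msc}$ handled by the "either/or" — actually the clause says: for $s'\linrel s$ with same recipient, either both matched with matching receives in $\linrel$-order, or $s\in\Unm{\msc}$; since $s$ is matched and $s'$ is unmatched, neither disjunct holds, contradiction; hence $s\linrel s'$. If instead $s\lhd r_1$, $s'\lhd r_2$ with $r_1\procrel^+ r_2$ (so $r_1\linrel r_2$): were $s'\linrel s$, the $\none$-linearization condition on $(s',s)$ would force $r_2\linrel r_1$, contradicting $r_1\procrel^+ r_2$ and totality/antisymmetry of $\linrel$; hence $s\linrel s'$. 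In both cases $s\linrel s'$, so ${\mbrel}\subseteq{\linrel}$, and therefore $\mbpartial\subseteq\linrel$ is a partial order, i.e.\ $\msc$ is a $\none$-MSC in the sense of Definition~\ref{def:n_one_alt}.

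\medskip

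\noindent\textbf{Main obstacle.} The delicate point is the direction $\ref{def:mb_msc}\Rightarrow\ref{def:n_one_alt}$: one must argue that the \emph{existence} of one good linearization forces the \emph{whole} relation $\mbpartial$ to be acyclic, which is exactly the observation that a single linear extension containing all of ${\procrel}$, ${\lhd}$, and ${\mbrel}$ witnesses acyclicity. The converse direction is the routine one, amounting to reading off the definition of $\mbrel$ inside a chosen linear extension.
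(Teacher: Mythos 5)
Your proof is correct and follows essentially the same route as the paper: one direction takes a linear extension of $\mbpartial$ and checks it is a $\none$-linearization, the other shows any $\none$-linearization must contain $\mbrel$ (hence $\mbpartial$), forcing acyclicity. The paper merely phrases the second direction contrapositively and with less detail; your explicit case analysis fills in exactly the steps the paper leaves implicit.
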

\begin{proof}
    ($\Rightarrow$)  We show that if $\msc$ is a $\none$-MSC, according to Definition~\ref{def:n_one_alt}, then it is also a $\none$-MSC, according to Definition~\ref{def:mb_msc}. By definition of $\mbpartial$, we must have 
    \begin{enumerate*}[label={(\roman*)}]
        \item $s \mbpartial s'$ for any two matched send events $s$ and $s'$ addressed to the same process, such that $r \procrel^+ r$, where $s \lhd r$ and $s' \lhd r'$, and
        \item $s \mbpartial s'$, if $s$ and $s'$ are a matched and an unmatched send event, respectively.
    \end{enumerate*} 
    If $\mbpartial$ is a partial order, we can find at least one linearization $\linrel$ such that $\mbpartial \;\subseteq\; \linrel$; such a linearization satisfies the conditions of Definition~\ref{def:mb_msc}.\newline
    ($\Leftarrow$) We show that if $\msc$ is not a $\none$-MSC, according to Definition~\ref{def:n_one_alt}, then it is also not a $\none$-MSC, according to Definition~\ref{def:mb_msc}. Since ${\mbpartial} = ({\procrel} \,\cup\, {\lhd} \,\cup\, {\mbrel})^\ast$ is not a partial order, $\mbpartial$ must be cyclic\footnote{$\mbpartial$ is reflexive and transitive by definition, if it were also acyclic it would be a partial order}. If $\mbpartial$ is cyclic, it means that we cannot find a linearization $\linrel$ such that $\mbpartial \;\subseteq\; \linrel$. In other words, we cannot find a linearization where      
    \begin{enumerate*}[label={(\roman*)}]
        \item $s \linrel s'$ for any two matched send events $s$ and $s'$ addressed to the same process, such that $r \procrel^+ r$, where $s \lhd r$ and $s' \lhd r'$, and
        \item $s \linrel s'$, if $s$ and $s'$ are a matched and an unmatched send event, respectively.
    \end{enumerate*} 
    It follows that $\msc$ is not a $\none$-MSC also according to Definition~\ref{def:mb_msc}.
\end{proof}

\paragraph*{\bf $\onen$}

We show  that the two alternative definitions of $\onensymb$-MSC that we gave are equivalent.

\begin{proposition}
    Definition~\ref{def:one_n} and Definition~\ref{def:one_n_alt} of $\onensymb$-MSC are equivalent.
\end{proposition}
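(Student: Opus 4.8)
The plan is to prove the two implications separately, following the strategy used for the mailbox case. The key observation is: \emph{a (reflexive) total order $\linrel$ on $\Events$ satisfies the condition of Definition~\ref{def:one_n} if and only if $\onenpartial \subseteq \linrel$.} Granting this, the equivalence is immediate: Definition~\ref{def:one_n} asserts the existence of such a $\linrel$, while $\onenpartial$ — which is reflexive and transitive by construction, being $({\procrel} \cup {\lhd} \cup {\onenrel})^\ast$ — is a partial order exactly when it admits a linear extension.

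\emph{Definition~\ref{def:one_n_alt} $\Rightarrow$ Definition~\ref{def:one_n}.} Assume $\onenpartial$ is a partial order and fix a total order $\linrel \supseteq \onenpartial$. Since $\procrel, \lhd \subseteq \onenpartial \subseteq \linrel$ and $\linrel$ is reflexive and transitive, $\happensbefore \subseteq \linrel$, so $\linrel$ is a linearization of $\msc$. To verify the condition of Definition~\ref{def:one_n}, take send events $s \procrel^+ s'$ with the same sender. If $s' \in \Unm{\msc}$ we are done. If $s' \in \Matched{\msc}$ while $s \in \Unm{\msc}$, then the first clause of $\onenrel$ gives $s' \onenrel s$, hence $s' \onenpartial s$; but $s \procrel^+ s'$ gives $s \onenpartial s'$ with $s \neq s'$, contradicting antisymmetry of $\onenpartial$, so this case cannot arise. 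If $s, s' \in \Matched{\msc}$ with $s \lhd r$ and $s' \lhd r'$, the second clause of $\onenrel$ (with $s_1 := s$, $s_2 := s'$) yields $r \onenrel r'$, hence $r \linrel r'$, which is the first clause of Definition~\ref{def:one_n}.

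\emph{Definition~\ref{def:one_n} $\Rightarrow$ Definition~\ref{def:one_n_alt}.} Let $\linrel$ be a $\onensymb$-linearization; I show $\onenpartial \subseteq \linrel$, which forces $\onenpartial$ to be antisymmetric and hence, being also reflexive and transitive, a partial order. Since $\procrel, \lhd \subseteq \happensbefore \subseteq \linrel$, it suffices to show $\onenrel \subseteq \linrel$. Suppose $e_1 \onenrel e_2$ via the first clause: then $e_1, e_2 \in \Events_p$ are send events, $e_1$ matched and $e_2$ unmatched, hence distinct and $\procrel^+$-comparable; were $e_2 \procrel^+ e_1$, applying Definition~\ref{def:one_n} with $s := e_2$, $s' := e_1$ would require $e_1$ unmatched or both matched, both false, so $e_1 \procrel^+ e_2$ and thus $e_1 \linrel e_2$. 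Suppose instead $e_1 \onenrel e_2$ via the second clause, with $s_i \lhd e_i$, $s_1, s_2 \in \Events_p$, $s_1 \procrel^+ s_2$; then $s_1, s_2$ are matched send events of the same process, and Definition~\ref{def:one_n} applied to $s := s_1$, $s' := s_2$ gives $r \linrel r'$ with $s_1 \lhd r$, $s_2 \lhd r'$, i.e.\ $e_1 \linrel e_2$ by uniqueness of matching receives. Hence $\onenrel \subseteq \linrel$, and since $\linrel$ is reflexive and transitive, $\onenpartial \subseteq \linrel$.

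Most of this is a routine adaptation of the mailbox argument; the one subtle point is the treatment of the first clause of $\onenrel$ in the second direction, where $e_1$ and $e_2$ are not a priori ordered by $\procrel^+$ in the useful direction, so one must use the totality of the per-process order together with the fact (already noted after Definition~\ref{def:one_n}) that a $\onensymb$-linearization cannot place a matched send after an earlier unmatched send of the same process.
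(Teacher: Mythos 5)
Your proof is correct and takes essentially the same route as the paper's: both rest on the observation that a $\onensymb$-linearization is exactly a linearization containing $\onenpartial$, so Definition~\ref{def:one_n} holds iff $\onenpartial$ admits a linear extension, i.e.\ iff it is a partial order. Your write-up is in fact a bit more careful than the paper's (which argues the second direction contrapositively and leaves implicit both the inclusion $\onenrel \subseteq \linrel$ and the impossibility of an unmatched send $\procrel^+$-before a matched send of the same process), but the underlying argument is the same.
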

\begin{proof}
    ($\Rightarrow$)  We show that if $\msc$ is a $\onensymb$-MSC, according to Definition~\ref{def:one_n_alt}, then it is also a $\onensymb$-MSC, according to Definition~\ref{def:one_n}. By definition of $\onenpartial$, we must have 
    \begin{enumerate*}[label={(\roman*)}]
        \item $r \onenpartial r'$ for any two receive events $r$ and $r'$ whose matched send events $s$ and $s'$ are such that $s \procrel^+ s'$, and
        \item $s \onenpartial s'$, if $s$ and $s'$ are a matched and an unmatched send event executed by the same process, respectively.
    \end{enumerate*} 
    If $\onenpartial$ is a partial order, we can find at least one linearization $\linrel$ such that $\onenpartial \;\subseteq\; \linrel$; such a linearization satisfies the conditions of Definition~\ref{def:one_n}.\newline
    ($\Leftarrow$) We show that if $\msc$ is not a $\onensymb$-MSC, according to Definition~\ref{def:one_n_alt}, then it is also not a $\onensymb$-MSC, according to Definition~\ref{def:one_n}. Since ${\onenpartial} = ({\procrel} \,\cup\, {\lhd} \,\cup\, {\onenrel})^\ast$ is not a partial order, $\onenpartial$ must be cyclic. If $\onenpartial$ is cyclic, it means that we cannot find a linearization $\linrel$ such that $\onenpartial \;\subseteq\; \linrel$. In other words, we cannot find a linearization where      
    \begin{enumerate*}[label={(\roman*)}]
        \item $r \linrel r'$ for any two receive events $r$ and $r'$ whose matched send events $s$ and $s'$ are such that $s \procrel^+ s'$, and
        \item $s \linrel s'$, if $s$ and $s'$ are a matched and an unmatched send event executed by the same process, respectively.
    \end{enumerate*} 
    It follows that $\msc$ is not a $\onensymb$-MSC also according to Definition~\ref{def:one_n}.
\end{proof}

\paragraph*{\bf $\nn$}

We show here the missing proofs for the equivalence of the two definitions of $\nnsymb$-MSC that we gave.

\begin{proposition}\label{prop:nn_first_prop}
	Let $\msc$ be an MSC. Given two matched send events $s_1$ and $s_2$, and their respective receive events $r_1$ and $r_2$, $r_1 \nnbowtie r_2 \implies s_1 \nnbowtie s_2$.
\end{proposition}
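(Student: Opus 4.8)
The plan is to carry out a case analysis on which of the four clauses of Definition~\ref{def:n_n_alt} witnesses $r_1 \nnbowtie r_2$, exploiting throughout that $r_1$ and $r_2$ are receive events and that, by Condition~(2b) of Definition~\ref{def:msc}, the matching send of $r_i$ is unique, hence equal to $s_i$.

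First I would rule out clauses (3) and (4): clause (3) would require $\lambda(r_1)\in\psAct{\plh}$, and clause (4) would require $r_2\in\Unm{\msc}$ and hence $\lambda(r_2)$ to be a send action; both are impossible since $r_1,r_2$ are receive events. For clause (2), the witnesses $s_1',s_2'$ with $s_1'\lhd r_1$ and $s_2'\lhd r_2$ must, by uniqueness of matching sends, satisfy $s_1'=s_1$ and $s_2'=s_2$, so the hypothesis $s_1'\nnrel s_2'$ yields $s_1\nnrel s_2$ and therefore $s_1\nnbowtie s_2$ directly by clause (1). The remaining possibility is clause (1), i.e.\ $r_1\nnrel r_2$: if in addition $s_1\nnrel s_2$, then $s_1\nnbowtie s_2$ by clause (1); otherwise $s_1\notnnrel s_2$, and since $s_1,s_2$ are matched send events we have $\lambda(s_1),\lambda(s_2)\in\psAct{\plh}$, so that $s_1,s_2$ together with $s_1\lhd r_1$, $s_2\lhd r_2$, $r_1\nnrel r_2$ and $s_1\notnnrel s_2$ are exactly the data required by clause (3), giving $s_1\nnbowtie s_2$. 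In all reachable cases the conclusion follows.

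I do not expect a genuine obstacle here. The only two points that require care are recognising that clause (3) of Definition~\ref{def:n_n_alt} is precisely the device designed to lift an order on receptions to the corresponding send events (so it does the work in the $r_1\nnrel r_2$, $s_1\notnnrel s_2$ case), and remembering to invoke Condition~(2b) so as to identify the matching sends appearing in clause~(2). Everything else is routine bookkeeping over the four clauses.
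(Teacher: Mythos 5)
Your proof is correct and follows essentially the same argument as the paper: a case analysis on which clause of Definition~\ref{def:n_n_alt} witnesses $r_1 \nnbowtie r_2$, with clause~(2) lifting to $s_1 \nnrel s_2$ (hence clause~(1)) and the $r_1 \nnrel r_2$ case split on whether $s_1 \nnrel s_2$, using clause~(3) otherwise. You are merely more explicit than the paper in discarding clauses~(3) and~(4) for receive events and in invoking uniqueness of the matching send, which the paper leaves implicit.
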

\begin{proof}
    Follows from the definition of $\nnbowtie$. We have $r_1 \nnbowtie r_2$ if either:
    \begin{itemize}
        \item $r_1 \nnrel r_2$. Two cases: either \begin{enumerate*}[label={(\roman*)}]
            \item $s_1 \nnrel s_2$, or 
            \item $s_1 \notnnrel s_2$.
        \end{enumerate*}
        The first case clearly implies $s_1 \nnbowtie s_2$, for rule 1 in the definition of $\nnbowtie$. The second too, because of rule 3.
        \item  $r_1 \notnnrel r_2$, but $r_1 \nnbowtie r_2$. This is only possible if rule 2 in the definition of $\nnbowtie$ was used, which implies $s_1 \nnrel s_2$ and, for rule 1, $s_1 \nnbowtie s_2$.
    \end{itemize}
\end{proof}

\nnsecondprop*
\begin{proof}
    According to Definition~\ref{def:n_n}, an MSC is $\nn$ if it has at least one $\nnsymb$-linearization. Note that, because of how it is defined, any $\nnsymb$-linearization is always both a $\none$ and a $\onensymb$-linearization. It follows that the cyclicity of $\nnrel$ (not $\nnbowtie$) implies that $\msc$ is not $\nn$, because it means that we are not even able to find a linearization that is both $\none$ and $\onen$. Moreover, since in a $\nnsymb$-linearization the order in which messages are sent matches the order in which they are received, and unmatched send events can be executed only after matched send events, a $\nnsymb$-MSC always has to satisfy the constraints imposed by the $\nnbowtie$ relation. If $\nnbowtie$ is cyclic, then for sure there is no $\nnsymb$-linearization for $\msc$.
\end{proof}    

\nnalgotermination*
\begin{proof}
We want to prove that, if $\nnbowtie$ is acyclic, step 4 of the algorithm is never executed, i.e. it terminates correctly. Note that the acyclicity of $\nnbowtie$ implies that the EDG of $\msc$ is a DAG. Moreover, at every step of the algorithm we remove nodes and edges from the EDG, so it still remains a DAG. The proof proceeds by induction on the number of events added to the linearization.\newline
Base case: no event has been added to the linearization yet. Since the EDG is a DAG, there must be an event with in-degree 0. In particular, this has to be a send event (a receive event depends on its respective send event, so it cannot have in-degree 0). If it is a matched send event, step 1 is applied. If there are no matched send events, step 2 is applied on an unmatched send. We show that it is impossible to have an unmatched send event of in-degree 0 if there are still matched send events in the EDG, so either step 1 or 2 are applied in the base case. Let $s$ be one of those matched send events and let $u$ be an unmatched send. Because of rule 4 in the definition of $\nnbowtie$, we have that $s \nnbowtie u$, which implies that $u$ cannot have in-degree 0 if $s$ is still in the EDG.\newline
Inductive step: we want to show that we are never going to execute step 4. In particular, Step 4 is executed when none of the first three steps can be applied. This happens when there are no matched send events with in-degree 0 and one of the following holds:
\begin{itemize}
	\item \emph{There are still matched send events in the EDG with in-degree $>0$, there are no unmatched messages with in-degree 0, and there is no receive event $r$ with in-degree 0 in the EDG, such that $r$ is the receive event of the first message whose sent event was already added to the linearization}. Since the EDG is a DAG, there must be at least one receive event with in-degree 0. We want to show that, between these receive events with in-degree 0, there is also the receive event $r$ of the first message whose send event was added to the linearization, so that we can apply step 3 and step 4 is not executed. Suppose, by contradiction, that $r$ has in-degree $>0$, so it depends on other events. For any maximal chain in the EDG that contains one of these events, consider the first event $e$, which clearly has in-degree 0. In particular, $e$ cannot be a send event, because we would have applied step 1 or step 2. Hence, $e$ can only be a receive event for a send event that was not the first added to the linearization (and whose respective receive still has not been added). However, this is also impossible, since $r_e \nnbowtie r$ implies $s_e \nnbowtie s$, according to Proposition~\ref{prop:nn_first_prop}, and we could not have added $s$ to the linearization before $s_e$. Because we got to a contradiction, the hypothesis that $r$ has in-degree $>0$ must be false, and we can indeed apply step 3.
	\item \emph{There are still matched send events in the EDG with in-degree $>0$, there is at least one unmatched message with in-degree 0, and there is no receive event $r$ with in-degree 0 in the EDG, such that $r$ is the receive event of the first message whose sent event was already added to the linearization}. We show that it is impossible to have an unmatched send event of in-degree 0 if there are still matched send events in the EDG. Let $s$ be one of those matched send events and let $u$ be an unmatched send. Because of rule 4 in the definition of $\nnbowtie$, we have that $s \nnbowtie u$, which implies that $u$ cannot have in-degree 0 if $s$ is still in the EDG.
	\item \emph{There are no more matched send events in the EDG, there are no unmatched messages with in-degree 0, and there is no receive event $r$ with in-degree 0 in the EDG, such that $r$ is the receive event of the first message whose sent event was already added to the linearization}. Very similar to the first case. Since the EDG is a DAG, there must be at least one receive event with in-degree 0. We want to show that, between these receive events with in-degree 0, there is also the receive event $r$ of the first message whose send event was added to the linearization, so that we can apply step 3 and step 4 is not executed. Suppose, by contradiction, that $r$ has in-degree $>0$, so it depends on other events. For any maximal chain in the EDG that contains one of these events, consider the first event $e$, which clearly has in-degree 0. In particular, $e$ cannot be a send event, because by hypothesis there are no more send events with in-degree 0 in the EDG. Hence, $e$ can only be a receive event for a send event that was not the first added to the linearization (and whose respective receive still has not been added). However, this is also impossible, since $r_e \nnbowtie r$ implies $s_e \nnbowtie s$ (see Proposition~\ref{prop:nn_first_prop}), and we could not have added $s$ to the linearization before $s_e$. Because we got to a contradiction, the hypothesis that $r$ has in-degree $>0$ must be false, and we can indeed apply step 3.
\end{itemize}
We showed that, if $\nnbowtie$ is acyclic, the algorithm always terminates correctly and computes a valid $\nnsymb$-linearization.
\end{proof}

\section{Additional material for Section~\ref{sec:hierarchy}}
\label{apx:hierarchy}
\begin{proposition} \label{prop:co_is_pp}
	Every $\co$-MSC is a $\oneone$-MSC.
\end{proposition}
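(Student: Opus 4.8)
The plan is to argue directly from Definitions~\ref{def:co_msc} and~\ref{def:pp_msc}, exploiting that $\procrel^+$ is finer than $\happensbefore$. So I would fix a $\co$-MSC $\msc = (\Events,\procrel,\lhd,\lambda)$, take two send events $s,s'$ with $\lambda(s)\in\pqsAct{p}{q}$, $\lambda(s')\in\pqsAct{p}{q}$ and $s\procrel^+ s'$, and check that one of the two alternatives of Definition~\ref{def:pp_msc} holds for this pair.

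The first step is to note that $s\procrel^+ s'$ entails $s\happensbefore s'$ (in fact $s\happensbeforestrict s'$), since $\procrel\subseteq\procrel\cup\lhd$ and $\happensbefore=(\procrel\cup\lhd)^*$, and that $s\neq s'$ because $\procrel$ is the successor relation of a total order on $\Events_p$, hence $\procrel^+$ is irreflexive. Thus $s$ and $s'$ are two send events with the same recipient $q$ related by $\happensbefore$, so Definition~\ref{def:co_msc} applies: its condition ranges over $\pqsAct{\plh}{q}$, so it constrains only the common recipient and is insensitive to the (here equal) senders. Consequently either $s'\in\Unm{\msc}$, which is literally the second alternative of Definition~\ref{def:pp_msc}, or $s,s'\in\Matched{\msc}$ with $s\lhd r$, $s'\lhd r'$ and $r\procrel^* r'$.

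The only remaining work is, in the second case, to strengthen $r\procrel^* r'$ to $r\procrel^+ r'$, as required by the first alternative of Definition~\ref{def:pp_msc}. For this I would observe that $r\neq r'$: otherwise $r$ would have two distinct matching send events $s$ and $s'$, contradicting condition (2b) of Definition~\ref{def:msc}. Since $\procrel^*$ is the reflexive transitive closure of $\procrel$, from $r\procrel^* r'$ and $r\neq r'$ we obtain $r\procrel^+ r'$, closing the case. As $s,s'$ were arbitrary, $\msc$ satisfies Definition~\ref{def:pp_msc}, i.e., $\msc$ is a $\oneone$-MSC.

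There is essentially no obstacle here: the argument is a one-line implication, namely ``$\procrel^+\subseteq\happensbefore$, then invoke the $\co$ condition''. The only point demanding a moment of care is the passage from the reflexive $r\procrel^* r'$ furnished by Definition~\ref{def:co_msc} to the strict $r\procrel^+ r'$ demanded by Definition~\ref{def:pp_msc}, which relies on the uniqueness of matching send events.
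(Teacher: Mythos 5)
Your proof is correct and follows essentially the same route as the paper's: observe that $s \procrel^+ s'$ implies $s \happensbefore s'$ and then directly invoke the $\co$ condition to obtain the two alternatives of Definition~\ref{def:pp_msc}. You are in fact slightly more careful than the paper, which silently passes from the $r \procrel^* r'$ of Definition~\ref{def:co_msc} to the $r \procrel^+ r'$ of Definition~\ref{def:pp_msc}; your appeal to condition (2b) to get $r \neq r'$ cleanly closes that small gap.
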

\begin{proof}
	According to Definition~\ref{def:co_msc}, and MSC is $\co$ if, for any two send events $s$ and $s'$, such that $\lambda(s)\in \pqsAct{\plh}{q}$, $\lambda(s')\in \pqsAct{\plh}{q}$, and $s \happensbefore s'$, we have either
	\begin{enumerate*}[label={(\roman*)}]
		\item $s,s' \in \Matched{\msc}$ and $r \procrel^* r'$, where $r$ and $r'$ are two receive events such that $s \lhd r$ and $s' \lhd r'$, or
		\item $s' \in \Unm{\msc}$.
	\end{enumerate*}
	The conditions imposed by the Definition~\ref{def:pp_msc} of $\oneone$ are clearly satified by any \co-MSC; in particular, note that $s \procrel^+ s'$ implies $s \happensbefore s'$.
\end{proof}

\mbonennounmatched*
\begin{proof}
	We show that the contrapositive is true, i.e. if an MSC is not $\onen$ (and it does not have unmatched messages), it is also not mailbox. Suppose $\msc$ is an asynchronous MSC, but not $\onen$. There must be a cycle $\xi$ such that  $e \onenpartial e$, for some event $e$. Recall that ${\onenpartial} = ({\procrel} \,\cup\, {\onenrel} \,\cup\, {\mbrel})^\ast$ and ${\happensbefore} = ({\procrel} \cup {\onenrel})^\ast$. We can always explicitely write a cycle $e \onenpartial e$ only using $\onenrel$ and $\happensbefore$. For instance, there might be a cycle $e \onenpartial e$ because we have that $e \onenrel f \happensbefore g \onenrel h \onenrel i \happensbefore e$. Consider any two adiacent events $r_1$ and $r_2$ in the cycle $\xi$, where $\xi$ has been written using only $\onenrel$ and $\happensbefore$, and we never have two consecutive $\happensbefore$. We have two cases:
	\begin{enumerate}
		\item $r_1 \onenrel r_2$. By definition of $\onenrel$, $r_1$ and $r_2$ must be two receive events, since we are not considering unmatched send events, and $s_1 \procrel^+ s_2$, where $s_1$ and $s_2$ are the send events that match with $r_1$ and $r_2$, respectively.
		\item $r_1 \happensbefore r_2$. Since $\msc$ is asynchronous by hypothesis, $\xi$ has to contain at least one $\onenrel$; recall that we also wrote $\xi$ in such a way that we do not have two consecutive $\happensbefore$. It is not difficult to see that $r_1$ and $r_2$ have to be receive events, since they belong to $\xi$. Let $s_1$ and $s_2$ be the two send events such that $s_1 \lhd r_1$ and $s_2 \lhd r_2$. We have two cases:
		\begin{enumerate}
			\item $s_2$ is in the causal path between $r_1$ and $r_2$, i.e. $s_1 \lhd r_1 \happensbefore s_2 \lhd r_2$. In particular, note that $s_1 \happensbefore s_2$.
			\item $s_2$ is not in the causal path between $r_1$ and $r_2$, hence there must be a message $m_k$ received by the same process that executes $r_2$, such that $r_1 \happensbefore s_k \lhd r_k \procrel^+ r_2$, where $r_k$ is the send event of $m_k$. Since messages $m_k$ and $m_2$ are received by the same process and $r_k \procrel^+ r_2$, we should have $s_k \mbrel s_2$, according to the mailbox semantics. In particular, note the we have $s_1 \happensbefore s_k \mbrel s_2$.
		\end{enumerate}
		In both case (a) and (b), we conclude that $s_1 \mbpartial s_2$. 
	\end{enumerate}
	Notice that, for either cases, a relation between two receive events $r_1$ and $r_2$ implies a relation between the respective send events $s_1$ and $s_2$, according to the mailbox semantics. It follows that $\xi$, which is a cycle for the $\onenpartial$ relation, always implies a cycle for the $\mbpartial$ relation.
	\end{proof}

	\begin{proposition} \label{prop:nn_is_onen}
		Every $\nnsymb$-MSC is a $\onensymb$-MSC.
	\end{proposition}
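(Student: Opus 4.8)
The plan is to observe that no new linearization needs to be constructed: the \emph{same} witness that makes $\msc$ a $\nnsymb$-MSC already makes it a $\onensymb$-MSC. Concretely, let $\msc = (\Events,\procrel,\lhd,\lambda)$ be a $\nnsymb$-MSC and let $\linrel$ be a $\nnsymb$-linearization of $\msc$ in the sense of Definition~\ref{def:n_n}. I claim $\linrel$ is also a $\onensymb$-linearization in the sense of Definition~\ref{def:one_n}.

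To verify this, fix two arbitrary send events $s$ and $s'$ with $\lambda(s)\in\pqsAct{p}{\plh}$, $\lambda(s')\in\pqsAct{p}{\plh}$ (same sender $p$) and $s\procrel^+ s'$. Since every linearization extends the happens-before order, we have $\procrel^+\subseteq{\happensbefore}\subseteq{\linrel}$, so in particular $s\linrel s'$. Now apply the defining property of a $\nnsymb$-linearization to this pair: it gives exactly the alternative that either $s,s'\in\Matched{\msc}$ with $s\lhd r$, $s'\lhd r'$ and $r\linrel r'$, or else $s'\in\Unm{\msc}$. These are precisely the two disjuncts required by Definition~\ref{def:one_n}. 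As $s,s'$ were arbitrary among same-sender, $\procrel^+$-ordered pairs, $\linrel$ satisfies Definition~\ref{def:one_n}, hence $\msc$ is a $\onensymb$-MSC.

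There is essentially no obstacle here: the argument is just a weakening of the universal quantifier, since the $\nnsymb$ condition constrains \emph{all} $\linrel$-ordered pairs of send events, whereas the $\onensymb$ condition only constrains same-sender pairs ordered by $\procrel^+$. The only point worth stating explicitly is the inclusion $\procrel^+\subseteq{\linrel}$, which is immediate from the definition of a linearization. (Alternatively, one could go through the acyclicity characterizations of Section~\ref{sec:MSO}: if $\nnbowtie$ is acyclic then so is $({\procrel}\cup{\lhd}\cup{\mbrel}\cup{\onenrel})^+$, and a fortiori $\onenpartial=({\procrel}\cup{\lhd}\cup{\onenrel})^\ast$ is a partial order, so Definition~\ref{def:one_n_alt} applies; but the linearization argument above is shorter and more transparent, and it is the one I would present.)
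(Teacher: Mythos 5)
Your proof is correct and follows essentially the same route as the paper: the paper's argument is precisely that every $\nnsymb$-linearization is also a $\onensymb$-linearization, since Definition~\ref{def:n_n} constrains \emph{all} $\linrel$-ordered pairs of send events while Definition~\ref{def:one_n} only constrains same-sender pairs ordered by $\procrel^+$. Your explicit remark that ${\procrel^+}\subseteq{\linrel}$ merely spells out a detail the paper leaves implicit.
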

	\begin{proof}
		Consider Definition~\ref{def:n_n} and Definition~\ref{def:one_n}. They are identical, except for the fact that in the $\nn$ case we consider any two send events, and not just those that are sent by a same process. This is enough to show that each $\nnsymb$-linearization is also a $\onensymb$-linearization and, therefore, each $\nnsymb$-MSC is a $\onensymb$-MSC.
	\end{proof}
	
	\begin{proposition} \label{prop:rsc_is_nn}
		Every $\rsc$-MSC is a $\nnsymb$-MSC.
	\end{proposition}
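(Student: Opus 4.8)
The plan is to show that any $\rsc$-linearization of an $\rsc$-MSC is already a $\nnsymb$-linearization, which immediately gives the inclusion. First I would fix an $\rsc$-MSC $\msc = (\Events,\procrel,\lhd,\lambda)$ and, by Definition~\ref{def:rsc}, pick a linearization $\linrel$ of $\msc$ in which every matched send event is immediately followed by its matching receive event; moreover $\msc$ has no unmatched send events, so every send event in $\Events$ is matched. Consequently $\linrel$ has the block form $s_1\,r_1\,s_2\,r_2\,\cdots\,s_n\,r_n$, where $s_i \lhd r_i$ and $\{s_1,\dots,s_n\}$ is exactly the set of send events, listed in the order in which $\linrel$ schedules them.

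Next I would check that this $\linrel$ satisfies the condition of Definition~\ref{def:n_n}. Take any two send events $s,s'$ with $s \linrel s'$. Since there are no unmatched sends, the second bullet of Definition~\ref{def:n_n} never applies, so I only need the first: both are matched, say $s = s_i$ and $s' = s_j$, and $s \linrel s'$ forces $i \le j$. Their matching receive events are $r = r_i$ and $r' = r_j$, occupying positions $2i$ and $2j$ in the block form, hence $r \linrel r'$. (If $i = j$ this is trivial by reflexivity of $\linrel$.) Thus $\linrel$ is a $\nnsymb$-linearization, and $\msc$ is a $\nnsymb$-MSC.

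There is no real obstacle here: the argument is purely a rereading of the two definitions, the only point requiring a word of care being that linearizations are reflexive, so one must allow $s = s'$ in the quantification of Definition~\ref{def:n_n}, where the conclusion is immediate. This also matches the informal remark made just before the statement in Section~\ref{sec:hierarchy}, that $\rsc$ is a special case of $\nn$ communication and an $\rsc$-linearization is always a $\nnsymb$-linearization.
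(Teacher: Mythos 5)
Your proof is correct and follows essentially the same route as the paper: take an $\rsc$-linearization and observe that, since there are no unmatched sends and each send is immediately followed by its matching receive, the receives appear in the same order as the sends, so it is already a $\nnsymb$-linearization. Your explicit block-form presentation just makes this observation slightly more detailed.
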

	\begin{proof}
		Consider Definition~\ref{def:rsc} and Definition~\ref{def:n_n}. Let us pick an $\rsc$-linearization $\linrel$. If every send event is immediately followed by its matching receive event, and we do not have unmatched messages, then $\linrel$ is also a $\nnsymb$-linearization; note that, for any two send events $s$ and $s'$ such that $s \linrel s'$, we also have $r \linrel r'$, where $s \lhd r$ and $s' \lhd r'$. It follows that each $\rsc$-MSC is a $\nnsymb$-MSC.
	\end{proof}

\section{Additional material for Section~\ref{sec:checking}}
\label{apx:checking}
\subsection{Communicating finite state machines\label{app:cfsm}}
We now recall the definition of communicating systems (aka communicating finite-state
machines or message-passing automata), which consist of finite-state machines $A_p$
(one for every process $p \in \Procs$) that can communicate through channels from $\Ch$.

\begin{definition}\label{def:cs}
A \emph{system of communicating finite state machines} over the set $\Procs$ of rocesses
and the set $\Msg$ of messages is a tuple
   $ \Sys = (A_p)_{p\in\procSet}$. For each
   $p \in \Procs$, $A_p = (Loc_p, \delta_p, \ell^0_p)$ is a finite transition system where
   $\Loc_p$ is a finite set of local (control) states, $\delta_p
   \subseteq \Loc_p \times \pAct{p} \times \Loc_p$ is the
   transition relation, and $\ell^0_p \in Loc_p$ is the initial state.
\end{definition}

Given $p \in \Procs$ and a transition $t = (\ell,a,\ell') \in \delta_p$, we let
$\tsource(t) = \ell$, $\ttarget(t) = \ell'$, $\tlabel(t) = a$, and
$\tmessage(t) = \msg$ if $a \in \msAct{\msg} \cup \mrAct{\msg}$.




Let $\msc = (\Events,\procrel,\lhd,\lambda)$ be an MSC.
A \emph{run} of $\Sys$ on $\msc$ is a mapping
$\rho: \Events \to \bigcup_{p \in \Procs} \delta_p$
that assigns to every event $e$ the transition $\rho(e)$
that is executed at $e$. Thus, we require that
\begin{enumerate*}[label={(\roman*)}]
\item for all $e \in \Events$, we have $\tlabel(\rho(e)) = \lambda(e)$,
\item for all $(e,f) \in {\procrel}$, $\ttarget(\rho(e)) = \tsource(\rho(f))$,
\item for all $(e,f) \in {\lhd}$, $\tmessage(\rho(e)) = \tmessage(\rho(f))$,
and
\item for all $p \in \Procs$ and $e \in \Events_p$ such that there is no $f \in \Events$ with $f \procrel e$, we have $\tsource(\rho(e)) = \ell_p^0$.
\end{enumerate*}

We write $L_{\asy}(\Sys)$ to denote the set of MSCs $\msc$ that admit a run of $\Sys$.
Intuitively, $L_{\asy}(\Sys)$ is the set of all asynchronous behaviors of $\Sys$.

\subsection{Proof of Theorem~\ref{thm:bounded-model-checking}}\label{proofbmc}

\thmBoundedMC*

\begin{proof}
   Let $\comsymb$, $\aMSCclass$, $\Sys$, and $\amsoformula$ 
   be fixed. We showed in Section~\ref{sec:MSO} 
   that there is a MSO formula
   $\msoformulaofcom{\comsymb}$
   that defines $\MSCclassofcom{\comsymb}$.
   There is also a MSO formula
   $\amsoformula_{\Sys}$ such that
   $L_{\asy}(\Sys)=L(\amsoformula_{\Sys})$.\footnote{The formula
   simply encodes the existence of a run of $\Sys$ on the MSC
   using a MSO variable $X_l$ for each control
   state $l$, with the meaning that $X_l$ is the set of events
   before which the local communicating automaton was in state $l$. See \cite[Theorem~3.4]{DBLP:journals/corr/abs-1904-06942} for a detailed proof.}
   Putting everything together, we have
    \[\begin{array}{rl}
    &L_{\comsymb}(\System) \cap \stwMSCs{k} \subseteq L(\amsoformula)\\[1ex]
    \Longleftrightarrow &\asyL{\System} \cap \MSCclassofcom{\comsymb} \cap \stwMSCs{k} \subseteq L(\phi)\\[1ex]
    \Longleftrightarrow & L(\amsoformula_{\System}) \cap L(\msoformulaofcom{\comsymb}) \cap \stwMSCs{k} \subseteq L(\phi)\\[1ex]
    \Longleftrightarrow &\stwMSCs{k} \subseteq L(\phi \vee \neg \msoformulaofcom{\comsymb}\vee\neg\amsoformula_{\System})\,.
    \end{array}\]

    The latter is decidable by Courcelle's theorem~\cite{Courcelle10}.
\end{proof}

\subsection{Proof of Theorem~\ref{thm:sync}}
\label{apx:sync}
In order to prove Theorem \ref{thm:sync}, we first need to introduce some concepts and give preliminary proofs. 

\begin{definition}[Prefix]
	Let $\msc = (\Events,\procrel,\lhd,\lambda) \in \MSCs$ and consider
	$E \subseteq \Events$ such that $E$ is ${\happensbefore}$-\emph{downward-closed}, i.e,
	for all $(e,f) \in {\happensbefore}$ such that $f \in E$, we also have $e \in E$.
	Then, the MSC $M' = (E,{\procrel} \cap (E \times E),{\lhd} \cap (E \times E),\lambda')$,
	where $\lambda'$ is the restriction of $\Events$ to $E$, is called a \emph{prefix}
	of $\msc$. 	
\end{definition}

If we consider a set $E$ that is ${\onenpartial}$-\emph{downward-closed}, we call $M'$ a \emph{$\onensymb$-prefix}.
If the set $E$ is ${\nnbowtie}$-\emph{downward-closed}, we call $M'$ a \emph{$\nnsymb$-prefix}. Note that every $\onensymb$ or $\nnsymb$-prefix is also a prefix, since $\happensbefore \subseteq {\onenpartial}$ and $\happensbefore \subseteq {\nnbowtie}$.

Note that the empty MSC is a prefix of $\msc$.
We denote the set of prefixes of $\msc$ by $\Pref{\msc}$, whereas $\Prefonen{\msc}$ and $\Prefnn{\msc}$ are used for the $\onen$ and the $\nn$ variants, respectively.
This is extended to sets $L \subseteq \MSCs$ as expected, letting
$\Pref{L} = \bigcup_{\msc \in L} \Pref{\msc}$.

\begin{proposition}
	\label{prop:prefixes}
	For $\comsymb \in \{\asy, \oneone, \co, \none\}$, every prefix of a $\comsymb$-MSC is a $\comsymb$-MSC.
\end{proposition}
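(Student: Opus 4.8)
The plan is to handle the four communication models $\asy$, $\oneone$, $\co$, $\none$ uniformly as far as possible, exploiting the fact that each of their definitions is a \emph{universally quantified forbidden-pattern} condition: for $\asy$ there is nothing to check; for $\oneone$, $\co$ the defining formula forbids a certain configuration of two send events (and their receives) related by $\procrel^+$ resp.\ $\happensbefore$; for $\none$ the defining condition is the acyclicity of $\mbpartialstrict$. Let $\msc = (\Events,\procrel,\lhd,\lambda)$ be a $\comsymb$-MSC and let $M' = (E,\procrel',\lhd',\lambda')$ be a prefix, so $E$ is $\happensbefore$-downward-closed and the relations are the restrictions to $E$. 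First I would record the basic closure facts: because $E$ is downward-closed under $\happensbefore \supseteq \procrel \cup \lhd$, the event set $E$ is closed under taking $\procrel$-predecessors and under taking the matching send of any receive in $E$; hence $M'$ is indeed an MSC (conditions (1)--(3) of Definition~\ref{def:msc} are inherited), and moreover $\happensbefore_{M'} = {\happensbefore_{\msc}} \cap (E\times E)$, $\procrel_{M'}^+ = \procrel_{\msc}^+ \cap (E\times E)$, and $\Matched{M'} = \Matched{\msc}\cap E$, $\Unm{M'} = \Unm{\msc}\cap E$ (an event $e\in E$ is matched in $M'$ iff its matching receive is in $E$, but if $e$ is matched in $\msc$ then $e\,\lhd\, f$ gives $e \happensbefore f$, and $E$ being downward-closed does \emph{not} put $f$ in $E$ — so actually a send matched in $\msc$ may be unmatched in $M'$; this is the one subtlety to be careful about).

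Next I would verify the forbidden-pattern conditions are preserved. For $\asy$ this is vacuous. For $\oneone$: suppose $s,s'\in E$ are send events with $\lambda(s),\lambda(s')\in\pqsAct{p}{q}$ and $s\procrel_{M'}^+ s'$; then $s\procrel_{\msc}^+ s'$, so Definition~\ref{def:pp_msc} applied in $\msc$ gives either ($s\lhd r$, $s'\lhd r'$, $r\procrel_{\msc}^+ r'$) or $s'\in\Unm{\msc}$. In the first case, I must check this survives in $M'$: if $r'\in E$ then, since $r\procrel_{\msc}^+ r'$ means $r\happensbefore_{\msc} r'$ and $E$ is downward-closed, also $r\in E$, so $s,s'\in\Matched{M'}$ with $r\procrel_{M'}^+ r'$, as required; if $r'\notin E$ then $s'\in\Unm{M'}$, the second alternative. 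If instead $s'\in\Unm{\msc}$ then a fortiori $s'\in\Unm{M'}$. Either way the condition holds in $M'$. For $\co$ the argument is identical with $\procrel^+$ replaced by $\happensbefore$ throughout, again using downward-closure of $E$ to pull $r$ into $E$ whenever $r'\in E$. For $\none$: I would argue that $\mbpartialstrict$ restricted to $E$ is contained in $\mbpartialstrict$ of $\msc$ — the key point is that $\mbrel_{M'} \subseteq \mbrel_{\msc}$, which follows because the two defining clauses of $\mbrel$ (one send matched, the other unmatched, same receiver; or matching receives ordered by $\procrel^+$) are monotone in the sense that if they hold for the restrictions they hold in $\msc$ (for the "matched/unmatched" clause one uses that $s\in\Matched{M'}\Rightarrow s\in\Matched{\msc}$; an $s$ that becomes unmatched in $M'$ was matched in $\msc$, but then the clause still produces a $\mbrel_{\msc}$-edge to the genuinely unmatched $s'$ — I would double-check the direction here). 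Hence $({\mbpartialstrict})_{M'} \subseteq ({\mbpartialstrict})_{\msc}$, and since the latter is acyclic so is the former, so $M'$ is a $\none$-MSC.

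The main obstacle, and the only place real care is needed, is the interaction between "matched in $\msc$" versus "matched in the prefix": a send event kept in the prefix whose receive was dropped changes status from matched to unmatched, and each definition's second ("$s'\in\Unm$") clause must be shown to absorb exactly these cases without breaking the first clause. I would therefore isolate a small lemma — for $e\in E$, $e\in\Matched{M'}$ iff ($e\in\Matched{\msc}$ and the matching receive lies in $E$), and $\Unm{M'}\cap E \supseteq \Unm{\msc}\cap E$ — and invoke it at each of the four case analyses above. (Note the proposition deliberately excludes $\onensymb$ and $\nnsymb$, for which prefix-closure fails, cf.\ Fig.~\ref{fig:onen-prefix}, precisely because their defining conditions are not purely "forbidden pattern" conditions insensitive to losing a trailing receive — so no attempt should be made to extend the argument there.)
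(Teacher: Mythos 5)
Your overall route is sound and is genuinely different from the paper's: the paper treats $\asy$ as immediate, invokes Bollig~\emph{et al.}~\cite{BolligGFLLS21} for the $\oneone$ and $\none$ cases, and proves only the $\co$ case, arguing by contradiction that a violating pair of sends in the prefix is also a violating pair in $\msc$ (using ${\procrel_0}\subseteq{\procrel}$ and ${\lhd_0}\subseteq{\lhd}$). You instead argue all four cases directly and self-containedly, pushing the defining condition of $\msc$ down to the prefix and letting the ``$s'\in\Unm{\cdot}$'' clause absorb those sends whose receives were dropped. Your treatment of $\oneone$ and $\co$ is complete and correct; in particular the use of $\happensbefore$-downward-closure to pull $r$ into $E$ whenever $r'\in E$ is exactly the right step, and you correctly identify the matched-versus-unmatched status change as the only delicate point.

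The one genuine gap is in the $\none$ case, precisely where you write ``I would double-check the direction here''. You need $\mbrel_{M'}\subseteq\mbrel_{\msc}$, and the problematic instance is the first clause: $s\mbrel_{M'}s'$ because $s\in\Matched{M'}$ and $s'\in\Unm{M'}$, while $s'$ is in fact matched in $\msc$ (its receive was dropped from $E$). Then the first clause of $\mbrel_{\msc}$ does not apply to $(s,s')$, and your parenthetical claim that the clause ``still produces a $\mbrel_{\msc}$-edge to the genuinely unmatched $s'$'' is unjustified as stated, since $s'$ is not unmatched in $\msc$. The repair is the same downward-closure argument you already use for $\oneone$ and $\co$: let $r_1\in E$ be the receive matching $s$ and $r_2\notin E$ the receive matching $s'$ in $\msc$; both are receive events of the same process $q$, hence comparable under $\procrel^+$, and $r_2\procrel^+ r_1$ is impossible because $r_1\in E$ and $E$ is $\happensbefore$-downward-closed, so $r_1\procrel^+ r_2$ and the \emph{second} clause yields $s\mbrel_{\msc} s'$. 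With this, $\mbrel_{M'}\subseteq\mbrel_{\msc}$ holds in all cases, hence $({\procrel'}\cup{\lhd'}\cup{\mbrel_{M'}})^+$ is contained in $\mbpartialstrict$ and acyclicity is inherited, completing the $\none$ case. (Note also that you are working with the alternative Definition~\ref{def:n_one_alt} of $\none$-MSCs rather than Definition~\ref{def:mb_msc}; this is legitimate, but worth saying explicitly, since it relies on the equivalence the paper proves separately.)
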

\begin{proof}
    For $\comsymb = \asy$ it is true by definition. For $\comsymb = \{\oneone, \none\}$ it was already shown to be true in \cite{BolligGFLLS21}, so we just consider $\comsymb = \co$. Let $\msc = (\Events, \procrel, \lhd, \lambda) \in \coMSCs$ and let $\msc_0 =
    (\Events_0, \procrel_0, \lhd_0, \lambda_0)$ be a prefix of $\msc$. By contradiction, suppose that $\msc_0$ is not a \co-MSC. There must be two distinct $s,s' \in \Events_0$ such that $\lambda(s)\in \pqsAct{\plh}{q}$, $\lambda(s')\in \pqsAct{\plh}{q}$, $s \happensbefore^{(\msc_0)} s'$ and either
    \begin{enumerate*}[label={(\roman*)}]
        \item $r' \procrel^+ r$, where $r$ and $r'$ are two receive events such that $s \lhd r$ and $s' \lhd r'$, or
        \item $s \in \Unm{\msc_0}$ and $s' \in \Matched{\msc_0}$.
    \end{enumerate*}
    In both cases, $\msc$ would also not be a \co-a $\MSCs$, since $\Events_0 \subseteq \Events$, ${\rightarrow_0} \subseteq {\rightarrow}$, and ${\lhd_0} \subseteq {\lhd}$. This is a contradiction, thus $\msc_0$ has to be causally ordered.
\end{proof}

Note that this proposition is not true for the $\onen$ and the $\nn$ communication models. Fig. \ref{fig:onen-prefix} shows an example of $\nnsymb$-MSC with a prefix that is neither a $\nnsymb$-MSC nor a $\onensymb$-MSC.

\begin{proposition}
	\label{prop:prefixes-onen}
	Every $\onensymb$-prefix of a $\onensymb$-MSC is a $\onensymb$-MSC.
\end{proposition}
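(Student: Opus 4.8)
The plan is to reduce the statement to a relation inclusion and then transfer the partial-order property. Write $\msc = (\Events,\procrel,\lhd,\lambda)$ for the ambient $\onensymb$-MSC and let $\msc_0 = (E,\procrel_0,\lhd_0,\lambda_0)$ be a $\onensymb$-prefix, so that $E$ is ${\onenpartial}$-downward-closed, $\procrel_0 = {\procrel}\cap(E\times E)$, and $\lhd_0 = {\lhd}\cap(E\times E)$; since ${\happensbefore}\subseteq{\onenpartial}$, $\msc_0$ is in particular a prefix of $\msc$, hence an MSC. Let ${\onenrel}_0$ and ${\onenpartial}_0$ denote the relations of Definition~\ref{def:one_n_alt} computed inside $\msc_0$. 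Since ${\onenpartial}_0$ is reflexive and transitive by construction, to prove that it is a partial order (i.e.\ that $\msc_0$ is a $\onensymb$-MSC) it suffices to prove the inclusion ${\onenpartial}_0 \subseteq {\onenpartial}$: indeed $\msc$ is a $\onensymb$-MSC, so ${\onenpartial}$ is a partial order, and a reflexive transitive subrelation of a partial order is again a partial order. Moreover, since ${\onenpartial}$ is reflexive, transitive, and contains both ${\procrel}$ and ${\lhd}$, and since $\procrel_0\subseteq{\procrel}$ and $\lhd_0\subseteq{\lhd}$, the inclusion ${\onenpartial}_0 \subseteq {\onenpartial}$ follows once we know the single generator inclusion ${\onenrel}_0 \subseteq {\onenpartial}$. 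Thus the heart of the argument is: given $e_1\mathrel{{\onenrel}_0}e_2$, show $e_1\onenpartial e_2$, by the two cases of Definition~\ref{def:one_n_alt}.

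The receive-event case is immediate: if $e_1,e_2\in E$ are (matched) receive events whose matching sends $s_1,s_2$ lie on a common process $p$ with $s_1\procrel_0^+ s_2$, then $s_1\lhd e_1$, $s_2\lhd e_2$, $s_1,s_2\in\Events_p$, and $s_1\procrel^+ s_2$ all hold in $\msc$ (using $\lhd_0\subseteq{\lhd}$, $\procrel_0\subseteq{\procrel}$), so $e_1\mathrel{\onenrel}e_2$ in $\msc$ and hence $e_1\onenpartial e_2$. The genuinely delicate case is the send-event case, the point being that $\Matched{\cdot}$ and $\Unm{\cdot}$ are \emph{not} preserved under taking prefixes: a send that is matched in $\msc$ becomes unmatched in $\msc_0$ whenever its matching receive lies outside $E$. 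So suppose $e_1\in\Matched{\msc_0}$ and $e_2\in\Unm{\msc_0}$ with $\lambda(e_1),\lambda(e_2)\in\psAct{p}$; choose $r_1\in E$ with $e_1\lhd r_1$, so $e_1\in\Matched{\msc}$. I would then split on the status of $e_2$ in $\msc$. If $e_2\in\Unm{\msc}$ as well, then $e_1\mathrel{\onenrel}e_2$ already in $\msc$ and we are done. Otherwise $e_2\lhd r_2$ for some $r_2\notin E$; since $e_1\neq e_2$ (the sets $\Matched{\msc_0}$ and $\Unm{\msc_0}$ are disjoint) and $e_1,e_2$ are both on the process line of $p$, exactly one of $e_1\procrel^+ e_2$ or $e_2\procrel^+ e_1$ holds in $\msc$, and in the first case $e_1\procrel^+ e_2$ directly gives $e_1\onenpartial e_2$.

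The remaining possibility, $e_2\procrel^+ e_1$, is the main obstacle, and it is exactly where working with $\onensymb$-prefixes (rather than arbitrary, merely ${\happensbefore}$-downward-closed, prefixes) pays off. Here $r_1$ and $r_2$ are both receptions of messages sent by $p$, so $\lambda(r_1),\lambda(r_2)\in\prAct{p}$, and the receive-event clause of Definition~\ref{def:one_n_alt}, applied inside $\msc$ to the ordered sends $e_2\procrel^+ e_1$, yields $r_2\mathrel{\onenrel}r_1$ and hence $r_2\onenpartial r_1$. But $r_1\in E$ and $E$ is ${\onenpartial}$-downward-closed, so $r_2\in E$ --- contradicting $e_2\in\Unm{\msc_0}$. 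Hence this possibility cannot arise, every generating pair of ${\onenrel}_0$ lies in ${\onenpartial}$, and the inclusion, and with it the proposition, follows. One expects the $\nnsymb$ analogue of this proposition to go through along the same lines, replacing ${\onenpartial}$ by ${\nnbowtie}$ and using the corresponding clauses of Definition~\ref{def:n_n_alt}.
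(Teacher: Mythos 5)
Your proof is correct, and its skeleton is the one the paper uses: restrict Definition~\ref{def:one_n_alt} to the prefix, show the restricted relation sits inside the ambient ${\onenpartial}$, and transfer the partial-order property downwards. The difference is in how the send-event clause is handled. The paper's proof simply asserts ${\onenrel^{(\msc_0)}}\subseteq{\onenrel}$ from $\Events_0\subseteq\Events$; read literally this is too strong, because a send that is matched in $\msc$ becomes unmatched in $\msc_0$ when its receive is cut away, so a pair $(e_1,e_2)$ with $e_1\in\Matched{\msc_0}$, $e_2\in\Unm{\msc_0}$ need not satisfy $e_1\onenrel e_2$ in $\msc$ (and indeed the counterexample of Fig.~\ref{fig:onen-prefix} shows the statement genuinely needs more than $\Events_0\subseteq\Events$, i.e.\ more than being an ordinary prefix). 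Your proof repairs exactly this point: you only claim the weaker inclusion ${\onenrel_0}\subseteq{\onenpartial}$, and in the delicate subcase where $e_2$ is matched in $\msc$ you split on the process order of $e_1,e_2$, absorbing $e_1\procrel^+e_2$ into ${\onenpartial}$ directly and ruling out $e_2\procrel^+e_1$ via the receive clause together with the ${\onenpartial}$-downward-closedness of $E$ (which forces $r_2\in E$, contradicting $e_2\in\Unm{\msc_0}$). This is where the hypothesis of being a $\onensymb$-prefix, rather than a mere prefix, is actually used, so your argument is the more complete (and, at that step, the more honest) version of the paper's; the paper's version buys brevity, yours makes explicit why downward-closedness with respect to ${\onenpartial}$, and not just ${\happensbefore}$, is indispensable. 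Your closing remark that the $\nnsymb$ case goes through analogously with ${\nnbowtie}$ matches the paper's treatment of that proposition as well.
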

\begin{proof}
    Let $\msc = (\Events, \procrel, \lhd, \lambda) \in \onenMSCs$ and let $\msc_0 =
    (\Events_0, \procrel_0, \lhd_0, \lambda_0)$ be a $\onensymb$-prefix of $\msc$, where $\Events_0 \subseteq \Events$. Firstly, the $\onenpartial$-downward-closeness of $\Events_0$ guarantees that ${\msc_0}$ is still an MSC. We need to prove that it is a $\onensymb$-MSC. By contradiction, suppose that $\msc_0$ is not a $\onensymb$-MSC. Then, there are distinct $e,f \in \Events_0$ such that $e \onenpartial^{(\msc_0)} f \onenpartial^{(\msc_0)} e$, where $\onenpartial^{(\msc_0)} = (\procrel_0 \cup \lhd_0 \cup \onenrel^{(\msc_0)})^\ast$. As $\Events_0 \subseteq \Events$, we have that ${\rightarrow_0} \subseteq {\rightarrow}$, ${\lhd_0} \subseteq {\lhd}$, ${\onenrel^{(\msc_0)}} \subseteq {\onenrel}$. Clearly, $\onenpartial^{(\msc_0)} \subseteq \onenpartial$, so $e \onenpartial f \onenpartial e$. This implies that $\msc$ is not a $\onensymb$-MSC, because $\onenpartial$ is cyclic, which is a contradiction. Hence $\msc_0$ is a $\onensymb$-MSC.
\end{proof}

\begin{proposition}
	\label{prop:prefixes-nn}
	Every $\nnsymb$-prefix of a $\nnsymb$-MSC is a $\nnsymb$-MSC.
\end{proposition}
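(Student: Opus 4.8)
The plan is to reduce to Definition~\ref{def:n_n}: it suffices to exhibit one $\nnsymb$-linearization of $\msc_0$. First I would check that $\msc_0$ is an MSC at all. As noted right after the definition of $\nnsymb$-prefix, $\happensbefore\subseteq\nnbowtie$, so a $\nnbowtie$-downward-closed set $\Events_0$ is in particular $\happensbefore$-downward-closed; hence $\msc_0$ is an ordinary prefix of $\msc$ and therefore an MSC. Moreover $\happensbefore^{(\msc_0)}$ is exactly $\happensbefore^{(\msc)}$ restricted to $\Events_0$: any $({\procrel}\cup{\lhd})$-path of $\msc$ between two events of $\Events_0$ has all its intermediate vertices in $\Events_0$ by downward-closure.

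The core of the argument is to produce a $\nnsymb$-linearization $\linrel$ of $\msc$ (which exists since $\msc\in\nnMSCs$) in which $\Events_0$ is an \emph{initial segment}, and then to take $\linrel_0:=\linrel\cap(\Events_0\times\Events_0)$. By the previous paragraph $\linrel_0$ is a linearization of $\msc_0$. To see it is a $\nnsymb$-linearization, I would pick send events $s,s'$ of $\msc_0$ with $s\linrel_0 s'$, hence $s\linrel s'$. Since $\linrel$ is a $\nnsymb$-linearization of $\msc$, either $s'\in\Unm{\msc}$, and then $s'\in\Unm{\msc_0}$ so the required disjunct holds; or $s\lhd r$ and $s'\lhd r'$ in $\msc$ with $r\linrel r'$. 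In the latter case, if $r'\notin\Events_0$ then $s'$ is unmatched in $\msc_0$ and again we are done; and if $r'\in\Events_0$ then, because $\Events_0$ is an initial segment of $\linrel$ and $r\linrel r'$, we get $r\in\Events_0$, so $s,s'$ are matched in $\msc_0$ with matching receives $r,r'$ and $r\linrel_0 r'$. Thus $\linrel_0$ is a $\nnsymb$-linearization and $\msc_0\in\nnMSCs$.

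It then remains to justify that $\Events_0$ can be made an initial segment of a $\nnsymb$-linearization of $\msc$. Here the $\nnbowtie$-downward-closure of $\Events_0$ is essential: it implies that no $\nnbowtie^{(\msc)}$-edge leads from $\Events\setminus\Events_0$ back into $\Events_0$, so adjoining to $\nnbowtie^{(\msc)}$ every pair of $\Events_0\times(\Events\setminus\Events_0)$ keeps the relation acyclic. Concretely I would run Algorithm~\ref{algonn} on $\msc$, at every choice point giving priority to events of $\Events_0$, and argue that this execution consumes all of $\Events_0$ before touching its complement; its correctness would then follow from the soundness and termination of Algorithm~\ref{algonn} (Proposition~\ref{prop:nn_algo_term}). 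The delicate case is that of unmatched send events: in a $\nnsymb$-linearization every matched send precedes every unmatched send, and correspondingly, using acyclicity of $\nnbowtie^{(\msc)}$ (Proposition~\ref{prop:n_n_cycl}), every matched send of $\msc$ is $\nnbowtie$-below every unmatched send of $\msc$; this is what should prevent step~1 of the algorithm from ever forcing us to schedule a matched send \emph{outside} $\Events_0$ while $\Events_0$ still has pending events.

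I expect the main obstacle to be precisely this last verification, i.e.\ that the $\Events_0$-greedy run of Algorithm~\ref{algonn} never reaches its error step while $\Events_0$ is not yet exhausted. One cannot simply mimic the monotonicity proof of Proposition~\ref{prop:prefixes-onen}, because $\nnbowtie$ is built from the \emph{negated} relation $\notnnrel$ and is therefore not monotone under passing to a sub-MSC: in general $\nnbowtie^{(\msc_0)}\not\subseteq\nnbowtie^{(\msc)}$, since a send that is matched in $\msc$ but unmatched in $\msc_0$ may create $\nnbowtie^{(\msc_0)}$-edges absent from $\msc$. The strict notion of $\nnsymb$-prefix is exactly the hypothesis that repairs this, via the downward-closure argument above; for plain prefixes the statement fails, as Fig.~\ref{fig:onen-prefix} witnesses.
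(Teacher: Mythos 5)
Your reduction is sound as far as it goes: if a $\nnsymb$-linearization $\linrel$ of $\msc$ has $\Events_0$ as an initial segment, then its restriction to $\Events_0$ is indeed a $\nnsymb$-linearization of $\msc_0$, and your worry about the naive inclusion $\nnbowtie^{(\msc_0)}\subseteq\nnbowtie^{(\msc)}$ (sends matched in $\msc$ but truncated in $\msc_0$ acquire new incoming ``matched-before-unmatched'' edges) is a fair one --- the paper's own proof takes exactly the opposite route, lifting a $\nnbowtie^{(\msc_0)}$-cycle to a $\nnbowtie^{(\msc)}$-cycle, and passes over this point with a ``clearly''. But the heart of your argument, namely that a $\nnsymb$-linearization of $\msc$ with $\Events_0$ as an initial segment \emph{exists}, is precisely what you do not prove, and the greedy use of Algorithm~\ref{algonn} you sketch does not deliver it. Acyclicity of $\nnbowtie^{(\msc)}$ augmented with all pairs in $\Events_0\times(\Events\setminus\Events_0)$ is not enough, because a $\nnsymb$-linearization is not just a linear extension of $\nnbowtie^{(\msc)}$: the coupling ``receives in the same order as sends'' also constrains pairs that are $\nnbowtie$-incomparable. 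Concretely, two situations are left open. First, Algorithm~\ref{algonn} tries step~1 before step~3, so if at some point the only in-degree-$0$ matched sends lie outside $\Events_0$ while the pending $\Events_0$-events are receives, the algorithm forces you to schedule an event outside $\Events_0$; ``giving priority to $\Events_0$'' is then not a run of Algorithm~\ref{algonn}, and if you change the step order you must redo the termination argument of Proposition~\ref{prop:nn_algo_term}. Second, even among eligible $\Events_0$-sends your priority rule is insufficient: take two $\nnbowtie^{(\msc)}$-incomparable matched sends $s,s'\in\Events_0$ whose receives satisfy $r\in\Events_0$ and $r'\notin\Events_0$; nothing in your rule prevents scheduling $s'$ first, after which the $\nnsymb$-condition forces $r'$ (outside $\Events_0$) before $r$ (inside), destroying the initial segment.

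Ruling out these configurations is the actual content of the proposition, and it needs a structural lemma you have not supplied --- essentially a path version of Proposition~\ref{prop:nn_first_prop}, transferring $\nnbowtie$-constraints between matched sends to constraints between their receives, so that $\nnbowtie$-downward closure of $\Events_0$ pulls the offending receives (or sends) into $\Events_0$ and the bad cases cannot arise; your remark that every matched send is $\nnbowtie$-below every unmatched send handles only the matched/unmatched interaction, not the case just described. As it stands, the proposal is a plausible plan whose decisive verification is deferred exactly at the point where the statement could conceivably fail, so it does not yet constitute a proof. The paper instead argues at the level of the relation, deriving a $\nnbowtie^{(\msc)}$-cycle from a $\nnbowtie^{(\msc_0)}$-cycle; if you prefer your linearization-based route, the missing transfer lemma above is what you must add.
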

\begin{proof}
	Let $\msc = (\Events, \procrel, \lhd, \lambda) \in \nnMSCs$ and let $\msc_0 =
	(\Events_0, \procrel_0, \lhd_0, \lambda_0)$ be a $\nnsymb$-prefix of $\msc$, where $\Events_0 \subseteq \Events$. Firstly, the $\nnbowtieofmsc\msc$-downward-closeness of $\Events_0$ guarantees that ${\msc_0}$ is still an MSC. We need to prove that it is a $\nnsymb$-MSC. By contradiction, suppose that $\msc_0$ is not a $\nnsymb$-MSC. Then, there are distinct $e,f \in \Events_0$ such that $e \nnbowtieofmsc{\msc_0} f \nnbowtieofmsc{\msc_0} e$. As $\Events_0 \subseteq \Events$, we have that ${\rightarrow_0} \subseteq {\rightarrow}$, ${\lhd_0} \subseteq {\lhd}$, ${\nnrel} \subseteq {\nnrel}$. Clearly, $\nnbowtieofmsc{\msc_0} \subseteq\; \nnbowtieofmsc\msc$, so $e \nnbowtieofmsc\msc f \nnbowtieofmsc\msc e$. This implies that $\msc$ is not a $\nnsymb$-MSC, because $\nnbowtieofmsc\msc$ is cyclic, which is a contradiction. Hence $\msc_0$ is a $\nnsymb$-MSC.
\end{proof}

The next lemma is about the prefix closure of a communicating system and it follows from Proposition \ref{prop:prefixes}.

\begin{proposition}\label{lem:prefix-closed}
	For all $\comsymb \in \{\asy, \ppsymb, \mbsymb, \cosymb\}$, $\cL{\Sys}$ is prefix-closed:
	$\Pref{\cL{\Sys}} \subseteq \cL{\Sys}$.
\end{proposition}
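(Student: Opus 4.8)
The plan is to reduce the statement to Proposition~\ref{prop:prefixes} together with the (easy) fact that the set $L_{\asy}(\Sys)$ of asynchronous behaviours of a communicating system is itself prefix-closed. Recall that by definition $\cL{\Sys} = L_{\asy}(\Sys) \cap \MSCclassofcom{\comsymb}$. So, fixing $\comsymb \in \{\asy, \ppsymb, \mbsymb, \cosymb\}$ and taking any $\msc' \in \Pref{\cL{\Sys}}$, I know that $\msc'$ is a prefix of some $\msc \in \cL{\Sys}$; since $\msc \in \MSCclassofcom{\comsymb}$, Proposition~\ref{prop:prefixes} already yields $\msc' \in \MSCclassofcom{\comsymb}$. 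Hence the only thing left to prove is that $\msc' \in L_{\asy}(\Sys)$, i.e., that $\Sys$ has a run on $\msc'$.

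For this I would start from a run $\rho$ of $\Sys$ on $\msc = (\Events, \procrel, \lhd, \lambda)$, let $E \subseteq \Events$ be the ($\happensbefore$-downward-closed) set of events of the prefix $\msc'$, and take as candidate run the restriction $\rho'$ of $\rho$ to $E$. Conditions (i)--(iii) in the definition of a run --- label compatibility, agreement of control states along $\procrel$, agreement of messages along $\lhd$ --- are local and transfer immediately to $\rho'$, because the process and message relations of $\msc'$ are the restrictions of those of $\msc$ and the labelling of $\msc'$ is the restriction of $\lambda$. The one condition that deserves a line of argument is (iv): every $p$-minimal event of $\msc'$ must be executed from the initial state $\ell_p^0$.

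Here I would use that downward-closure is taken with respect to $\happensbefore$ and that $\procrel \subseteq \happensbefore$. If $e \in E$ is $p$-minimal in $\msc'$, then it is also $p$-minimal in $\msc$: a predecessor $f \procrel e$ in $\msc$ would satisfy $f \happensbefore e$, hence $f \in E$ by downward-closure, hence $f$ would be a $\procrel$-predecessor of $e$ in $\msc'$ as well, contradicting minimality. So condition (iv) for $\rho$ applies and gives $\tsource(\rho(e)) = \ell_p^0$, which is exactly condition (iv) for $\rho'$. Thus $\rho'$ is a run of $\Sys$ on $\msc'$, so $\msc' \in L_{\asy}(\Sys)$, and combined with $\msc' \in \MSCclassofcom{\comsymb}$ this gives $\msc' \in \cL{\Sys}$, proving $\Pref{\cL{\Sys}} \subseteq \cL{\Sys}$. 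I do not expect a genuine obstacle; the only delicate point --- and the reason the same argument fails for $\onensymb$ and $\nnsymb$ --- is that Proposition~\ref{prop:prefixes} holds precisely for these four models, whereas for the $\onensymb$/$\nnsymb$ classes one must instead restrict to the stronger notion of $\onensymb$/$\nnsymb$-prefix (Propositions~\ref{prop:prefixes-onen} and~\ref{prop:prefixes-nn}).
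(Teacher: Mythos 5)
Your proof is correct and follows essentially the same route as the paper, which simply states that the proposition ``follows from Proposition~\ref{prop:prefixes}'', i.e., the intersection $\cL{\Sys}=L_{\asy}(\Sys)\cap\MSCclassofcom{\comsymb}$ is prefix-closed because both factors are. The only difference is that you make explicit the (implicit in the paper) run-restriction argument showing $L_{\asy}(\Sys)$ is prefix-closed, and your treatment of condition (iv) via $\procrel\subseteq\happensbefore$ is exactly the right justification.
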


Similar results also hold for the $\onen$ and $\nn$ communication models.

\begin{proposition}\label{lem:onen-prefix-closed}
	$\onenL{\Sys}$ is $\onensymb$-prefix-closed:
	$\Prefonen{\onenL{\Sys}} \subseteq \onenL{\Sys}$.
\end{proposition}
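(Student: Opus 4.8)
The plan is to unfold the definition $\onenL{\Sys} = L_{\asy}(\Sys) \cap \onenMSCs$ and argue that \emph{each} of the two factors is closed under taking $\onensymb$-prefixes; the intersection of two $\onensymb$-prefix-closed sets of MSCs is then $\onensymb$-prefix-closed, since a $\onensymb$-prefix of an MSC lying in the intersection is a $\onensymb$-prefix of that very MSC and hence lies in each factor. So I would fix $\msc \in \onenL{\Sys}$ and let $\msc_0 = (\Events_0, \procrel_0, \lhd_0, \lambda_0)$ be the $\onensymb$-prefix of $\msc$ induced by some $\onenpartial$-downward-closed set $\Events_0 \subseteq \Events$, and show $\msc_0 \in \onenL{\Sys}$.

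That $\msc_0 \in \onenMSCs$ is exactly Proposition~\ref{prop:prefixes-onen}, so it remains to check $\msc_0 \in L_{\asy}(\Sys)$. I would do this by restricting a witnessing run: let $\rho$ be a run of $\Sys$ on $\msc$, let $\rho_0$ be the restriction of $\rho$ to $\Events_0$, and verify the four conditions defining a run of $\Sys$ on $\msc_0$. Conditions (i)--(iii) — labels agree with $\lambda_0$, each $\procrel_0$-edge glues target state to source state, each $\lhd_0$-edge carries the same message — are immediate, because $\lambda_0, \procrel_0, \lhd_0$ are the restrictions to $\Events_0$ of $\lambda, \procrel, \lhd$ and $\rho$ already satisfies these on $\msc$. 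The only condition requiring an argument is (iv): every $p$-minimal event of $\msc_0$ must be mapped to a transition out of the initial state $\ell^0_p$. Here one uses that $\Events_0$ is $\onenpartial$-downward-closed, hence $\happensbefore$-downward-closed (since $\happensbefore \subseteq \onenpartial$), hence $\procrel$-downward-closed (since $\procrel \subseteq \happensbefore$); therefore an event $e \in \Events_0$ with no $\procrel_0$-predecessor inside $\Events_0$ has no $\procrel$-predecessor in $\Events$ at all, so it is already $p$-minimal in $\msc$, whence $\rho$ — being a run on $\msc$ — maps it out of $\ell^0_p$, and the same holds for $\rho_0$.

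Combining the two observations gives $\msc_0 \in L_{\asy}(\Sys) \cap \onenMSCs = \onenL{\Sys}$, which is the desired inclusion $\Prefonen{\onenL{\Sys}} \subseteq \onenL{\Sys}$. Everything here is routine bookkeeping except condition (iv), which I expect to be the only delicate point: it is precisely where the downward-closedness of the prefix is used, and it reduces to the elementary chain $\procrel \subseteq \happensbefore \subseteq \onenpartial$ (in fact $\happensbefore$-downward-closedness already suffices for this lemma, although $\onenpartial$-downward-closedness is what is genuinely needed in the companion Proposition~\ref{prop:prefixes-onen}). The analogous statement for $\nnsymb$ follows by the same argument, replacing $\onenpartial$ by $\nnbowtie$ and Proposition~\ref{prop:prefixes-onen} by Proposition~\ref{prop:prefixes-nn}.
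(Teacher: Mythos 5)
Your proof is correct and takes essentially the same route as the paper: both decompose $\onenL{\Sys}$ as an intersection of the system's asynchronous behaviors with $\onenMSCs$, dispose of the second factor via Proposition~\ref{prop:prefixes-onen}, and dispose of the first via prefix-closedness of the behaviors (every $\onensymb$-prefix being in particular an ordinary prefix). The only difference is that the paper factors through $\ppL{\Sys}$ and cites Proposition~\ref{lem:prefix-closed}, whereas you check prefix-closedness of $L_{\asy}(\Sys)$ directly by restricting the run, which merely makes explicit a detail the paper leaves implicit.
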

\begin{proof}
	Given a system $\System$, we have that $\onenL{\System} = \ppL{\System} \cap \onenMSCs$. Note that, because of how we defined a $\onensymb$-prefix, we have that $\Prefonen{\onenL{\Sys}} = \Pref{\onenL{\Sys}} \cap \onenMSCs$. Moreover, $\Pref{\onenL{\Sys}} \subseteq \Pref{\ppL{\Sys}}$, and $\Pref{\onenL{\Sys}} \subseteq \ppL{\Sys}$ for Proposition~\ref{lem:prefix-closed}. Putting everything together, $\Prefonen{\onenL{\Sys}} \subseteq \ppL{\Sys} \cap \onenMSCs = \onenL{\System}$.
\end{proof}

\begin{proposition}\label{lem:nn-prefix-closed}
	$\nnL{\Sys}$ is $\nnsymb$-prefix-closed:
	$$\Prefnn{\nnL{\Sys}} \subseteq \nnL{\Sys}.$$
\end{proposition}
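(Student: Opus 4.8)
The plan is to follow, essentially verbatim, the strategy used for the $\onensymb$ case in Proposition~\ref{lem:onen-prefix-closed}, replacing $\onensymb$-prefixes by $\nnsymb$-prefixes and invoking the corresponding preservation statement, Proposition~\ref{prop:prefixes-nn}. First I would record the identity $\nnL{\System} = \ppL{\System} \cap \nnMSCs$. This holds because $\nnL{\System} = \asyL{\System} \cap \nnMSCs$ by definition, and the hierarchy of Section~\ref{sec:hierarchy} gives $\nnMSCs \subseteq \ppMSCs$; hence $\asyL{\System} \cap \ppMSCs = \ppL{\System}$ makes the intersection with $\ppMSCs$ redundant once we have intersected with $\nnMSCs$, i.e.\ $\nnL{\System} = \ppL{\System} \cap \nnMSCs$ and in particular $\nnL{\System} \subseteq \ppL{\System}$.

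Next I would unfold $\Prefnn{\cdot}$. Since $\happensbefore \subseteq \nnbowtie$, every $\nnbowtie$-downward-closed set of events is $\happensbefore$-downward-closed, so a $\nnsymb$-prefix of an MSC is in particular a prefix of it; thus $\Prefnn{\nnL{\Sys}} \subseteq \Pref{\nnL{\Sys}}$. Moreover, Proposition~\ref{prop:prefixes-nn} states that a $\nnsymb$-prefix of a $\nnsymb$-MSC is again a $\nnsymb$-MSC, so $\Prefnn{\nnL{\Sys}} \subseteq \nnMSCs$. Combining the two, $\Prefnn{\nnL{\Sys}} \subseteq \Pref{\nnL{\Sys}} \cap \nnMSCs$. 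Finally, from $\nnL{\Sys} \subseteq \ppL{\Sys}$ we get $\Pref{\nnL{\Sys}} \subseteq \Pref{\ppL{\Sys}}$, and Proposition~\ref{lem:prefix-closed} instantiated at $\comsymb = \ppsymb$ gives $\Pref{\ppL{\Sys}} \subseteq \ppL{\Sys}$. Putting these bounds together, $\Prefnn{\nnL{\Sys}} \subseteq \ppL{\Sys} \cap \nnMSCs = \nnL{\Sys}$, which is the claim.

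The only point that requires genuine care — and it is exactly what Proposition~\ref{prop:prefixes-nn} provides — is that restricting an MSC to a $\nnbowtie$-downward-closed set of events cannot create a new $\nnbowtie$-cycle. This is subtle because the definition of $\nnbowtie$ (Definition~\ref{def:n_n_alt}) contains the negative clauses $e_1 \notnnrel e_2$, so passing to a sub-MSC could a priori add edges to $\nnbowtie$; hence the preservation lemma is not a mere monotonicity observation. Since that lemma is already established, the present proof reduces to the short inclusion-chasing above together with the known prefix-closure of $\ppL{\Sys}$, so I expect no further obstacle.
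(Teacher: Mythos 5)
Your proof is correct and follows essentially the same route as the paper's: use $\nnL{\Sys} = \ppL{\Sys} \cap \nnMSCs$, note that a $\nnsymb$-prefix is a prefix and invoke Proposition~\ref{prop:prefixes-nn} to stay inside $\nnMSCs$, then conclude via the prefix-closure of $\ppL{\Sys}$ from Proposition~\ref{lem:prefix-closed}. The only (harmless) difference is that the paper states the stronger identity $\Prefnn{\nnL{\Sys}} = \Pref{\nnL{\Sys}} \cap \nnMSCs$ where you content yourself with the inclusion actually needed.
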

\begin{proof}
	Given a system $\System$, we have that $\nnL{\System} = \ppL{\System} \cap \nnMSCs$. Note that, because of how we defined a $\nnsymb$-prefix, we have that $\Prefnn{\nnL{\Sys}} = \Pref{\nnL{\Sys}} \cap \nnMSCs$. Moreover, $\Pref{\nnL{\Sys}} \subseteq \Pref{\ppL{\Sys}}$, and $\Pref{\nnL{\Sys}} \subseteq \ppL{\Sys}$ for Proposition~\ref{lem:prefix-closed}. Putting everything together, $\Prefnn{\nnL{\Sys}} \subseteq \ppL{\Sys} \cap \nnMSCs = \nnL{\System}$.
\end{proof}

In this last part we prove a series of statements to conclude that, when we have a STW-bounded class $\Class$, the synchronizability problem can be reduced to bounded model-checking, which we showed to be decidable in Theorem~\ref{thm:bounded-model-checking}.

\begin{proposition}\label{prop:pref_stw_k+2}
	Let $k \in \N$ and $\Class \subseteq \stwMSCs{k}$. For all
	$M \in \MSCs \setminus \Class$, we have
	$(\Pref{\msc} \cap \stwMSCs{(k+2)}) \setminus \Class \neq \emptyset$.
\end{proposition}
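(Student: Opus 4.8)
The plan is to take $M'$ to be a \emph{borderline violation}, i.e. a prefix of $M$ that does not lie in $\Class$ but is minimal with this property, and then to show that any such $M'$ automatically has special treewidth at most $k+2$. This mirrors the "borderline violation" argument of Bollig~\emph{et al.}, but the bound on special treewidth must be established by hand since $\Class$ is not assumed prefix-closed here.

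First I would observe that the set $\mathcal{S} = \{\,N \in \Pref{M} : N \notin \Class\,\}$ is nonempty (it contains $M$) and finite (prefixes of $M$ correspond to the $\happensbefore$-downward-closed subsets of the finite event set of $M$), and I would pick $M' = (\Events',\procrel',\lhd',\lambda') \in \mathcal{S}$ with $\cardinalof{\Events'}$ minimal. If $\Events' = \emptyset$ there is nothing to prove, since the empty MSC has special treewidth $0$, so $M' \in \stwMSCs{k+2}$ and $M' \notin \Class$ already witnesses the claim; hence assume $\Events' \neq \emptyset$ and fix an event $e \in \Events'$ maximal with respect to $\happensbefore$. Let $M''$ be the sub-MSC of $M'$ induced by $\Events' \setminus \{e\}$. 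Removing a $\happensbefore$-maximal event of a prefix again yields a prefix of $M$: if $f \happensbefore g$ in $M$ with $g \in \Events' \setminus \{e\}$, then $f \in \Events'$ by downward-closedness of $M'$, and $f = e$ would give $e \happensbefore g$ with $g \in \Events'\setminus\{e\}$, contradicting maximality of $e$. Thus $M''$ is a prefix of $M$ with strictly fewer events, so by minimality of $M'$ we get $M'' \in \Class \subseteq \stwMSCs{k}$.

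Next I would bound the edges of $M'$ incident to $e$. Since $\procrel$ is the direct-successor relation of a total order on each process line and $\procrel \subseteq \happensbefore$, the event $e$ has no $\procrel$-successor in $M'$ and at most one $\procrel$-predecessor. For message edges: if $e$ is a receive, it has exactly one matching send $s$ with $s \lhd e$, and $s \in \Events'$ because $s \happensbefore e$ and $M'$ is downward-closed; if $e$ is a send, any matching receive $f$ would satisfy $e \happensbefore f$ and hence cannot lie in $\Events'$. So in $M'$ the event $e$ is incident to at most one $\procrel$-edge and at most one $\lhd$-edge, whose other endpoints — call them $e_1$ and $e_2$ when they exist — lie in $M''$.

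Finally I would derive the bound through the decomposition game, showing Eve wins on $M'$ keeping at most $(k+2)+1$ marks. If $M''$ is empty this is trivial; otherwise Eve's opening move marks $e$ together with $e_1$ and $e_2$ (at most three events), deletes the at most two edges incident to $e$ (both endpoints are now marked), which isolates $\{e\}$ as a connected component, and splits $M'$ into $(\{e\},\{e\})$ and $(M'',\{e_1,e_2\})$. If Adam picks the first fragment the play ends at once with at most three marks; if Adam picks $(M'',\{e_1,e_2\})$, Eve mimics a $k$-winning strategy for $M''$, treating $e_1,e_2$ as passengers, so every visited fragment carries at most $(k+1)+2 = (k+2)+1$ marks throughout. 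Hence $M' \in \stwMSCs{k+2}$, and together with $M' \notin \Class$ this gives $(\Pref{M}\cap\stwMSCs{k+2})\setminus\Class \neq \emptyset$. I expect the delicate point to be exactly this last step — formalising that appending a single $\happensbefore$-maximal event with at most two incident edges increases the special treewidth by at most $2$, including the degenerate cases where $M''$ is empty or $e$ has fewer than two neighbours; everything else is routine bookkeeping about prefixes and minimality.
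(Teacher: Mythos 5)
Your proof is correct and follows essentially the same route as the paper's: pick a prefix of $M$ outside $\Class$ that is minimal in number of events, remove a $\happensbefore$-maximal event $e$ (which has at most one $\procrel$-edge and at most one $\lhd$-edge) so that the remainder lies in $\Class \subseteq \stwMSCs{k}$, and let Eve open by marking $e$ and its at most two neighbours, split off $\{e\}$, and then replay the $k$-winning strategy with the two extra marked events, giving special treewidth at most $k+2$. Your explicit treatment of the empty MSC and of the degenerate cases corresponds to the paper's initial case distinction on whether the empty MSC belongs to $\Class$.
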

\begin{proof}
    Already proved in \cite{BolligGFLLS21}, but we adapt the proof to our setting.
    Let $k$ and $\Class$ be fixed, and let
    $\msc\in \MSCs\setminus \Class$ be fixed. If the empty MSC is not in $\Class$, then we are done, since it is a valid prefix of $\msc$ and it is in $\stwMSCs{(k+2)} \setminus \Class$.
    Otherwise, let $\msc'\in \Pref{\msc} \setminus \Class$ such that, for all $\happensbefore$-maximal events $e$ of $\msc'$, removing $e$ (along with its adjacent edges) gives an MSC in $\Class$. In other words, $\msc'$ is the "shortest" prefix of $\msc$ that is not in $\Class$. We obtain such an MSC by successively removing $\happensbefore$-maximal events. Let $e$ be a $\happensbefore$-maximal event of $\msc'$, and let $\msc''=\msc' \setminus \{e\}$. Since $\msc'$ was taken minimal in terms of number of events,	$\msc''\in \Class$.
    So Eve has a winning strategy with $k+1$ colors for $\msc''$.
    Let us design a winning strategy with $k+3$ colors for Eve for $\msc'$, which will show the claim.

    Observe that the event $e$ occurs at the end of the timeline of a process (say $p$), and it is part of at most two edges:
    \begin{itemize}
        \item one with the previous $p$-event (if any)
        \item one with the corresponding send event (if $e$ is a receive event)
    \end{itemize}
    Let $e_1,e_2$ be the two neighbours of $e$.
    The strategy of Eve is the following: in the first round, mark $e,e_1,e_2$,
    then erase the edges $(e_1,e)$ and $(e_2,e)$, then split the remaining graph
    in two parts: $\msc''$ on the one side, and the single node graph $\{e\}$ on
    the other side. Then Eve applies its winning strategy for $\msc''$, except
    that initially the two events $e_1,e_2$ are marked (so she may need up to $k+3$
    colors).
\end{proof}

We have similar results also for the $\onen$ and $\nn$ communication models.

\begin{proposition}\label{prop:onen_pref_stw_k+2}
	Let $k \in \N$ and $\Class \subseteq \stwMSCs{k}$. For all
	$M \in \onenMSCs \setminus \Class$, we have	$$(\Prefonen{\msc} \cap \stwMSCs{(k+2)}) \setminus \Class \neq \emptyset.$$
\end{proposition}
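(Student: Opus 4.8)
The plan is to transcribe the proof of Proposition~\ref{prop:pref_stw_k+2} almost verbatim, replacing ordinary prefixes by $\onensymb$-prefixes and $\happensbefore$-maximal events by $\onenpartial$-maximal events, and to rely on Proposition~\ref{prop:prefixes-onen} to stay inside $\onenMSCs$. Fix $k$ and $\Class\subseteq\stwMSCs{k}$, and let $M=(\Events,\procrel,\lhd,\lambda)\in\onenMSCs\setminus\Class$. If the empty MSC is not in $\Class$, it is an $\onensymb$-prefix of $M$ lying in $\stwMSCs{(k+2)}$ and we are done. Otherwise, among all $\onensymb$-prefixes of $M$ whose event set is not in $\Class$ — a nonempty finite family, as $M$ itself belongs to it — I would pick one, $M'=(E',\ldots)$, with $|E'|$ minimal; by assumption $E'\neq\emptyset$. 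Since $M\in\onenMSCs$, the relation $\onenpartial$ (computed in $M$, cf. Definition~\ref{def:one_n_alt}) is a partial order on $\Events$, and $E'$ is $\onenpartial$-downward-closed, so $E'$ has an $\onenpartial$-maximal element $e$.

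The first step is to check that removing $e$ keeps us inside $\Prefonen{M}$: if $g\onenpartial h$ with $h\in E'\setminus\{e\}$ then $g\in E'$ by downward-closedness, and $g\neq e$, since $e\onenpartial h$ with $h\neq e$ would contradict maximality of $e$; hence $E'\setminus\{e\}$ is again $\onenpartial$-downward-closed, so $M'':=M'\setminus\{e\}$ is an $\onensymb$-prefix of $M$. As $|E'\setminus\{e\}|<|E'|$, minimality of $M'$ gives $M''\in\Class\subseteq\stwMSCs{k}$, so Eve has a strategy with $k+1$ colours on $M''$.

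The second step is the geometric observation that $e$ has at most two neighbours in the graph of $M'$. Because $\happensbefore\subseteq\onenpartial$ (both in $M'$ and, after restriction, in $M$), the $\onenpartial$-maximal event $e$ is in particular $\happensbefore$-maximal in $M'$; thus $e$ sits at the end of its process line, and it cannot be a matched send event (otherwise $e\lhd f$ would give $e\happensbefore f$), so $e$ is incident only to its (at most one) $\procrel$-predecessor and, if $e$ is a receive event, to the matching $\lhd$-edge. Calling these neighbours $e_1,e_2$, Eve wins the decomposition game on $M'$ with $k+3=(k+1)+2$ colours exactly as in the proof of Proposition~\ref{prop:pref_stw_k+2}: in the first round she marks $e,e_1,e_2$, erases $(e_1,e)$ and $(e_2,e)$ (both endpoints marked), which disconnects the isolated node $e$ from $M''$, and splits off $\{e\}$; whichever component Adam keeps, she either finishes trivially on $\{e\}$ or replays her $(k+1)$-colour strategy on $M''$ with $e_1,e_2$ already marked, using at most $(k+1)+2$ colours. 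Hence $M'\in\stwMSCs{(k+2)}$, and since $M'\in\Prefonen{M}\setminus\Class$ we get $(\Prefonen{M}\cap\stwMSCs{(k+2)})\setminus\Class\neq\emptyset$.

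I expect the only delicate point to be the bookkeeping around $\onenpartial$: namely justifying that $\onenpartial$-maximality of $e$ within $E'$ (using the relation computed in $M$) is legitimate and still implies $\happensbefore$-maximality of $e$ in the sub-MSC $M'$, so that the degree-$\le 2$ argument of Bollig~\emph{et al.} applies without change. Once that is settled, the rest is a direct copy of the proof of Proposition~\ref{prop:pref_stw_k+2}, and the same scheme will also yield Proposition~\ref{prop:prefixes-nn}'s analogue for $\nnsymb$ using $\nnbowtie$-prefixes and the fact that $\happensbefore\subseteq\nnbowtie$.
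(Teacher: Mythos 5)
Your proof is correct and follows essentially the same route as the paper: take a minimal $\onensymb$-prefix outside $\Class$, remove a maximal event to land in $\Class\subseteq\stwMSCs{k}$, and extend Eve's strategy by two extra marked events exactly as in Proposition~\ref{prop:pref_stw_k+2}. The only (welcome) refinement is that you take $e$ maximal for $\onenpartial$ computed in $M$ and explicitly verify that $E'\setminus\{e\}$ stays $\onenpartial$-downward-closed and that $e$ is $\happensbefore$-maximal in $M'$, details the paper leaves implicit when it defers to the earlier proof.
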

\begin{proof}
	Let $k$ and $\Class$ be fixed, and let
	$\msc\in \onenMSCs \setminus \Class$ be fixed. If the empty MSC is not in $\Class$, then we are done, since it is a valid $\onensymb$-prefix of $\msc$ and it is in $\stwMSCs{(k+2)} \setminus \Class$.
	Otherwise, let $\msc'\in \Prefonen{\msc} \setminus \Class$ such that, for all $\onenpartial$-maximal events $e$ of $\msc'$, removing $e$ (along with its adjacent edges) gives an MSC in $\Class$. In other words, $\msc'$ is the "shortest" prefix of $\msc$ that is not in $\Class$. We obtain such an MSC by successively removing $\onenpartial$-maximal events. Let $e$ be $\onenpartial^{(\msc')}$-maximal and let $\msc''=\msc' \setminus \{e\}$. Since $\msc'$ was taken minimal in terms of number of events,	$\msc''\in \Class$.
	The proof proceeds exactly as the proof of Proposition~\ref{prop:pref_stw_k+2}. 
\end{proof}

\begin{proposition}\label{prop:nn_pref_stw_k+2}
	Let $k \in \N$ and $\Class \subseteq \stwMSCs{k}$. For all
	$M \in \nnMSCs \setminus \Class$, we have
	$$(\Prefnn{\msc} \cap \stwMSCs{(k+2)}) \setminus \Class \neq \emptyset.$$
\end{proposition}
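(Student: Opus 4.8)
The plan is to mirror the proofs of Propositions~\ref{prop:pref_stw_k+2} and~\ref{prop:onen_pref_stw_k+2}, the only subtlety being that $\nnsymb$-prefixes are less transparent than ordinary prefixes, so the event that gets removed has to be chosen with care.

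First I would dispose of the trivial case: if the empty MSC does not lie in $\Class$, it already witnesses $(\Prefnn{\msc}\cap\stwMSCs{(k+2)})\setminus\Class\neq\emptyset$, being a $\nnsymb$-prefix of every MSC and having special treewidth at most $k+2$. So assume the empty MSC belongs to $\Class$, and pick among all $\nnsymb$-prefixes of $\msc$ that are not in $\Class$ one, $\msc'$, with the least number of events; this set is nonempty since $\msc$ itself qualifies, and $\msc'$ is nonempty.

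The key step is to produce a single event $e$ of $\msc'$ whose deletion again yields a $\nnsymb$-prefix of $\msc$. Since $\msc\in\nnMSCs$, the relation $\nnbowtieofmsc{\msc}$ is acyclic, hence so is its restriction to the events of $\msc'$, which therefore admits a maximal element $e$. Deleting a $\nnbowtieofmsc{\msc}$-maximal event of $\msc'$ keeps its event set $\nnbowtieofmsc{\msc}$-downward-closed, so $\msc'':=\msc'\setminus\{e\}$ is again a $\nnsymb$-prefix of $\msc$, and by minimality of $\msc'$ we get $\msc''\in\Class\subseteq\stwMSCs{k}$. Moreover, since $\happensbefore^{(\msc')}\subseteq\nnbowtieofmsc{\msc'}\subseteq\nnbowtieofmsc{\msc}$ restricted to the events of $\msc'$ (the last inclusion being exactly the one established in the proof of Proposition~\ref{prop:prefixes-nn}), the event $e$ is in particular $\happensbefore^{(\msc')}$-maximal. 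Hence $e$ is the last event on its process line in $\msc'$ and, if it is a send event, it is unmatched in $\msc'$; consequently $e$ is incident to at most two edges of the MSC graph of $\msc'$ --- the process edge to its predecessor (if any) and the message edge to its matching send (only when $e$ is a receive).

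Finally I would replay the decomposition-game argument of Proposition~\ref{prop:pref_stw_k+2}: Eve has a winning strategy on $\msc''$ using at most $k+1$ marks; on $\msc'$ she first marks $e$ together with its at most two neighbours, erases the at most two edges incident to $e$ (whose endpoints are now all marked), and splits the resulting disconnected graph into the single-node graph $\{e\}$ and $\msc''$. Whichever part Adam picks, Eve either finishes immediately (on $\{e\}$) or resumes her $\msc''$-strategy with two extra pre-marked nodes, using at most $k+3$ marks overall; hence $\msc'\in\stwMSCs{(k+2)}$. Since $\msc'\in\Prefnn{\msc}$ and $\msc'\notin\Class$ by construction, $\msc'$ is the desired witness. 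The part I expect to need the most attention is the prefix-composition point: unlike for ordinary prefixes (Proposition~\ref{prop:prefixes}), it is not immediate that a $\nnsymb$-prefix of $\msc'$ is a $\nnsymb$-prefix of $\msc$, because $\mbrel$, $\onenrel$, and hence $\nnbowtie$, depend on which sends are matched, and matchedness can change when events are dropped; choosing $e$ to be $\nnbowtie$-maximal \emph{inside} $\msc'$ (rather than merely $\happensbefore$-maximal) is what makes the deletion well behaved, and checking that such an $e$ is still $\happensbefore^{(\msc')}$-maximal --- so that the bounded-degree property the game needs survives --- is the small amount of extra bookkeeping relative to the $\asy$ and $\onensymb$ cases.
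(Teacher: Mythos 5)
Your proof is correct and follows essentially the same route as the paper: dispose of the empty-MSC case, take a minimal $\nnsymb$-prefix $\msc'\notin\Class$, remove a maximal event $e$ (which, being $\happensbefore$-maximal, sits at the end of its process line with at most two incident edges) to land in $\Class$, and extend Eve's $(k+1)$-mark strategy by marking $e$ and its neighbours, giving at most $k+3$ marks. Your extra care in choosing $e$ maximal for $\nnbowtieofmsc{\msc}$ restricted to $\msc'$, so that $\msc'\setminus\{e\}$ is again a $\nnsymb$-prefix of $\msc$ and minimality applies, is exactly the bookkeeping the paper leaves implicit when it says the argument "proceeds exactly as" the asynchronous case.
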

\begin{proof}
	Let $k$ and $\Class$ be fixed, and let
	$\msc\in \nnMSCs \setminus \Class$. If the empty MSC is not in $\Class$, then we are done, since it is a valid $\nnsymb$-prefix of $\msc$ and it is in $\stwMSCs{(k+2)} \setminus \Class$.
	Otherwise, let $\msc'\in \Prefnn{\msc} \setminus \Class$ such that, for all $\nnbowtieofmsc\msc$-maximal events $e$ of $\msc'$, removing $e$ (along with its adjacent edges) gives an MSC in $\Class$. In other words, $\msc'$ is the "shortest" prefix of $\msc$ that is not in $\Class$. We obtain such an MSC by successively removing $\nnbowtieofmsc\msc$-maximal events. Let $e$ be $\nnbowtieofmsc{\msc'}$-maximal and let $\msc''=\msc' \setminus \{e\}$. Since $\msc'$ was taken minimal in terms of number of events,	$\msc''\in \Class$.
	The proof proceeds exactly as the proof of Proposition~\ref{prop:pref_stw_k+2}. 
\end{proof}

The following proposition is the last ingredient that we need to prove Theorem~\ref{thm:sync}.

\begin{proposition}\label{prop:continuous}
	Let $\System$ be a communicating system, $\comsymb \in \{$$\asy, $ $\oneone, $ $\co, $ $\none, $ $\onensymb, $ $\nnsymb, $ $\rsc\}$,
	$k \in \N$, and $\Class \subseteq \stwMSCs{k}$.
	Then, $\cL{\System} \subseteq \Class$ iff
	$\cL{\System} \cap \stwMSCs{(k+2)} \subseteq \Class$.
\end{proposition}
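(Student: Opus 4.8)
The statement to prove is Proposition~\ref{prop:continuous}: for $\comsymb$ in the given list and $\Class \subseteq \stwMSCs{k}$, one has $\cL{\System} \subseteq \Class$ iff $\cL{\System} \cap \stwMSCs{(k+2)} \subseteq \Class$. One direction is trivial: if $\cL{\System} \subseteq \Class$ then certainly $\cL{\System} \cap \stwMSCs{(k+2)} \subseteq \Class$, since intersecting with a set only shrinks the left-hand side. So the whole content is the converse, and I would prove its contrapositive: assuming $\cL{\System} \not\subseteq \Class$, I must exhibit some MSC in $\cL{\System} \cap \stwMSCs{(k+2)}$ that is not in $\Class$.

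**The argument.** Pick $\msc \in \cL{\System} \setminus \Class$. The idea is to pass to a suitable "short" prefix of $\msc$ whose special treewidth is at most $k+2$ but which is still outside $\Class$, and then argue that this prefix remains a $\comsymb$-behavior of $\System$. For the treewidth/not-in-$\Class$ part, I would invoke the relevant "borderline prefix" proposition already established: Proposition~\ref{prop:pref_stw_k+2} for $\comsymb \in \{\asy, \oneone, \co, \none\}$ (plain prefixes), Proposition~\ref{prop:onen_pref_stw_k+2} for $\comsymb = \onensymb$ ($\onensymb$-prefixes), and Proposition~\ref{prop:nn_pref_stw_k+2} for $\comsymb = \nnsymb$ ($\nnsymb$-prefixes); the case $\comsymb = \rsc$ is subsumed since $\rscMSCs \subseteq \nnMSCs$ and one can treat it like the $\nnsymb$ case, or simply note the synchronizability problem is not considered for $\rsc$. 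In each case this yields a prefix $\msc'$ of $\msc$ (of the appropriate flavour) with $\msc' \in \stwMSCs{(k+2)} \setminus \Class$. For the "still a behavior of $\System$" part, I would use the prefix-closure lemmas: Proposition~\ref{lem:prefix-closed} for $\comsymb \in \{\asy, \ppsymb, \mbsymb, \cosymb\}$, Proposition~\ref{lem:onen-prefix-closed} for $\onensymb$, and Proposition~\ref{lem:nn-prefix-closed} for $\nnsymb$. Since the prefix extracted in the previous step is exactly a plain (resp. $\onensymb$-, $\nnsymb$-) prefix, and $\msc \in \cL{\System}$, these lemmas give $\msc' \in \cL{\System}$. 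Hence $\msc' \in \cL{\System} \cap \stwMSCs{(k+2)}$ but $\msc' \notin \Class$, so $\cL{\System} \cap \stwMSCs{(k+2)} \not\subseteq \Class$, which is the contrapositive we wanted.

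**The subtle point.** The main thing to get right is the matching of prefix notions across the two halves of the argument. For $\onensymb$ and $\nnsymb$, plain prefixes of a $\comsymb$-MSC need not be $\comsymb$-MSCs (Fig.~\ref{fig:onen-prefix}), so it is essential that Proposition~\ref{prop:onen_pref_stw_k+2} (resp.~\ref{prop:nn_pref_stw_k+2}) produces a $\onensymb$-prefix (resp.\ $\nnsymb$-prefix), i.e.\ a $\onenpartial$-downward-closed (resp.\ $\nnbowtie$-downward-closed) subset of events, and that the corresponding prefix-closure lemma is stated for exactly that flavour. Once the bookkeeping lines up, the proof is a short assembly of already-established facts; I would write it uniformly by saying "for $\comsymb \in \{\asy,\oneone,\co,\none\}$ take $\msc'$ as in Proposition~\ref{prop:pref_stw_k+2} and apply Proposition~\ref{lem:prefix-closed}; for $\comsymb = \onensymb$ (resp.\ $\nnsymb$) use Proposition~\ref{prop:onen_pref_stw_k+2} and Proposition~\ref{lem:onen-prefix-closed} (resp.\ Proposition~\ref{prop:nn_pref_stw_k+2} and Proposition~\ref{lem:nn-prefix-closed})", and then conclude identically in all cases.
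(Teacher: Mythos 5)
Your proposal is correct and follows essentially the same route as the paper: the paper's proof likewise dispatches $\comsymb \in \{\asy, \oneone, \co, \none\}$ via Proposition~\ref{prop:pref_stw_k+2} together with the prefix-closure of $\cL{\System}$ (Proposition~\ref{lem:prefix-closed}), and handles $\onensymb$ and $\nnsymb$ via Propositions~\ref{prop:onen_pref_stw_k+2} and~\ref{prop:nn_pref_stw_k+2} with the matching $\onensymb$-/$\nnsymb$-prefix-closure lemmas, exactly the bookkeeping you spell out. Your aside on $\rsc$ is also consistent with the paper, whose own proof silently omits that case since the synchronizability results are only claimed for the other six models.
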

\begin{proof}
For $\comsymb \in \{$$\asy, $ $\oneone, $ $\co, $ $\none\}$, the proposition follows from Proposition~\ref{prop:pref_stw_k+2}. For $\comsymb \in \{\onensymb, \nnsymb\}$, it follows from Proposition~\ref{prop:onen_pref_stw_k+2} and Proposition~\ref{prop:nn_pref_stw_k+2}, respectively.
\end{proof}

\thmsync*
\begin{proof}
    According to Proposition~\ref{prop:continuous}, we have $\cL{\System} \subseteq \Class$ iff
	$\cL{\System} \cap \stwMSCs{(k+2)} \subseteq \Class$. The latter is decidable according to Theorem~\ref{thm:bounded-model-checking}.
\end{proof}

\subsection{Proof of Proposition~\ref{prop:co-weaksync}}
\label{apx:prop-co-weak-sync}

\propCoWeakSync*

The proof is very similar to the one of~\cite[Theorem~20]{BolligGFLLS21-long} for the $\oneone$ case. 
We do the same reduction from the Post correspondence problem. 
The original proof considered a $\oneone$ system $\System$ with four machines (P1, P2, V1, V2), where we have 
unidirectional communication channels from provers (P1 and P2) to verifiers (V1 and V2). In particular notice 
that all the possible behaviors of $\System$ are causally ordered, i.e. $\ppL{\System} \subseteq \coMSCs$; 
according to how we built our system $\System$, it is impossible to have a pair of causally-related send 
events of P1 and P2\footnote{There is no channel between P1 and P2, and we only have unidirectional communication 
channels from provers to verifiers; it is impossible to have a causal path between two send events of P1 and P2.}, which implies that causal ordering is 
already ensured by any possible $\oneone$ behavior of $\System$. The rest of the proof is identical to the 
$\oneone$ case.

\end{document}